\documentclass[conference]{IEEEtran}
\usepackage{times}
\usepackage[nolist]{acronym} % acronyms by the \ac{label} command
\usepackage{multicol}
\usepackage{amssymb}
\usepackage{bm}
\usepackage{amsmath}
\usepackage{amsthm}
\usepackage{graphicx}
\usepackage{xcolor}
\usepackage{import}
\usepackage{mathtools}
\usepackage{macros}
\usepackage{svg}
\usepackage{soul}
\usepackage{makecell}
\usepackage{multirow}
\usepackage{array}
\usepackage{colortbl}
\usepackage{siunitx}
\usepackage{adjustbox}
\usepackage{adjustbox}
\usepackage{booktabs}
\usepackage{wrapfig}
\usepackage{url}
\usepackage{xfrac}
\usepackage[framemethod=tikz]{mdframed}

\newtheorem{proposition}{Proposition}
\newtheorem{theorem}{Theorem}
\newtheorem{corollary}{Corollary}
\newtheorem{remark}{Remark}

\newtheorem{conjecture}{Conjecture}

\definecolor{lightlightgray}{rgb}{0.92, 0.92, 0.92}

\usepackage[numbers]{natbib}
\usepackage{multicol}
\usepackage[bookmarks=true]{hyperref}

\begin{document}

% paper title
\title{Reduced-order Control and Geometric Structure\\ of Learned Lagrangian Latent Dynamics}

% You will get a Paper-ID when submitting a pdf file to the conference system
% \author{Author Names Omitted for Anonymous Review. Paper-ID 674}

% \author{\authorblockN{Michael Shell}
% \authorblockA{School of Electrical and\\Computer Engineering\\
% Georgia Institute of Technology\\
% Atlanta, Georgia 30332--0250\\
% Email: mshell@ece.gatech.edu}
% \and
% \authorblockN{Homer Simpson}
% \authorblockA{Twentieth Century Fox\\
% Springfield, USA\\
% Email: homer@thesimpsons.com}
% \and
% \authorblockN{James Kirk\\ and Montgomery Scott}
% \authorblockA{Starfleet Academy\\
% San Francisco, California 96678-2391\\
% Telephone: (800) 555--1212\\
% Fax: (888) 555--1212}}

% avoiding spaces at the end of the author lines is not a problem with
% conference papers because we don't use \thanks or \IEEEmembership

% for over three affiliations, or if they all won't fit within the width
% of the page, use this alternative format:
% 
\author{\authorblockN{Katharina Friedl\authorrefmark{1},
No\'{e}mie Jaquier\authorrefmark{1},
Seungyeon Kim\authorrefmark{1}, 
Jens Lundell\authorrefmark{2} and
Danica Kragic\authorrefmark{1}}
\authorblockA{\authorrefmark{1}Division of Robotics, Perception, and Learning, KTH Royal Institute of Technology}
\authorblockA{\authorrefmark{2}University of Turku \\ \texttt{\small \{kfriedl,jaquier,seukim,dani\}@kth.se}, \texttt{\small jens.lundell@utu.fi}}
% \authorblockA{\authorrefmark{3}Starfleet Academy, San Francisco, California 96678-2391\\
% Telephone: (800) 555--1212, Fax: (888) 555--1212}
% \authorblockA{\authorrefmark{4}Tyrell Inc., 123 Replicant Street, Los Angeles, California 90210--4321}
}

\maketitle
\newacro{ae}[AE]{Autoencoder}
\newacro{rom}[ROM]{reduced-order model}
\newacro{fom}[FOM]{full-order model}
\newacro{mor}[MOR]{model order reduction}
\newacro{lnn}[LNN]{Lagrangian neural network}
\newacro{hnn}[HNN]{Hamiltonian neural network}
\newacro{dof}[DoF]{degrees-of-freedom}
\newacro{ivp}[IVP]{initial value problem}
\newacro{dnn}[DNN]{deep neural network}
\newacro{spd}[SPD]{symmetric positive-definite}
\newacro{mlp}[MLP]{multilayer perceptron}
\newacro{fc}[FC]{fully-connected}
\newacro{gyrospd}[GyroSpd$_{\ty{++}}$]{gyrospace hyperplane-based}
\newacro{gyroai}[GyroAI]{gyrocalculus-based}
\newacro{rolnn}[RO-LNN]{reduced-order Lagrangian neural network}
\newacro{rohnn}[RO-HNN]{reduced-order Hamiltonian neural network}
\newacro{pde}[PDE]{partial differential equation}
\newacro{hnko}[HNKO]{Hamiltonian neural Koopman operator}
\newacro{mpc}[MPC]{model-predictive control}
\newacro{rl}[RL]{reinforcement learning}
\newacro{pd}[PD]{proportional derivative}
\newacro{iss}[ISS]{input-to-state stable}
\begin{abstract}
Model-based controllers can offer strong guarantees on stability and convergence by relying on physically accurate dynamic models. However, these are rarely available for high-dimensional mechanical systems such as deformable objects or soft robots.
While neural architectures can learn to approximate complex dynamics, they are either limited to low-dimensional systems or provide only limited formal control guarantees due to a lack of embedded physical structure.
This paper introduces a latent control framework based on learned structure-preserving reduced-order dynamics for high-dimensional Lagrangian systems.
We derive a reduced tracking law for fully actuated systems and adopt a Riemannian perspective on projection-based model-order reduction to study the resulting latent and projected closed-loop dynamics. 
By quantifying the sources of modeling error, we derive interpretable conditions for stability and convergence. 
We extend the proposed controller and analysis to underactuated systems by introducing learned actuation patterns. 
Experimental results on simulated and real-world systems validate our theoretical investigation and the accuracy of our controllers. 

\end{abstract}

\IEEEpeerreviewmaketitle

\section{Introduction}
\label{sec:introduction}
Controlling dynamical systems to precisely track reference trajectories with stability and convergence guarantees is a central challenge in robotics. From rigid manipulators to floating-base platforms, many systems commonly admit a Lagrangian or Hamiltonian formulation with nonlinear dynamics governed by energy conservation laws~\cite{BulloLewis,murray1994}. 
By leveraging this physical structure, controllers can achieve strong performance guarantees. 
Classic examples range from feedback linearization 
to passivity-based and energy-shaping frameworks~\cite{ortega02idpbca,takegaki81} that leverage the system's underlying structure to design stable controllers. 
Related approaches include PD+ and comparable feedforward strategies~\cite{kelly05,padenpanja88}, as well as adaptive laws~\cite{slotineli91}. These control schemes have been broadly extended to task-space control and redundancy resolution~\cite{khatib87,Khatib22:OSC}, underactuated systems~\cite{bloch2002cl,spong94partial}, and contact-rich tasks~\cite{hogan1984impedance, sentis2006wholebody}. 
Their practical success, however, highly depends on the availability and accuracy of the underlying dynamic model. 

The robotics community has recently seen a rising interest in complex dynamical systems, particularly deformable objects and soft manipulators.
While these systems still adhere to the well-studied modeling assumptions of classical mechanics, they are characterized by coupled dynamics evolving on intrinsically high-dimensional configuration spaces~\cite{santina23softrobots,yin2021modeling}. 
Modeling is further complicated by the need for spatial discretization and/or unknown material properties, limiting the availability of suitable models for nonlinear control. 
For these systems, relying on a known dynamic model is rarely feasible. 

To circumvent modeling, some learning-based approaches infer control laws directly from trajectory data~\cite{levine2016, levine2020offline}. However, such black-box controllers overlook the physical nature of the dynamics and cannot be systematically analyzed, leading to overfitting and task-specific strategies.
Alternatively, learned neural dynamics predictors, e.g. coupled with \ac{mpc}~\cite{bruder2019koopmanmpc,zhang2024particle}, 
significantly improve performance and generalization. Yet, standard neural architectures remain difficult to interpret and formally verify.

Physics-inspired dynamic models tackle this issue by incorporating inductive biases arising from physical laws --- e.g., Lagrangian or Hamiltonian mechanics --- leading to improved accuracy and sample efficiency, as well as increased interpretability~\cite{CranmerGreydanus2020LNN,Greydanus2019HNN,Lutter2023DeLaN}. In turn, the learned physical dynamics are compatible with well-studied model-based controllers and their analysis. 
However, as these models scale poorly with increasing configuration space dimensionality~\cite{friedl2025reduced}, existing works are largely constrained to low-dimensional dynamical systems~\cite{duong2021hamiltonian,liu2024lnns,Lutter2023DeLaN, zhong2020symplectic}.  
Note that approaches that learn latent dynamic models from high-dimensional observations, e.g., pixels~\cite{stolzle24oscillatornetworks, Zhong2020Unsupervised}, still operate on low-dimensional configuration spaces, typically of the same dimensionality as the learned latent representation.  
Recently, \acp{rolnn}~\cite{friedl2025reduced} were introduced as a physics-inspired neural architecture to learn the unknown dynamics of high-dimensional Lagrangian systems. \acp{rolnn} build on Riemannian geometry and projection-based nonlinear \ac{mor}~\cite{Buchfink2024MOR} to learn a low-dimensional, structure-preserving surrogate dynamic model. Yet, their implications for model-based control remain unexplored.

Generally, \ac{mor} follows either intrusive paradigms~---~deriving reduced surrogates from known \ac{fom} dynamics~\cite{Buchfink2024MOR,Carlberg2015LagrMOR, Schilders2008MOR}~---~or non-intrusive approaches identifying a low-dimensional representation of the dynamics strictly from data~\cite{friedl2025reduced,SharmaKramer2024:LOpInf}. Regardless of the intrusiveness, preserving the stucture of the \ac{fom} in the \ac{rom} is often advantageous, particularly for physical systems~\cite{Buchfink2024MOR, Carlberg2015LagrMOR}.
While \acp{rom} are well-established for simulation and analysis~\cite{Buchfink2024MOR,Carlberg2015LagrMOR, Otto2023MOR, Schilders2008MOR}, their control-theoretic implications remain insufficiently understood. 
\citet{alora2023ssmmpc} designed a \ac{mpc} scheme on spectral submanifolds, while \citet{lutkus25latent} analyzed the stability and safety of controllers in the latent space of \acp{ae}. However, both frameworks disregard the intrinsic physical structure of the system, and only the latter benefits from the expressivity of neural architectures. A controller based on a Hamiltonian \ac{rom} obtained via intrusive, structure-preserving \ac{mor} led to promising results on regulation tasks~\cite{Lepri2024MOR}, under the restrictive assumptions of a known \ac{fom} and sufficiently small modeling errors. Yet, the closed-loop behavior of \ac{rom}-based controllers remains largely unstudied, especially in non-intrusive scenarios where limited prior knowledge exacerbates modeling uncertainty.

In this paper, we design and analyze \ac{rom}-based tracking controllers for high-dimensional Lagrangian systems with unknown dynamics. The proposed control laws are synthesized from the structure-preserving reduced-order dynamics learned by non-intrusive \acp{rolnn} (Sec.~\ref {sec:method_part1_control}). Taking a Riemannian perspective on projection-based \ac{mor}, we exploit the physical consistency of the learned \ac{rom} to provide a principled analysis of the induced closed-loop dynamics (Sec.~\ref{sec:method_part1_latent_convergence_analysis}-\ref{sec:method_part1_fullorder_plant_convergence_analysis}). For the first time, we characterize the virtual underactuation induced by reduced-order control and explicitly quantify the disturbances arising from projection-induced modeling errors. We establish the conditions for stability and convergence, proving local exponential input-to-state stability for fully-actuated systems, and discuss their extension to indirectly-actuated and underactuated systems (Sec.~\ref{sec:method_part2}). We validate our controllers via experiments on a simulated pendulum and a soft plush puppet manipulated by a real humanoid robot (Sec.~\ref{sec:experiments}).

In summary, we contribute \emph{(1)} a novel \ac{rom}-based tracking control scheme for high-dimensional systems with unknown Lagrangian dynamics, \emph{(2)} its principled characterization with proven locally exponential input-to-state stability, and \emph{(3)} its extension to indirectly- and underactuated systems.

\vspace{-0.05cm}
\section{Preliminaries}
\label{sec:background}
\vspace{-0.05cm}
This section provides theoretical preliminaries on the Riemannian geometry of mechanical control systems, on structure-preserving reduced-order models, and on \acp{rolnn}.

\vspace{-0.05cm}
\subsection{A Primer on Riemannian Geometry}
\vspace{-0.05cm}
A smooth manifold $\mathcalm$ of dimension $n$ can be intuitively conceptualized as a manifold that is locally, but not globally, similar to the Euclidean space $\euclideanspace^n$. Its smooth structure allows the definition of the derivative of curves on the manifold, which are tangent vectors. The set of all tangent vectors at a point $\bm{x}\in\mathcalm$ is a $n$-dimensional vector space called the tangent space $\tangentm{\bm{x}}$. 
The tangent bundle $\tangentbundlem$ is the disjoint union of all tangent spaces and is a $2n$-dimensional smooth manifold.
The cotangent space ${\cotangentm{\bm{x}}=\{\bm{\lambda} | \bm{\lambda}:\tangentm{\bm{x}}\to\euclideanspace \;\text{linear}\}}$ is the dual of $\tangentm{\bm{x}}$. Each cotangent vector $\bm{\lambda}\in\cotangentm{\bm{x}}$ is thus a linear map. \looseness-1
A Riemannian manifold is a smooth manifold $\mathcalm$ endowed with a Riemannian metric $g$, i.e., a smoothly-varying inner product $g_{\bm{x}}\!:\! \tangentm{\bm{x}}\times\tangentm{\bm{x}}\to\euclideanspace$. The Riemannian metric is represented in coordinates by a \ac{spd} matrix $\bm G(\bm x)\!\in\!\SPDn$, yielding the Riemannian inner product 
    $\langle \bm v, \bm w\rangle_{\bm G(\bm x)} = \bm v^\intercal \bm G(\bm x) \bm w$,
with $\bm x \!\in\! \mathcalm$ and ${\bm v, \bm w \!\in\! \tangentm{\bm{x}}}$.
The Riemannian metric defines distances, angles, and geodesics on $\mathcalm$, see~\cite{Lee13:SmoothManifolds, Lee18:RiemannianManifolds} for in-depth introductions. 

A smooth map $\varphi_\mathcalm \!:\! \checkmathcalm \!\to\! \mathcalm$ between two smooth manifolds $\checkmathcalm$ and $\mathcalm$ with $\text{dim}(\checkmathcalm)\!=\!d$  is a smooth immersion if its differential 
$d\varphi_\mathcalm\vert_{\checkbmx}$ is injective at each $\checkbmx\in\check{\mathcalm}$. A smooth embedding is a smooth immersion that is also a homeomorphism onto its image $\varphi (\checkmathcalm) \subseteq \mathcalm$. Intuitively, an embedding is a one-to-one continuous mapping between the manifold $\checkmathcalm$ and a $d$-dimensional subset $\varphi (\checkmathcalm)$ of $\mathcalm$, called an embedded submanifold of $\mathcalm$.
The lifted embedding $\varphi:\mathcalt\checkmathcalm \to \tangentbundlem$ is the embedding constructed as 
\begin{equation}
    \varphi(\checkbmx, \dcheckbmx) = (\varphi_\mathcalm(\checkbmx), \; d\varphi_\mathcalm\vert_{\checkbmx}\dcheckbmx).
\end{equation}

A smooth map $\xi_\mathcalm \!:\! \mathcalm \!\to\! \checkmathcalm$ between two smooth manifolds is a smooth submersion if its differential $d\xi_\mathcalm\vert_{\bm{x}}$ is surjective at each $\bm{x}\!\in\!\mathcalm$. A Riemannian submersion decomposes the tangent space as $\mathcal{T}_{\bm{x}}\mathcalm = \mathcal{V}_{\bm x} + \mathcal{H}_{\bm x}$. The vertical subspace $\mathcal{V}_{\bm x} = \text{ker}(d\xi_\mathcalm\vert_{\bm{x}})$ is the kernel of $d\xi_\mathcalm\vert_{\bm{x}}$, i.e., the part of $\mathcal{T}_{\bm{x}}\mathcalm$ that gets mapped to $\bm{0}\in\mathcal{T}_{\xi_\mathcalm(\bm{x})}\checkmathcalm$. The horizontal subspace $\mathcal{H}_{\bm x}=\mathcal{V}_{\bm x}^\perp$ is its orthogonal complement with respect to the metric $\bm{G}$ such that for any $\bm v, \bm w \in \tangentm{\bm{x}}$
\begin{equation}
    \langle \bm v, \bm w\rangle_{\bm G(\bm x)} = \langle d\xi_\mathcalm\vert_{\bm{x}} \bm v, d\xi_\mathcalm\vert_{\bm{x}} \bm w \rangle_{\check{\bm{G}} (\xi_{\mathcalm} (\bm x))},
\end{equation}
i.e., $d\xi_\mathcalm\vert_{\bm{x}}$ is a linear isometry from $\mathcal{H}_{\bm x}$ to $\mathcal{T}_{\xi_\mathcalm(\bm{x})}\checkmathcalm$. 
\vspace{-0.05cm}
\subsection{Riemannian Geometry of Lagrangian Systems}
The configurations $\bm q$ of an $n$-\ac{dof} system live on an $n$-dimensional smooth manifold $\mathcal{Q}\subseteq\mathbb R^n$ with a global chart~\cite{AbrahamMarsden87:Foundations,BulloLewis}. At each $\bm q\in\mathcal Q$, generalized velocities are tangent vectors $\dbmq\in\mathcal T_{\bm q} \mathcal Q$.
Given a time interval $\mathcal{I} = \left[t_0, t_\text{f}\right]$, system trajectories $\traj: \mathcal{I} \to \tangentbundleq: t \mapsto \left(\bmq(t), \dot{\bmq}(t)\right)^\intercal$ evolve on the tangent bundle $\tangentbundleq$. 
A Lagrangian system is a tuple $(\mathcalq, \lagrangian)$ composed of a configuration manifold $\mathcalq$ and a time-independent Lagrangian $\lagrangian\!:\! T\mathcalq \to \mathbb{R}$ defined as the difference between the kinetic and potential energies as  $\lagrangian(\bmq, \dbmq) \!=\! \frac{1}{2}\dbmq^{\intercal} \massmatrix \dbmq - \potentialenergy$.
The quadratic structure of the kinetic energy induces a Riemannian metric on $\mathcalq$, which, in coordinates, corresponds to the system's mass-inertia matrix $\bm M(\bm q)\in\SPDn$.
The equations of motion are derived from the principle of least action and are given, in coordinates, by
\begin{equation}
\label{eq:equationsofmotion}
    \massmatrix\ddbmq + \coriolistermmatrixform + \gravityterm = \bmtau,
\end{equation}
where $\bm{C}$ with elements $C_{ij} \!=\! \frac{1}{2}\sum_{k}\big(\frac{\partial M_{ij}}{\partial q^k}+\frac{\partial M_{ik}}{\partial q^j}-\frac{\partial M_{jk}}{\partial q^i}\big)\dot{q}_k$,  
represents the influence of Coriolis and centrifugal forces, $\gravityterm = \dpartial{\potentialenergy}{\bmq}$, and $\bmtau\in\cotangentq{\bmq}$ is a covector of generalized forces. 
Trajectories $\traj(t)$ are obtained by solving~\eqref{eq:equationsofmotion} recursively from an initial value $\traj(t_0) \!=\! \left(\bmq(t_0), \dot{\bmq}(t_0)\right)^\intercal$.

We model the generalized forces $\bmtau$ as a combination of dissipative and control forces.
We consider a Rayleigh dissipative potential $\mathcal{D}(\bmq, \dot{\bmq}) = \frac{1}{2} \dot{\bmq}^\intercal \bm{D}(\bmq)\dot{\bmq}$ with positive semi-definite dissipation matrix $\bm{D}(\bmq)$, leading to dissipative forces
$\bmtau_{\mathcal{D}} = -\frac{\partial \mathcal{D}(\bmq, \dot{\bmq})}{\partial \dot{\bmq}} = -\bm{D}(\bmq)\dot{\bmq}$.
Moreover, we model the set of admissible control torques as
$\bm \tau_{\text{c}} = \bm B(\bm q)\bm u$, where the control inputs $\bm u \in \mathbb R^m$ are restricted onto a subspace defined by the full-column-rank actuation distribution ${\bm B\!:\! \mathcalq \!\to\! \mathbb R^{n\times m}}$.
The system is fully actuated when $m\!=\!n$ and underactuated when $m \!<\! n$. 
The resulting generalized forces are\looseness-1
\begin{equation}
    \bmtau = \bmtau_{\mathcal{D}} + \bmtau_{\text{c}} = -\bm D(\bm q)\dbmq + \bm B(\bm q)\bm u.
    \label{eq:generalized_forces}
\end{equation}

\subsection{Projection-Based Lagrangian Reduced-Order Models}
Given the known parametrized governing equations of a high-dimensional system, referred to as the \ac{fom}, intrusive \ac{mor} aims to learn a low-dimensional surrogate dynamic model, i.e., a \ac{rom}, that accurately and computationally efficiently approximates the \ac{fom} trajectories~\cite{Schilders2008MOR}. 
Structure-preserving \ac{mor} uses (non)linear projections to maintain the structure of the \ac{fom} within the \ac{rom}, thereby preserving its energy conservation laws and stability properties~\cite{Marsden03:StructurePresMOR,Buchfink2024MOR}. 

For a Lagrangian system $(\mathcalq, \lagrangian)$, structure-preserving \ac{mor} infers a reduced Lagrangian system $(\check \mathcalq, \check \lagrangian)$ with $\text{dim}(\checkmathcalq)=d \ll n$. The reduced configuration manifold $\check \mathcalq$ is defined via the lifted embedding $\varphi\!:\! \mathcal T\check{\mathcal Q} \!\to\! \tangentbundleq$ of a smooth embedding $\varphiq\!:\! \checkmathcalq \!\to\! \mathcalq$, which is learned such that the embedded tangent bundle $\varphi(\mathcal T\check{\mathcal Q})\subseteq \tangentbundleq$ well captures the set of all trajectories
\begin{equation}
\label{eq:set_of_all_solutions}
    S = \left\{\traj(t) \in \mathcal T \mathcal Q \:\vert\: t \in \mathcal{I} \right\}\subseteq\mathcal T \mathcal Q.
\end{equation}
The reduced Lagrangian is constructed as the pullback of $\lagrangian$ by the lifted embedding $\varphi$, i.e., $\check{\lagrangian} = \varphi^*\lagrangian = \lagrangian \circ \varphi$. Direct construction of the reduced Euler-Lagrange equations 
yields
\begin{equation}
    \label{eq:reducedequationsofmotion}
    \check{\bmM}(\checkbmq)\ddcheckbmq + \checkbmC(\checkbmq, \dcheckbmq)\dcheckbmq + \check{\bm{g}}(\checkbmq) = \check{\bmtau}.
\end{equation} 
Intrusive \ac{mor}~\cite{Buchfink2024MOR} computes the reduced parameters from the known high-dimensional dynamics by pulling them back as 
\begin{equation}
    \label{eq:pullback_dynamic_parameters}
    \begin{aligned}
    &\check{\bmM}(\checkbmq) = d\varphiq\vert_{\checkbmq}^\intercal \; \massmatrix \; d\varphiq\vert_{\checkbmq}, \\
    \check{\bm{g}}(\checkbmq) &= d\varphiq\vert_{\checkbmq}^\intercal \; \gravityterm, \quad \text{and} \quad \check{\bmtau}=d\varphiq\vert_{\checkbmq}^\intercal \; \bmtau.
    \end{aligned}
\end{equation} 
Note that $\checkbmC(\checkbmq, \dcheckbmq)$ is computed via the derivatives of $\check{\bmM}(\checkbmq)$ and thus already accounts for embedding-induced curvature forces. 
When considering dissipative forces as in~\eqref{eq:generalized_forces}, the dissipative potential is pulled back similarly to the Lagrangian, yielding a reduced damping force $\check{\bmtau}_{\mathcal{D}} \!=\! -\frac{\partial \check{\mathcal{D}}(\check{\bmq}, \dot{\check{\bmq}})}{\partial \dot{\check{\bmq}}} \!= \! -\check{\bm{D}}(\check{\bmq})\dot{\check{\bmq}}$ with positive semi-definite reduced dissipation matrix~\cite{friedl2025hamiltonian}
\begin{equation}
\label{eq:pullback_dissipation}
    \check{\bm D}(\check {\bm q}) = d\varphi_{\mathcal Q}\vert_{\checkbmq}^\intercal \; \bm D(\bm q) \; d\varphi_{\mathcal Q}\vert_{\checkbmq}.
\end{equation}

Reduced trajectories $\checktraj(t) \!=\! \left(\checkbmq(t), \dot{\checkbmq}(t)\right)^\intercal$ are obtained by recursively solving the reduced equations of motion~\eqref{eq:reducedequationsofmotion}. The reduced initial value $\checktraj(t_0) \!=\! \rho(\traj (t_0)) \!=\! \left(\checkbmq(t_0), \dot{\checkbmq}(t_0)\right)^\intercal$, required to solve~\eqref{eq:reducedequationsofmotion}, is computed from the \ac{fom} initial value via the reduction map $\rho(\bmq, \dbmq) \!=\! (\rhoq(\bmq), d\rhoq\vert_{\bmq}\dbmq)$ associated with $\varphi$ and defined as the pair of point and tangent reductions $\rhoq: \mathcalq \to \checkmathcalq$ and $d\rhoq\vert_{\bmq}: \mathcal{T}_{\bmq}\mathcalq \to \mathcal{T}_{\rhoq(\bmq)}\checkmathcalq$. Trajectories of the high-dimensional Lagrangian system are obtained as solutions $\traj(t) \approx \varphi(\checktraj(t))$.
Importantly, the preservation of the Lagrangian structure in the \ac{rom} guarantees the conservation of the energy $\check{\mathcal{E}} = \mathcal{E} \circ \varphi$ of unforced dynamics ($\bmtau=0$) along the trajectories $\checktraj(t)$ and their images $\varphi(\checktraj(t))$.

To ensure the consistency of the \ac{rom} with respect to the \ac{fom}, the composition $\projection \!=\! \varphi \circ \rho : \tangentbundleq \to \varphi(\mathcal T\check{\mathcal Q})$ must be a projection, i.e., an idempotent mapping $\projection\circ \projection \!=\! \projection$. Intuitively, this ensures that applying the reduction and the embedding on a state that is already contained in the embedded submanifold $\varphi(\mathcal T\check{\mathcal Q})$ returns exactly the same state. A non-idempotent mapping would instead move the state at each application, thus leading to physical inconsistencies.
The mapping $p$ is a valid projection if the two projection properties
\begin{equation}
\label{eq:projectionproperty}
    \rhoq \circ \varphiq = \identity{ \check{\mathcal Q}} \quad \text{ and } \quad 
    d\rho\vert_{\varphi(\checkbmq, \dcheckbmq)} \circ d\varphi\vert_{(\checkbmq, \dcheckbmq)} = \identity{\mathcal{T}_{\checkbmq} \check{\mathcal Q}},
\end{equation}
are satisfied $\forall (\checkbmq, \dcheckbmq) \!\in\! \mathcal T \check{\mathcal Q}$.
Trivially, a pair of point reduction and embedding fulfilling the former also satisfies the latter. 

\subsection{Reduced-order Lagrangian Neural Networks (RO-LNNs)}
\label{sec:background_learning_ro_lnn}
Intrusive \ac{mor} techniques assume known additional dynamics parameters which are, in practice, rarely available. In this paper, we focus on structure-preserving non-intrusive \ac{mor}, where the two components of the reduced system $(\check{\mathcal Q}, \check{\mathcal L})$ are learned from data. 
Specifically, we focus on the \ac{rolnn}~\cite{friedl2025reduced} as an expressive physics-inspired nonlinear architecture based on Riemannian geometry.
A \ac{rolnn} jointly learns a nonlinear embedded submanifold $\varphi(\mathcal T\check{\mathcal Q})$ using a constrained \ac{ae} and the associated low-dimensional Lagrangian dynamics with a \ac{lnn}.
First, the \ac{rolnn} parametrizes the point reduction $\rhoq$ and embedding $\varphiq$ as the encoder and decoder of a constrained \ac{ae} with biorthogonal weights, opposite biases, and invertible activation functions~\cite{Otto2023MOR}. 
The lifted reduction map and embedding are constructed via the Jacobians of the encoder and decoder. 
This architecture fulfills the projection properties~\eqref{eq:projectionproperty} by construction, i.e., the lifted constrained \ac{ae} mapping $\liftedPiAE\!:\!\mathcal T\mathcal Q \!\to\! \varphi(\mathcal T{\check{\mathcal Q}})$ is a projection.
Second, the \ac{rolnn} learns a reduced Lagrangian $\check{\lagrangian}$ via a latent geometric \ac{lnn}~\cite{friedl2025reduced}. Specifically, the reduced mass-inertia matrix and the reduced potential energy are parametrized as a \ac{spd} network $\check{\bm M}_{\bm \theta_{\check T}}\!:\!\checkmathcalq\!\to\!\SPDd$ and a \ac{mlp} $\check V_{\bm \theta_{\check V}}\!:\!\checkmathcalq \!\to\! \euclideanspace$, respectively. Importantly, the \ac{spd} network accounts for the Riemannian geometry of the \ac{spd} manifold, thus overcoming the geometric flaws of 
\acp{lnn} parametrizing the Cholesky decomposition $\bmM_\theta=\bm L_\theta \bm L_\theta^\intercal$~\cite{Lutter2023DeLaN}. As~\cite{friedl2025hamiltonian}, we parametrize the reduced dissipation matrix as a second \ac{spd} network $\check{\bm D}_{\theta_{\check D}}\!:\! \checkmathcalq \!\to\! \SPDd$.

Given sets of observations $\left\{\bmq_i(\mathcal{I}_i), \dbmq_i(\mathcal{I}_i), \bm{\tau}_i(\mathcal{I}_i)\right\}_{i=1}^N$ over intervals $\mathcal{I}_i = \left[t_{i}, t_{i}+H\Delta t\right]$ with constant integration time $\Delta t$, the \ac{rolnn} is trained to minimize the multi-step loss

\footnotesize
\vspace{-0.5cm}
\begin{align} 
    \label{eq:loss_RO_LNN}
    \ell_{\text{RO-LNN}}
    &= \frac{1}{HN} \sum_{i=1}^N \sum_{j=1}^H 
    \underbrace{\|\tilde{\bmq}_{i}(t_{i,j}) - \bmq_i(t_{i,j}) \|^2 + \| \dtildebmq_{i}(t_{i,j}) - \dbmq(t_{i,j}) \|^2}_{\ell_{\text{AE}}} \nonumber \\
    &+  
    \underbrace{ \|\dcheckbmq_{\text{p}, i}(t_{i,j}) - \dcheckbmq_i(t_{i,j}) \|^2}_{\ell_{\text{LNN},d}} + \underbrace{\|\dtildebmq_{\text{p}, i}(t_{i,j}) - \dbmq_{i}(t_{i,j})\|^2}_{\ell_{\text{LNN},n}} + \; w\; \|\bm{\theta}\|_2^2, 
    \vspace{-0.2cm}
\end{align}
\normalsize
where $(\tildebmq, \dtildebmq) \!=\! \liftedPiAE(\bmq, \dbmq)$, $\dcheckbmq_{\text{p}, i}(t_{i,j}) \!=\! \int_{t_{i}}^{t_{i,j}}  \ddcheckbmq_{\text{p},i}  \mathrm{dt}$ with ${t_{i,j}\!=\!t_{i} + j\Delta t}$, and $\dtildebmq_{\text{p}, i}(t_{i,j}) =d\varphiq|_{\checkbmq_{\text{p}, i}(t_{i,j})} \dcheckbmq_{\text{p}, i}(t_{i,j})$.  
The network parameters belonging to the biorthogonal and \ac{spd} manifold are optimized via Riemannian optimization~\cite{Absil07:RiemannOpt, Boumal22:RiemannOpt,becigneul2018riemannianoptimization}.

\section{Model-based Control via RO-LNNs}
\label{sec:method_part1}

We consider a high-dimensional Lagrangian system following the dynamics~\eqref{eq:equationsofmotion}. Our objective is to track a given smooth, time-dependent, bounded reference trajectory $(\bmqdes(t), \dbmqdes(t), \ddbmqdes(t)) \in \mathcal T \mathcal T{\mathcal{Q}}$. 
Throughout this section, we assume fully-actuated systems with $\bm B(\bm q)\!=\!\mathbf I$ and $\bm\tau_{\text{c}} \!=\! \bm u$. We discuss the indirectly- and underactuated cases in Sec.~\ref{sec:method_part2}.
The closed-loop dynamics from~\eqref{eq:equationsofmotion}-\eqref{eq:generalized_forces} can be rewritten in terms of errors $\bm e = \bmq - \bmqdes$ and for a general control torque $\bm\tau_{\text{c}}$ as\looseness-1
\begin{equation}
\vspace{-0.05cm}
    \label{eq:general_cl_dyns}
    \begin{aligned}
            &\bm M(\bm q)\ddbme + (\bm C (\bm q, \dbmq) + \bm D(\bm q))\dbme = \bm\tau_{\text{c}} \\&- (\bm M(\bm q)\ddbmqdes + (\bm C (\bm q, \dbmq) + \bm D(\bm q))\dbmqdes + \bm g(\bm q)).
    \end{aligned}
\end{equation}
Setting the control torque as a standard model-free \ac{pd} controller
$\bm\tau_{\text{c}} \!=\! \bmtau_{\text{PD}} \!=\! - \Kp\bm e - \Kd\dot{\bm e}$ with gain matrices $\Kp, \Kd \in \SPDn$ only guarantees set-point regulation with a steady-state error as it does not compensate for dynamics. Moreover, the reference dynamics act as a disturbance when tracking time-varying trajectories, and tracking accuracy degrades significantly.
Explicit compensation of the dynamics is required to achieve zero steady-state error and allows exact tracking when $\dbmqdes, \ddbmqdes \neq \mathbf 0$. This can be achieved via the globally asymptotically stable \ac{pd}+ control law~\cite{padenpanja88}~---~extendable to exponential results~\cite{murray1994,wenbayard88}~---~that includes a current-state-based feedforward torque $\bmtau_{\text{FF}}$ as

\small
\vspace{-0.45cm}
\begin{equation}
    \label{eq:pd_ff_controllaw}
    \bm\tau_{\text{c}} = \underbrace{\bm M(\bm q)\ddbmqdes + \bm (\bm C(\bmq, \dbmq)  + \bm D(\bm q)) \dbmqdes+ \bm g(\bm q)}_{\bmtau_{\text{FF}}} \underbrace{- \Kp\bm e - \Kd\dot{\bm e}}_{\bmtau_{\text{\ac{pd}}}}.
\end{equation}
\normalsize
However, in practice, any guarantees are undermined by modeling errors and disturbances that restrict convergence to local neighborhoods and to bounded errors~\cite{khalil2002}. This is particularly critical for high-dimensional systems for which access to a \ac{fom} is often unrealistic.\looseness-1

In this paper, we consider high-dimensional Lagrangian systems whose unknown dynamics are learned via a \ac{rolnn}. We assume that the system's trajectories can be well approximated by a \ac{rom} such that $S$ in~\eqref{eq:set_of_all_solutions} is approximately a subset of $\varphi(\mathcal{T}\checkmathcalq)$ for some embedding $\varphi$ and that the \ac{rolnn} achieves a low loss~\eqref{eq:loss_RO_LNN}. 
In Sec.~\ref{sec:method_part1_control}, we formulate a \ac{pd}+ control law in the \ac{rolnn} latent space with a feedforward action based on the learned reduced dynamic parameters $\checkbmMtheta$, $\checkbmgtheta$, and  $\checkbmDtheta$. 
In Sec.~\ref{sec:method_part1_latent_convergence_analysis}, we analyze convergence of the proposed controller in the \ac{rolnn} latent space as a function of the two sources of modeling errors arising in this scenario: (1) the modeling error of the learned reduced dynamic parameters, and (2) the projection alignment error of the learned \ac{ae}. 
Finally, Sec.~\ref{sec:method_part1_fullorder_plant_convergence_analysis} analyzes convergence of the proposed controller on the embedded submanifold $\varphiq({\checkmathcalq})\!\subseteq \!\mathcalq$ and subsequently on the complete configuration manifold $\mathcalq$. 

\vspace{-0.05cm}
\subsection{A RO-LNN-based Control Law for Trajectory Tracking}
\vspace{-0.05cm}
\label{sec:method_part1_control}
We design a trajectory tracking controller that leverages the Lagrangian structure inherent in both the original system and the structure-preserving \ac{rom} to establish performance guarantees. 
Specifically, we exploit state measurement feedback and the learned \ac{rolnn} to define a latent control law $\checkbmtau_{\text{c}}$ as combination of a latent feedforward $\check{\bmtau}_{\text{FF}}$ and latent \ac{pd} feedback $\check{\bmtau}_{\text{PD}}$. 
First, we obtain the latent reference trajectory $(\checkbmqdes(t), \dcheckbmqdes(t), \ddcheckbmqdes(t)) \in \mathcal T \mathcal T\check{\mathcal{Q}}$ by encoding the reference trajectory into the learned latent space with $\checkbmqdes = \rhoq(\bmqdes)$, $\dcheckbmqdes = d\rhoq \vert_{\bmqdes} \dbmqdes$, and $\ddcheckbmqdes =  d\rhoq\vert_{\bmqdes} \ddbmqdes + d^2\rhoq\vert_{\bmqdes} \dbmqdes^2$. We then formulate a \ac{rolnn}-based \ac{pd}+ control law as

\small
\vspace{-0.4cm}
\begin{equation}
    \label{eq:reduced_pd_ff_controllaw}
    \checkbmtau_{\text{c}} = \underbrace{\check{\bm M}_{\bm\theta}(\checkbmq)\ddcheckbmqdes + (\check{\bm C}_{\bm\theta}(\checkbmq, \dcheckbmq) + \check{\bm D}_{\bm\theta}(\checkbmq))\dcheckbmqdes + \check{\bm g}_{\bm\theta}(\checkbmq)}_{\checkbmtau_{\text{FF}}} \underbrace{ - \check{\bm K}_{\text{p}}\check{\bm e} - \check{\bm K}_{\text{d}} \dot{\check{\bm e}}}_{\checkbmtau_{\text{\ac{pd}}}},
\end{equation}
\normalsize
with latent tracking error $\checkbme = \checkbmq - \checkbmqdes$ and reduced gain matrices $\checkKp, \checkKd \in \SPDd$.
The reduced control torques $\checkbmtau_{\text{c}}$ are finally lifted to the original representation via the encoder pullback,
\begin{equation}
\label{eq:torque_reconstruction}
    \tildebmtau_{\text{c}} = d\rhoq\vert_{\bmq}^\intercal \check{\bmtau}_{\text{c}},
\end{equation}
before being applied on the original system by setting $\bmtau_{\text{c}} = \tilde{\bm \tau}_{\text{c}}$. 
It is important to observe that this formulation restricts the possible range of control torques to $\tildebmtau_{\text{c}} \in \mathcal T_{\tildebmq}^*\varphiq(\mathcal{Q})$, so that the system evolves according to the closed-loop dynamics
\begin{equation}
\label{eq:plant_actuated_via_reconstructed_torque}
    \bm M(\bm q)\ddbmq + (\bm C(\bm q, \dbmq) + \bm D(\bm q))\dbmq + \bm g(\bm q) = \tildebmtau_{\text{c}}.
\end{equation}
Under exact modeling assumptions, i.e., a \ac{rolnn} with zero loss~\eqref{eq:loss_RO_LNN} $\forall (\bmq,\dbmq)\in\mathcal{T}\mathcalq$, the controller achieves perfect convergence, as shown for the latent convergence of regulation tasks in~\cite{Lepri2024MOR}. However, in realistic scenarios, both the \ac{ae} and the latent \ac{lnn} inevitably contribute to modeling errors. Therefore, we analyze the properties of the proposed control law under the significantly milder assumption of a low loss~\eqref{eq:loss_RO_LNN}.
\begin{figure}
    \centering
    \adjustbox{trim=0.0cm 0.2cm 0.0cm 0.0cm}{
            \includesvg[width=.99\linewidth]{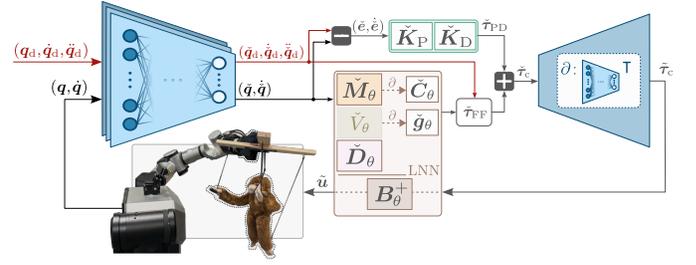}  
        }
    \caption{\ac{rolnn}-based control loop. The AE is depicted in blue and the latent \ac{lnn} components are depicted within the brown box. The control torques are computed in the learned latent space and lifted through the decoder.
    }
    \label{fig:closedloop}
    \vspace{-0.55cm}
\end{figure}

\vspace{-0.1cm}
\subsection{Latent Space Analysis of Stability and Convergence}
\label{sec:method_part1_latent_convergence_analysis}
To establish the stability and convergence properties of the \ac{rolnn}-based \ac{pd}+ law~\eqref{eq:torque_reconstruction}, we first examine the closed-loop dynamics within the learned Lagrangian latent space. Our analysis leverages that the \ac{rolnn} satisfies the projection properties~\eqref{eq:projectionproperty}, learns \ac{spd} mass-inertia and damping matrices $\checkbmMtheta$, $\checkbmDtheta$, and fulfills ${(\dot{\check{\bm M}}_\theta -2 \checkbmC_\theta)=\bm 0}$ by construction.

\subsubsection{Closed-loop dynamics} 
In latent space, the nominal reduced system follows the Lagrangian dynamics~\eqref{eq:reducedequationsofmotion} with pulled-back parameters~\eqref{eq:pullback_dynamic_parameters}-\eqref{eq:pullback_dissipation}.
The \ac{rolnn} identifies the nominal latent dynamic equations exactly only if the loss~\eqref{eq:loss_RO_LNN} is $0$ $\forall (\bmq,\dbmq)\in\mathcal{T}\mathcalq$.
In practice, two types of modeling errors arise. First, the \emph{dynamic modeling error} is due to the difference between the nominal $\{\checkbmM, \checkbmg, \checkbmD\}$~\eqref{eq:pullback_dynamic_parameters}-\eqref{eq:pullback_dissipation} and learned reduced parameters $\{\checkbmMtheta, \checkbmgtheta, \checkbmDtheta\}$.
Second, the \emph{projection alignment error} arises from the learned \ac{ae} as explained next.

While the constrained \ac{ae} used in the \ac{rolnn} is guaranteed to learn a projection~\cite{Otto2023MOR}, the Riemannian angle between the subspaces $\mathcal{V}_{\bm q} \!=\! \text{ker}(d\rhoq \vert_{\bm{q}})$ and ${\mathcal{H}_{\bm q} \!= \! \text{im}(d\rhoq \vert_{\bm{q}})}$ is subject to parameter convergence. Concretely, this means that the Riemannian angle $\alpha$ between the projected and nullspace components of a velocity $\dbmq\in\mathcal{T}_{\bm q}\mathcalq$ can range as 
\begin{equation}
    \label{eq:riemannian_angle_between_image_and_nullspace}
        \alpha = \arccos \big(\sfrac{|\langle\dtildebmq, \dbmqns\rangle_{\bm M} |}{ \|\dtildebmq\|_{\bm M} \|\dbmqns\|_{\bm M} } \big) \in[0,\sfrac{\pi}{2}], 
\end{equation}
where we defined $\dtildebmq \!=\! \bm{P}(\bmq) \dbmq \in \mathcal{H}_{\bm q}$ with the projection $\bm{P}(\bmq) \!=\! d\varphiq\vert_{\checkbmq} d\rhoq\vert_{\bmq}$, and $\dbmqns \!=\! \dbmq - \dtildebmq \in\mathcal{V}_{\bm q}$, see Fig.~\ref{fig:submersion_angles}.
However, any ${\alpha\!\neq\!\sfrac{\pi}{2}}$ induces a cross-coupling between force components in the nullspace $\mathcal{V}_{\bmq}^*$ and submanifold $\mathcal{H}_{\bmq}^*$, resulting in additional non-conservative latent forces.
As a consequence, the \ac{rom} is fully dynamically consistent~---~i.e., the latent trajectories obtained from the reduced dynamic equations~\eqref{eq:reducedequationsofmotion} are equal to the encoding of the trajectories following~\eqref{eq:equationsofmotion}~---~only if $\alpha\!=\!\sfrac{\pi}{2}$~\cite[Thm. 5.7]{Buchfink2024MOR}. 
In this case, the encoder $\rhoq$ defining a Riemannian submersion and $d\rhoq$ is the dynamically-consistent generalized inverse of $d\varphiq$, i.e., $d\rhoq \!=\! d\varphiq^{+_{,\bm M}}$ with 
\begin{equation}
\label{eq:riemannian_pseudoinverse_of_embedding}
    d\varphiq\vert^{+_{,\bm M}}_{\checkbmq} = (d\varphiq\vert_{\checkbmq}^\intercal \; \bm M(\bmq) \; d\varphiq\vert_{\checkbmq})^{-1} \; d\varphiq\vert_{\checkbmq}^\intercal \; \bm M(\bm q),
\end{equation}
and the projection $\bm{P}_{\perp}(\tildebmq) \!=\! d\varphiq\vert_{\checkbmq} d\varphiq\vert_{\checkbmq}^{+_{,\bmM}}$ is $\bmM$-orthogonal, as it yields orthogonal subspaces $\mathcal{H}_{\bm q} \!=\! \mathcal{V}_{\bm q}^\perp$. 
Notice that $d\varphiq^{+_{,\bm M}}$ cannot be computed in practice as $\bmM$ remains unknown. Moreover, $\varphiq$ can be identified from $d\varphiq$ only for linear embeddings~\cite{Carlberg2015LagrMOR}. 
In summary, the projection alignment error arises from the difference between the learned ${\bm{P}(\bmq) \!=\! d\varphiq\vert_{\checkbmq} d\rhoq\vert_{\bmq}}$ and the nominal $\bmM$-orthogonal $\bm{P}_{\perp}$. 

Next, we write the latent closed-loop dynamics defined on a compact neighborhood $\mathcal{N}\subseteq\mathcal{T}\mathcalq$ close to the training distribution where all system matrices are well-behaved.

\begin{figure}
    \centering
    \adjustbox{trim=0.0cm 0.3cm 0.0cm 0.0cm}{
            \includesvg[width=.55\linewidth]{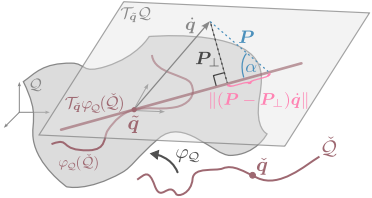}  
        }
    \caption{Riemannian residual angle $(\SI{90}{\degree}-\alpha)$ between the $\bmM$-orthogonal ($\bm P_\perp$) and learned ($\bm P$) projections. 
    }
    \label{fig:submersion_angles}
    \vspace{-0.55cm}
\end{figure}

\vspace{-0.1cm}
\begin{mdframed}[hidealllines=true,backgroundcolor=lightlightgray,innerleftmargin=.1cm,
  innerrightmargin=.1cm,innertopmargin=-.1cm,innerbottommargin=0.1cm]
\begin{proposition}[Latent dynamics]
    \label{prop:latent-closedloop}
    The latent closed-loop dynamics induced by the control torque $\tildebmtau_{\text{c}}$~\eqref{eq:torque_reconstruction} take the form
    \begin{equation}
        \label{eq:latent_closed_loop_dyns}
        \checkbmM(\checkbmq) \ddcheckbme + \big(\checkbmC(\checkbmq, \dcheckbmq) + \checkbmD(\checkbmq) + \checkKd\big)\dcheckbme + \checkKp \checkbme = \check{\Delta}_{\theta} + \check{\Delta}_{\perp},
    \end{equation}
    where $\check{\Delta}_{\theta}$ and $\check{\Delta}_{\perp}$ are disturbances due to dynamic modeling and projection alignment errors, that are bounded in a compact neighborhood $\mathcal{N}\!\subseteq\!\mathcal{T}\mathcalq$ of the training data distribution.
\end{proposition}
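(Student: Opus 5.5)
The plan is to derive the latent dynamics by differentiating the encoded state along closed-loop trajectories of \eqref{eq:plant_actuated_via_reconstructed_torque}. After that, I regroup all terms into the nominal pulled-back Lagrangian structure \eqref{eq:pullback_dynamic_parameters}--\eqref{eq:pullback_dissipation} plus residuals.

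Set $\checkbmq=\rhoq(\bmq)$ and $\dcheckbmq=d\rhoq\vert_{\bmq}\dbmq$, so that $\ddcheckbmq=d\rhoq\vert_{\bmq}\ddbmq+d^2\rhoq\vert_{\bmq}\dbmq^2$. Split $\dbmq=\dtildebmq+\dbmqns$ with $\dtildebmq=\bm P\dbmq=d\varphiq\vert_{\checkbmq}\dcheckbmq$ and $\dbmqns\in\mathcal V_{\bmq}$. Substituting $\ddbmq=\bmM^{-1}(\tildebmtau_{\text{c}}-(\bmC+\bmD)\dbmq-\bmg)$ gives an expression for $\ddcheckbmq$. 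I would then left-multiply by the nominal $\checkbmM(\checkbmq)=d\varphiq^\intercal\bmM d\varphiq$, evaluated at $\tildebmq=\varphiq(\checkbmq)$.

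The key identity is that if $d\rhoq=d\varphiq^{+_{,\bmM}}$ as in \eqref{eq:riemannian_pseudoinverse_of_embedding}, then $\checkbmM\,d\rhoq\bmM^{-1}=d\varphiq^\intercal$. In that case the torque enters as $d\varphiq^\intercal d\rhoq^\intercal\checkbmtau_{\text{c}}=\checkbmtau_{\text{c}}$, which follows from the projection property \eqref{eq:projectionproperty}. Likewise the gravity and damping enter as $\checkbmg$ and $\checkbmD\dcheckbmq$, and the velocity-quadratic terms recombine into $\checkbmC\dcheckbmq$ by the standard pullback computation of Christoffel symbols. I would define $\check\Delta_\perp$ as exactly the residual produced when the learned $d\rhoq$ replaces $d\varphiq^{+_{,\bmM}}$. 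This residual collects:
\begin{itemize}
\item the term $\checkbmM(d\rhoq-d\varphiq^{+_{,\bmM}})\bmM^{-1}(\cdot)$ acting on the torque, damping, Coriolis and gravity terms;
\item the terms involving $\dbmqns$ and the state offset $\bmq-\tildebmq$, including the mismatch between evaluating $\bmM,\bmC,\bmg,\bmD$ at $\bmq$ versus $\tildebmq$;
\item the curvature term $d^2\rhoq$, which the pullback $\checkbmC$ does not absorb.
\end{itemize}
All of these vanish when $\bm P=\bm P_\perp$ and the state lies on $\varphi(\mathcal T\checkmathcalq)$. The result has the form $\checkbmM\ddcheckbmq+(\checkbmC+\checkbmD)\dcheckbmq+\checkbmg=\checkbmtau_{\text{c}}+\check\Delta_\perp$.

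Next I insert the control law \eqref{eq:reduced_pd_ff_controllaw} and subtract $\checkbmM\ddcheckbmqdes+(\checkbmC+\checkbmD)\dcheckbmqdes$ from both sides. This yields \eqref{eq:latent_closed_loop_dyns} with
\[
\check\Delta_\theta=(\checkbmMtheta-\checkbmM)\ddcheckbmqdes+(\checkbmC_\theta-\checkbmC+\checkbmDtheta-\checkbmD)\dcheckbmqdes+\checkbmgtheta-\checkbmg .
\]
Bounding $\check\Delta_\theta$ is routine. On the compact set $\mathcal N$, all learned and nominal parameters are continuous, and the encoder and its derivatives are smooth. The reference trajectory is smooth and bounded, and $\dcheckbmq$ is bounded on $\mathcal N$. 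Hence $\check\Delta_\theta$ is a continuous function on a compact set and is therefore bounded.

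The main obstacle is the bookkeeping for $\check\Delta_\perp$. First, $\checkbmC$ must be the Coriolis matrix of the pulled-back metric evaluated at $\checkbmq$, while the plant evaluates $\bmC$ at $\bmq\neq\tildebmq$. Second, $\check\Delta_\perp$ depends on $\ddcheckbme$ only through terms I must avoid, namely the torque term containing $\checkKd$ and $\checkKp$. I would handle this by keeping $\check\Delta_\perp$ as an explicit function of $(\bmq,\dbmq,t)$ and the bounded control torque, and then invoke continuity on $\mathcal N$ to obtain its boundedness. Its size is controlled by $\|d\rhoq-d\varphiq^{+_{,\bmM}}\|$, i.e. by $\cos\alpha$ in \eqref{eq:riemannian_angle_between_image_and_nullspace}, and by the reconstruction error $\ell_{\text{AE}}$.
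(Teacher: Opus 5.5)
Your proposal is correct and follows essentially the same route as the paper. The paper likewise writes $\check{\bm M}\ddot{\check{\bm q}}=\check{\bm M}\ddot{\check{\bm q}}_{\text{nom}}+\check{\bm M}(\ddot{\check{\bm q}}-\ddot{\check{\bm q}}_{\text{nom}})$, so its $\check\Delta_\perp$ is exactly your residual relative to the pulled-back latent dynamics, its $\check\Delta_\theta$ is the same feedforward mismatch, and boundedness is argued by continuity on the compact set $\mathcal N$, with the vanishing of $\check\Delta_\perp$ for $d\rho_{\mathcal Q}=d\varphi_{\mathcal Q}^{+,\bm M}$ shown via $d^2\rho_{\mathcal Q}\,(d\varphi_{\mathcal Q}\dot{\check{\bm q}})^2=-d\rho_{\mathcal Q}\,d^2\varphi_{\mathcal Q}\,\dot{\check{\bm q}}^2$ and explicit $|\cot\alpha|$-bounds; your explicit bookkeeping of evaluating $\bm M,\bm C,\bm D,\bm g$ at $\bm q$ rather than at $\varphi_{\mathcal Q}(\check{\bm q})$ is more careful than the paper, whose explicit simplifications hold only on the embedded submanifold (and $\check\Delta_\perp$ depends on $(\check{\bm e},\dot{\check{\bm e}})$ through $\check{\bm\tau}_{\text{c}}$, not on $\ddot{\check{\bm e}}$).
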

\end{mdframed}
\vspace{-0.2cm}
\begin{proof}[Proof sketch]
We quantify the difference between the nominal latent acceleration $\ddcheckbmq_{\text{nom}}$ induced directly from the latent control torque $\checkbmtau_{\text{c}}$ in~\eqref{eq:reducedequationsofmotion} and the latent acceleration $\ddcheckbmq$ encoding the acceleration $\ddbmq$ obtained as the response of the full system to the lifted control torques $\tildebmtau_{\text{c}}$~\eqref{eq:torque_reconstruction}.
Dropping system matrices dependencies, the difference can be expanded as
\begin{equation}
\label{eq:diff_latent_accelerations}
    \checkbmM\ddcheckbmq = \checkbmM\ddcheckbmq_{\text{nom}} + \checkbmM(\ddcheckbmq - \ddcheckbmq_{\text{nom}}),
\end{equation}
Substituting ${\ddcheckbmq \!=\! d\rhoq\vert_{\bmq}\ddbmq +d^2\rhoq\vert_{\bmq} \dbmq^2}$, $\ddbmq$ from~\eqref{eq:equationsofmotion} with $\bmtau \!=\! \tildebmtau_{\text{c}}$ from~\eqref{eq:torque_reconstruction}, $\ddcheckbmq_{\text{nom}}$ from~\eqref{eq:reducedequationsofmotion} with $\checkbmtau\!=\!\checkbmtau_{\text{c}}$ from~\eqref{eq:reduced_pd_ff_controllaw}, $\ddcheckbmq\!=\! \ddcheckbme + \ddcheckbmqdes$ and $\dcheckbmq\!=\! \dcheckbme + \dcheckbmqdes$, we rewrite~\eqref{eq:diff_latent_accelerations} as the latent-closed loop dynamics~\eqref{eq:latent_closed_loop_dyns}. The complete derivation is provided in App.~\ref{appendix:latent_closedloop_dynamics}. Next, we describe and bound the disturbances $\check{\Delta}_{\theta}$ and $\check{\Delta}_{\perp}$. 

The dynamic modeling disturbance arises from the error between the nominal and learned reduced dynamic parameters as \looseness-1
\begin{equation}
\label{eq:delta_check_parameter_convergence_disturbance}
    \check{\Delta}_{\theta} = (\checkbmMtheta - \checkbmM)\ddcheckbmqdes + (\checkbmCtheta - \checkbmC + \checkbmDtheta - \checkbmD)\dcheckbmqdes + (\checkbmgtheta - \checkbmg),
\end{equation}
equivalently $\check{\Delta}_{\theta}\!=\!\check{\bm\tau}_{\text{FF}} - \check{\bm\tau}_{\text{nom-FF}}$
with $\check{\bm\tau}_{\text{FF}}$ and $\check{\bm\tau}_{\text{nom-FF}}$ denoting the feedforward torques of the control law~\eqref{eq:fom_pd_lat_ff_controllaw} computed using the learned and nominal reduced dynamic parameters, respectively.
Assuming a bounded reference trajectory and low losses $\ell_{\text{LNN},d}$ and $\ell_{\text{LNN},N}$ in~\eqref{eq:loss_RO_LNN}, it is straightforward to show that there exists a compact neighborhood $\mathcal{N}\subseteq\mathcal{T}\mathcalq$ of the training data distribution such that $\|\check\Delta_\theta\|\leq \check r_{\theta}$ for some constant $\check r_{\theta}$. Note that~\cite{liu2024lnns} used a similar argument for bounding the modeling error of full-order \acp{lnn}.

The projection alignment disturbance captures the error between the nominal $\bmM$-orthogonal and learned projections as \looseness-1

\small
\vspace{-0.4cm}
\begin{align}
\label{eq:delta_perp_orthogonality_convergence_disturbance}
    \check{\Delta}_{\perp} 
    &= (\checkbmM d\rhoq \bm M^{-1} d\rhoq^\intercal - \mathbf I)\checkbmtaudes + d\varphiq^\intercal \bm M(\mathbf I -d\varphiq d\rhoq)d^2\varphiq \dcheckbmq^2 \nonumber \\
    &\quad\quad+ (d\varphiq^\intercal -\checkbmM d\rhoq \bm M^{-1})((\bm C + \bm D)\dbmq + \bm g ),
\end{align}
\normalsize
equivalently $\check{\Delta}_{\perp} \!=\! \checkbmM(\ddcheckbmq - \ddcheckbmq_{\text{nom}})$.
It is straightforward to show that $\check\Delta_\perp\!=\!0$ when the \ac{ae} learns a Riemannian submersion, i.e., when $d\rhoq \!=\! d\varphiq^{+_{,\bm M}}$, see App.~\ref{appendix:latent_convergence_bounds}.
Otherwise, the norm $\|\check{\Delta}_{\perp}\|$ is bounded by the system state and the angle $\sfrac{\pi}{2}-\alpha$ between the nominal $\bmM$-orthogonal and learned projections $\bm{P}_\perp$ and $\bm{P}$
, see Fig~\ref{fig:submersion_angles}. 
As~\eqref{eq:delta_perp_orthogonality_convergence_disturbance} is the sum of three bounded terms in $\mathcal{N}$, we obtain ${\|\check{\Delta}_{\perp}\| \leq \|\check{\Delta}_{\perp,1}\| + \|\check{\Delta}_{\perp,2}\| + \|\check{\Delta}_{\perp,3}\|\leq \check r_{\perp}}$. 
The complete derivation of $\check r_{\perp}$ is provided in App.~\ref{appendix:latent_convergence_bounds}. 
\end{proof}

\vspace{-0.1cm}
\begin{mdframed}[hidealllines=true,backgroundcolor=lightlightgray,innerleftmargin=.1cm,
  innerrightmargin=.1cm,innertopmargin=-.1cm,innerbottommargin=0.1cm]
\begin{corollary}
\label{cor:bounded_delta_check}
    The latent closed-loop dynamics~\eqref{eq:latent_closed_loop_dyns} are such that, for states in the neighborhood $\mathcal{N}\subseteq\mathcal{T}\mathcalq$, there exists a constant $\bar d_{\check\Delta} \geq 0$ such that $\sup_{(\bm q, \dbmq)\in \mathcal N}\|\check\Delta_\theta + \check\Delta_\perp\| \leq \bar d_{\check\Delta}$.
\end{corollary}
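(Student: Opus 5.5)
The corollary follows almost directly from Proposition~\ref{prop:latent-closedloop} through the triangle inequality. The plan is to take the two disturbance terms exactly as they appear in~\eqref{eq:delta_check_parameter_convergence_disturbance} and~\eqref{eq:delta_perp_orthogonality_convergence_disturbance}, view each as a function of the state $(\bmq,\dbmq)$ (and of the bounded reference), show that each is continuous on $\mathcal{N}$, and then use compactness of $\mathcal{N}$ to obtain finite suprema.

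\textbf{Step 1: continuity and boundedness of $\check\Delta_\theta$.} The quantities $\checkbmq=\rhoq(\bmq)$ and $\dcheckbmq=d\rhoq\vert_{\bmq}\dbmq$ are continuous in $(\bmq,\dbmq)$ because the encoder is smooth. The learned maps $\checkbmMtheta,\checkbmCtheta,\checkbmDtheta,\checkbmgtheta$ are smooth networks. The nominal pulled-back parameters~\eqref{eq:pullback_dynamic_parameters}--\eqref{eq:pullback_dissipation} are smooth because $\bmM,\bm D,\bm g$ and $\varphiq$ are smooth, and $\checkbmC$ is built from derivatives of $\checkbmM$. The reference $(\checkbmqdes,\dcheckbmqdes,\ddcheckbmqdes)$ is bounded for all $t$, since the full-order reference is bounded and the images of a compact set under $\rhoq$, $d\rhoq$ and $d^2\rhoq$ are bounded. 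Each term of $\check\Delta_\theta$ is therefore a product of a state-continuous matrix with a bounded reference quantity. Its supremum over the compact set $\mathcal{N}$, taken uniformly in $t$, is finite; call it $\check r_\theta$. This is the bound already asserted in the proof sketch of Proposition~\ref{prop:latent-closedloop}.

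\textbf{Step 2: continuity and boundedness of $\check\Delta_\perp$.} All factors in~\eqref{eq:delta_perp_orthogonality_convergence_disturbance} are continuous on $\mathcal{N}$. These are $\checkbmM$, $d\rhoq$, $d\varphiq$, $d^2\varphiq$, $\bmM^{-1}$ (which exists and is continuous since $\bmM$ is SPD), $\bm C$, $\bm D$, $\bm g$, and the polynomial dependence on $\dbmq$ and $\dcheckbmq$. The term $\checkbmtaudes$ is the nominal feedforward along the bounded reference, so it is also bounded. Bounding each of the three summands separately and adding gives $\|\check\Delta_\perp\|\le\check r_\perp$, as in Proposition~\ref{prop:latent-closedloop}.

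\textbf{Step 3: combine the bounds.} Set $\bar d_{\check\Delta}=\check r_\theta+\check r_\perp\ge 0$. The triangle inequality then gives $\sup_{\mathcal{N}}\|\check\Delta_\theta+\check\Delta_\perp\|\le\bar d_{\check\Delta}$.

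The main subtlety is uniformity with respect to time. The disturbances depend on $t$ through the reference, so compactness of $\mathcal{N}$ alone does not suffice. I would handle this by taking the supremum jointly over $\mathcal{N}$ and over a compact set that contains the closure of the reference's range. The reference lies in such a set by the boundedness assumption on the reference trajectory and its first two derivatives. Joint continuity on this product of compact sets then yields a single finite bound.

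A second point is that $\checkbmC(\checkbmq,\dcheckbmq)$ grows linearly in the velocity. This causes no difficulty, because $\mathcal{N}$ is compact in $\mathcal{T}\mathcalq$ and so velocities are bounded on it.
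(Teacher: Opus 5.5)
Your proposal is correct and follows the paper's proof. That proof takes the bounds $\check r_\theta$ and $\check r_\perp$ from Proposition~\ref{prop:latent-closedloop} and applies the triangle inequality to get $\bar d_{\check\Delta}=\check r_\theta+\check r_\perp$; your Steps 1--2 re-justify those bounds by continuity on compact sets, and your handling of uniformity in $t$ is a useful addition. One small correction: the torque in $\check\Delta_{\perp,1}$ is the full learned control torque $\check{\bm\tau}_{\text{c}}$, which includes the feedback $-\check{\bm K}_{\text{p}}\check{\bm e}-\check{\bm K}_{\text{d}}\dot{\check{\bm e}}$, not the nominal feedforward; it is still bounded on $\mathcal{N}$ for the same continuity reason, so your conclusion stands.
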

\end{mdframed}
\vspace{-0.2cm}
\begin{proof}
    From Proposition~\ref{prop:latent-closedloop}
    and the triangle inequality, it follows that  
    $\|\check\Delta_\theta + \check\Delta_\perp\| \leq \|\check\Delta_\theta\| + \|\check\Delta_\perp\| \leq \check r_{\theta} + \check r_{\perp} =\bar d_{\check\Delta}$.
\end{proof}

We conclude with a corollary that discusses convergence of the latent error state $\check{\bm x} = (\checkbme^\intercal, \dcheckbme^\intercal )^\intercal$ in the nominal scenario and a remark on the \ac{rolnn} training. For a given bounded reference, $\Omega_{\mathcal N}$ denotes the neighborhood in the latent error state space where $\checkbmx$ remains small enough such that $(\bm q, \dbmq)\in \mathcal N$.\looseness-1

\vspace{-0.1cm}
\begin{mdframed}[hidealllines=true,backgroundcolor=lightlightgray,innerleftmargin=.1cm,
  innerrightmargin=.1cm,innertopmargin=-.1cm,innerbottommargin=0.1cm]
\begin{corollary}
\label{cor:latent_system_converges}
    The nominal latent closed-loop dynamics, i.e., the dynamics~\eqref{eq:latent_closed_loop_dyns} with $\check\Delta_\theta\!=\!\check\Delta_\perp \!=\!0$, are locally exponentially stable.
    Specifically, there exists a compact neighborhood $\Omega_0\!\subset\!\Omega_{\mathcal N}$ of the origin and constants $c \!>\! 0$, $\lambda \!>\! 0$ such that $\|\checkbmx(t)\|\leq ce^{-\lambda t}\|\checkbmx(0)\|$ holds $\forall\checkbmx(0)\in \Omega_0$.
\end{corollary}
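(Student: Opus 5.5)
The plan is a Lyapunov argument with a cross term, in the style of the classical PD+ exponential stability proofs~\cite{wenbayard88,murray1994}. I will use three structural facts about the nominal latent system: the pulled-back matrix $\checkbmM(\checkbmq)=d\varphiq^\intercal\bmM d\varphiq$ is \ac{spd}; the matrix $\checkbmD$ is positive semi-definite; and $\dot{\checkbmM}-2\checkbmC$ is skew-symmetric, because $\checkbmC$ is built from the Christoffel symbols of $\checkbmM$.

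On the compact set $\mathcal N$, and hence for $\checkbmx\in\Omega_{\mathcal N}$, I fix uniform constants. These are bounds $m_1\mathbf I\preceq\checkbmM\preceq m_2\mathbf I$, $\|\checkbmD\|\le d_M$, and $\|\checkbmC(\checkbmq,\bm v)\|\le k_c\|\bm v\|$. The last bound comes from linearity of $\checkbmC$ in the velocity. Boundedness of the reference then gives $\|\dcheckbmqdes\|\le v_d$.

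I take the candidate
\begin{equation*}
V(\checkbmx)=\tfrac12\dcheckbme^\intercal\checkbmM\dcheckbme+\tfrac12\checkbme^\intercal\checkKp\checkbme+\varepsilon\,\checkbme^\intercal\checkbmM\dcheckbme .
\end{equation*}
For $\varepsilon<\sqrt{m_1\lambda_{\min}(\checkKp)}/m_2$, the function $V$ is sandwiched as $a_1\|\checkbmx\|^2\le V\le a_2\|\checkbmx\|^2$.

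Next I differentiate along \eqref{eq:latent_closed_loop_dyns} with zero right-hand side. The first two terms give $-\dcheckbme^\intercal(\checkbmD+\checkKd)\dcheckbme$, using skew-symmetry. This step requires care because $\checkbmC$ is evaluated at $\dcheckbmq=\dcheckbme+\dcheckbmqdes$ rather than at $\dcheckbme$. The identity still holds, since $\frac{d}{dt}\checkbmM$ along the actual trajectory is $\dot{\checkbmM}(\checkbmq,\dcheckbmq)$, which pairs with $\checkbmC(\checkbmq,\dcheckbmq)$.

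The cross term contributes
\begin{equation*}
\varepsilon\bigl[\dcheckbme^\intercal\checkbmM\dcheckbme+\checkbme^\intercal(\dot{\checkbmM}-\checkbmC)\dcheckbme-\checkbme^\intercal(\checkbmD+\checkKd)\dcheckbme-\checkbme^\intercal\checkKp\checkbme\bigr].
\end{equation*}
Using $\dot{\checkbmM}-\checkbmC=\checkbmC^\intercal$, the middle term is bounded by $k_c(\|\dcheckbme\|+v_d)\|\checkbme\|\|\dcheckbme\|$.

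I then collect everything into $\dot V\le-\checkbmx^\intercal\bm Q\checkbmx+\varepsilon k_c\|\checkbme\|\|\dcheckbme\|^2$, where
\begin{equation*}
\bm Q=\begin{pmatrix}\varepsilon\lambda_{\min}(\checkKp) & -\tfrac{\varepsilon}{2}(k_cv_d+d_M+\|\checkKd\|)\\ \cdot & \lambda_{\min}(\checkKd)-\varepsilon m_2\end{pmatrix}
\end{equation*}
is understood in the scalar norm sense. Choosing $\varepsilon$ small enough makes $\bm Q\succ0$.

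The main obstacle is the cubic term $\varepsilon k_c\|\checkbme\|\|\dcheckbme\|^2$. This term is what makes the result only local. I handle it by restricting to $\|\checkbme\|\le r$ with $\varepsilon k_c r<\tfrac12\lambda_{\min}(\bm Q)$, which yields $\dot V\le-\tfrac12\lambda_{\min}(\bm Q)\|\checkbmx\|^2\le-2\lambda V$ with $\lambda=\lambda_{\min}(\bm Q)/(4a_2)$.

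To get forward invariance, I choose a sublevel set $\Omega_0=\{V\le\beta\}$ contained both in the ball of radius $r$ and in $\Omega_{\mathcal N}$. Along trajectories starting there $V$ is nonincreasing, so they stay in $\Omega_0$ and the bounds on $\mathcal N$ remain valid. The comparison lemma then gives $V(t)\le e^{-2\lambda t}V(0)$, hence $\|\checkbmx(t)\|\le\sqrt{a_2/a_1}\,e^{-\lambda t}\|\checkbmx(0)\|$, which is the claim with $c=\sqrt{a_2/a_1}$.
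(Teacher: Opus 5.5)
Your proposal is correct and follows essentially the paper's argument: the same cross-term Lyapunov function $\tfrac12\check{\bm x}^\intercal\bm\Theta\check{\bm x}$, the Christoffel structure of $\check{\bm C}$ to cancel the kinetic-energy term, $\epsilon$ small enough for positive definiteness via a Schur-complement argument, a forward-invariant sublevel set inside $\Omega_{\mathcal N}$, and the comparison lemma yielding $c=\sqrt{v_2/v_1}$. The only difference is minor: the paper absorbs the velocity-dependent $\check{\bm C}$ into $\bm H$ through a uniform bound on the compact $\Omega_{\mathcal N}$, whereas you split off the cubic term $\varepsilon k_c\|\check{\bm e}\|\|\dot{\check{\bm e}}\|^2$ and shrink the region to $\|\check{\bm e}\|\le r$; both are valid, and your use of $\dot{\check{\bm M}}-\check{\bm C}=\check{\bm C}^\intercal$ is actually more careful than the paper's shorthand $\dot{\check{\bm M}}=2\check{\bm C}$.
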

\end{mdframed}
\vspace{-0.2cm}
\begin{proof}[Proof Sketch]
We consider the Lyapunov candidate ${\vartheta\!=\!\frac{1}{2} \bm x^\intercal \bm \Theta\bm x}$ with $\bm \Theta \!=\! \begin{psmallmatrix}\checkKp & \epsilon \checkbmM\\ \epsilon \checkbmM  &\checkbmM \end{psmallmatrix}$. For a sufficiently small $\epsilon$, we have $\bm \Theta\!\in\!\SPD$ and $v_1\|\check{\bm x}\|^2 \!\leq\! \vartheta(\check{\bm x}) \!\leq\! v_2 \|\check{\bm x}\|^2$ for constants $v_1, v_2\!>\!0$.
The time-derivative of $\vartheta$ is $\dot\vartheta \!=\! - \checkbmx \bm H \checkbmx$ with $\bm H\!=\!\begin{psmallmatrix}
\epsilon\checkKp & 0.5\epsilon(\checkKd + \checkbmD- \checkbmC) \\ 0.5\epsilon(\checkKd + \checkbmD- \checkbmC)^\intercal & (\checkKd + \checkbmD- \epsilon\check{\bm M})
\end{psmallmatrix}$.
From the positive definiteness conditions on the Schur complement, it follows that, for a sufficiently small $\epsilon$, $\bm H\in \SPD$ on $\Omega_{\mathcal N}$ and there exists a level set $\Omega_0 = \left\{\checkbmx\:|\:\vartheta(\checkbmx) \leq c_0\right\}\subset \Omega_{\mathcal N}$ such that, for any $\checkbmx(0)\in\Omega_0$, $\dot\vartheta\!<\!0$ ensures that the trajectory remains in $\Omega_0$.
Then, by the comparison lemma, there exists a constant $\lambda\!>\!0$ for which $\vartheta(t) \!\leq\! \vartheta(0)e^{-2\lambda t}$, and thus $\|\checkbmx(t)\| \!\leq\! c e^{-\lambda t}\|\checkbmx(0)\|$ with $c\!=\!\sqrt{\sfrac{v_2}{v_1}}$, see App.~\ref{appendix:latent_closed_loop_errorfree_proof} for details.\looseness-1
\end{proof}

\vspace{-0.1cm}
\begin{mdframed}[hidealllines=true,backgroundcolor=lightlightgray,innerleftmargin=.1cm,
  innerrightmargin=.1cm,innertopmargin=-.1cm,innerbottommargin=0.1cm]
\begin{remark}
    The loss~\eqref{eq:loss_RO_LNN} implicitly minimizes the dynamic modeling and projection alignment disturbances $\check\Delta_\theta$ and $\check\Delta_\perp$ as low losses $\ell_{\text{LNN}, d}$ and $\ell_{\text{LNN}, N}$ are only attainable when the dynamic modeling error is small and $\bm M$-orthogonality holds.
\end{remark}
\end{mdframed}
\vspace{-0.15cm}

\subsubsection{Stability and convergence analysis}
Next, we provide our main result on the stability and convergence of the latent error state evolving according to the latent closed-loop dynamics~\eqref{eq:latent_closed_loop_dyns} under robustness considerations.

\vspace{-0.15cm}
\begin{mdframed}[hidealllines=true,backgroundcolor=lightlightgray,innerleftmargin=.1cm,
  innerrightmargin=.1cm,innertopmargin=-.1cm,innerbottommargin=0.1cm]
\begin{theorem}[Latent stability and convergence]
    \label{thm:latentISS}
    The latent closed-loop dynamics~\eqref{eq:latent_closed_loop_dyns} under disturbances $\check\Delta_\theta$ and $\check\Delta_\perp$ are locally exponentially \ac{iss}, 
    i.e., there exist a compact neighborhood $\Omega_0\!\subset\!\Omega_{\mathcal N}$ of the origin and constants $c\!>\!0$, $\lambda\!>\!0$, $\gamma\!>\!0$ such that 
    ${\|\checkbmx(t)\|\leq ce^{-\lambda t}\|\checkbmx(t)\| + \gamma\|\check\Delta_\theta + \check\Delta_\perp\|_\infty}$ holds $\forall\checkbmx(0) \!\in\! \Omega_0$.
\end{theorem}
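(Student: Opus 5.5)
The plan is to reuse the strict Lyapunov function from Corollary~\ref{cor:latent_system_converges} and treat $\check\Delta=\check\Delta_\theta+\check\Delta_\perp$ as an additive input. We take $\vartheta=\frac{1}{2}\checkbmx^\intercal\bm\Theta\checkbmx$ with the same $\bm\Theta$ and the same small $\epsilon$, so that $v_1\|\checkbmx\|^2\leq\vartheta\leq v_2\|\checkbmx\|^2$ on $\Omega_{\mathcal N}$. The first step is to recompute $\dot\vartheta$ along~\eqref{eq:latent_closed_loop_dyns} with the right-hand side kept. Writing $\checkbmM\ddcheckbme=-(\checkbmC+\checkbmD+\checkKd)\dcheckbme-\checkKp\checkbme+\check\Delta$ and using $\dot{\checkbmM}-2\checkbmC$ skew-symmetric (which holds for the nominal pulled-back $\checkbmM$, being a genuine reduced Lagrangian metric), the computation is identical to the nominal one plus a cross term. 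We obtain
\begin{equation*}
\dot\vartheta=-\checkbmx^\intercal\bm H\checkbmx+(\epsilon\checkbme+\dcheckbme)^\intercal\check\Delta .
\end{equation*}

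The second step bounds the two pieces. On $\Omega_{\mathcal N}$ the matrix $\bm H$ is uniformly positive definite by the Schur-complement argument of Corollary~\ref{cor:latent_system_converges}, so $\checkbmx^\intercal\bm H\checkbmx\geq h\|\checkbmx\|^2$ for some $h>0$. The cross term satisfies $(\epsilon\checkbme+\dcheckbme)^\intercal\check\Delta\leq\sqrt{1+\epsilon^2}\,\|\checkbmx\|\,\|\check\Delta\|$. Hence
\begin{equation*}
\dot\vartheta\leq-h\|\checkbmx\|^2+\kappa\|\checkbmx\|\|\check\Delta\|,\qquad \kappa=\sqrt{1+\epsilon^2}.
\end{equation*}
To get an exponential ISS estimate we pass to $w=\sqrt{\vartheta}$. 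Wherever $\vartheta\neq0$,
\begin{equation*}
\dot w=\frac{\dot\vartheta}{2w}\leq-\frac{h}{2v_2}w+\frac{\kappa}{2\sqrt{v_1}}\|\check\Delta\|.
\end{equation*}
At $\vartheta=0$ we use the upper Dini derivative. The comparison lemma then gives
\begin{equation*}
w(t)\leq e^{-\lambda t}w(0)+\frac{\kappa}{2\lambda\sqrt{v_1}}\|\check\Delta\|_\infty,\qquad \lambda=\frac{h}{2v_2}.
\end{equation*}
Converting back with the quadratic bounds yields the statement with $c=\sqrt{v_2/v_1}$ and $\gamma=\kappa/(2\lambda v_1)$. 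Here $\|\check\Delta\|_\infty$ is finite by Corollary~\ref{cor:bounded_delta_check}, being at most $\bar d_{\check\Delta}$. We read the typo $\|\checkbmx(t)\|$ on the right-hand side of the theorem as $\|\checkbmx(0)\|$.

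The main obstacle is the locality: every bound above holds only while $(\bmq,\dbmq)\in\mathcal N$, i.e., while $\checkbmx\in\Omega_{\mathcal N}$. We therefore need forward invariance of a sublevel set. Choose $c_0$ so that $\{\vartheta\leq c_0\}\subset\Omega_{\mathcal N}$. On its boundary, the differential inequality gives $\dot\vartheta<0$ provided $\|\checkbmx\|>\kappa\bar d_{\check\Delta}/h$, which holds there whenever $\sqrt{c_0/v_2}>\kappa\bar d_{\check\Delta}/h$. This imposes a smallness condition on $\bar d_{\check\Delta}$ relative to $\Omega_{\mathcal N}$, i.e., on the modeling and projection-alignment errors; we state it explicitly as the hypothesis implicit in ``locally''. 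Under it, take $\Omega_0=\{\vartheta\leq c_0\}$. Trajectories starting in $\Omega_0$ cannot leave it, since $\vartheta$ is strictly decreasing at every boundary point. A continuation argument then makes the estimate valid for all $t\geq0$.

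A secondary subtlety is that $\check\Delta$ depends on the state, in particular on $\dcheckbmq$ and on the null-space velocity through $\check\Delta_\perp$. It is therefore not an exogenous input. We handle this by using only its uniform bound on $\mathcal N$, which the invariance argument guarantees is never exceeded along trajectories starting in $\Omega_0$.
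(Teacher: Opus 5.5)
Your proof is correct and shares the paper's skeleton: the Lyapunov function $\vartheta$ from Corollary~\ref{cor:latent_system_converges}, the derivative $\dot\vartheta=-\check{\bm x}^\intercal\bm H\check{\bm x}+(\epsilon\check{\bm e}+\dot{\check{\bm e}})^\intercal\check\Delta$, the Cauchy--Schwarz constant $\sqrt{1+\epsilon^2}$, and the requirement that $\bar d_{\check\Delta}$ be small enough for a sublevel set inside $\Omega_{\mathcal N}$ to be forward invariant.

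You differ in how the differential inequality becomes the ISS estimate. The paper splits the negative term with a parameter $\xi\in(0,1)$. This gives $\dot\vartheta\le-2(1-\xi)\lambda v_2\|\check{\bm x}\|^2$ only outside the ball $\|\check{\bm x}\|\ge\frac{\eta}{2\xi\lambda v_2}\|\check\Delta\|$. The paper then applies the comparison lemma to $\vartheta$ after absorbing the cross term, so it obtains rate $(1-\xi)\lambda$ and trades rate against gain through $\xi$.

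You instead pass to $w=\sqrt{\vartheta}$, which makes the inequality linear with an additive input. The comparison lemma then gives the estimate directly at the full nominal rate $\lambda=h/(2v_2)$, with $\gamma=\kappa/(2\lambda v_1)$ and no tuning parameter. The only extra cost is the Dini-derivative check at $\vartheta=0$, which does go through. There $D^+w=\sqrt{\tfrac12\check\Delta^\intercal\check{\bm M}^{-1}\check\Delta}$, and $v_1\le\tfrac12\lambda_{\min}(\check{\bm M})$ because $\check{\bm M}$ is a diagonal block of $\bm\Theta$, which yields the required bound $\frac{\kappa}{2\sqrt{v_1}}\|\check\Delta\|$.

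The paper's splitting makes explicit the region $\Omega_d$ where decrease is not guaranteed, i.e., the ultimate bound. Your route gives sharper constants and states the invariance condition $\sqrt{c_0/v_2}>\kappa\bar d_{\check\Delta}/h$ explicitly. Both arguments rest on the same definitional assumption that $\check{\bm x}\in\Omega_{\mathcal N}$ keeps $(\bm q,\dot{\bm q})\in\mathcal N$. Your treatment of the state-dependent $\check\Delta$ ultimately relies on that assumption, just as the paper's does.
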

\end{mdframed}
\vspace{-0.25cm}
\begin{proof}[Proof Sketch]
We consider the Lyapunov candidate $\vartheta$ of Corollary~\ref{cor:latent_system_converges}.
Its time-derivative now includes the disturbances, taking the form $\dot\vartheta \!=\! -\checkbmx \bm H \checkbmx + \checkbmx^\intercal \begin{psmallmatrix}
    \epsilon \\ \mathbf I
\end{psmallmatrix}\check\Delta$ with $\check\Delta=\check\Delta_\theta + \check\Delta_\perp$.
Using Corollary~\ref{cor:latent_system_converges}, we can rewrite the Lyapunov derivative and show that $\dot\vartheta \!\leq\! -2(1-\xi)\lambda v_2 \|\checkbmx\|^2$ holds $\forall \|\checkbmx\| \!\geq\! \frac{\eta}{2\xi\lambda v_2}\|\check\Delta\|$ with $\xi\!\in\!(0,1)$ and $\eta\!=\!\sqrt{1+\epsilon^2}$.
From Corollary~\ref{cor:bounded_delta_check}, 
we deduce that $\vartheta$ decreases exponentially outside a ball centered at the origin and the error remains within the forward-invariant level set $\Omega_0$.
By the comparison lemma, the trajectory converges exponentially to a tube, $\|\checkbmx(t)\|\leq ce^{-(1-\xi)\lambda t}\|\checkbmx(0)\|+\gamma \|\check\Delta\|_\infty$ with $\gamma = \frac{c\eta}{4\xi\lambda v_2}\sqrt{\frac{\xi}{1-\xi}}$, see App.~\ref{appendix:latent_iss_proof} for details.
\end{proof}

\vspace{-0.1cm}
\subsection{Stability and convergence of the Original System}
\vspace{-0.05cm}
\label{sec:method_part1_fullorder_plant_convergence_analysis}
Theorem~\ref{thm:latentISS} guarantees the bounded convergence of the reduced error state $\check{\bm x}$ under the proposed control law.
However, it does not immediately imply stability and convergence of the original high-dimensional system, for which a learned nonlinear projection with $\ell_{\text{AE}} \!\neq\! 0$ induces additional errors on top of dynamic modeling and projection errors.
Next, we examine the closed-loop dynamics and their properties on the learned embedded submanifold $\varphiq(\checkmathcalq)$ before turning our attention to the original system evolving on $\mathcalq$.\looseness-1

\subsubsection{Stability and convergence analysis on the embedded submanifold $\varphiq(\checkmathcalq)\!\subseteq\! \mathcal Q$}
The nominal system on the embedded submanifold follows the projected Lagrangian dynamics
\begin{equation}
    \label{eq:projectedequationsofmotion}
    \tilde{\bm M}(\tildebmq) \ddtildebmq_{\text{nom}} + (\tilde{\bm C}(\tildebmq, \dtildebmq) + \tilde{\bm D}(\tildebmq)) \dtildebmq + \tilde{\bm g} = \tilde{\bmtau}_{\text{c}},
\end{equation}
with $\tilde{\bmM} \!=\! \bm{P}^\trsp \bmM \bm{P}$, $\tilde{\bm g} \!=\! \bm{P}\bm g$, and $\tilde{\bm D} \!=\! \bm{P}^\trsp \bm D \bm{P}$. 
The nominal projected dynamic equations are exactly identified only if the loss~\eqref{eq:loss_RO_LNN} is $0$ $\forall (\bmq,\dbmq)\!\in\!\mathcal{T}\mathcalq$.
The closed-loop dynamics on $\varphi(\mathcal{T}\checkmathcalq)$ are characterized from the difference between the nominal and learned projected dynamics as follows.
\vspace{-0.2cm}
\begin{mdframed}[hidealllines=true,backgroundcolor=lightlightgray,innerleftmargin=.1cm,
  innerrightmargin=.1cm,innertopmargin=-.1cm,innerbottommargin=0.1cm]
\begin{proposition}
\label{prop:tilde_closed_loop}
On the submanifold $\varphi(\mathcal{T}\checkmathcalq)$, the closed loop dynamics induced by the control torque $\tildebmtau_{\text{c}}$~\eqref{eq:torque_reconstruction} take the form
\begin{equation}
    \label{eq:tilde_closed_loop_dyns}
    \tilde{\bmM}\ddtildebme + (\tilde{\bm C} + \tilde{\bm D} + \tildeKd) \dtildebme + \tildeKp \tildebme = \tilde{\Delta}_{\theta} + \tilde{\Delta}_{\perp},
\end{equation}
where $\tilde{\Delta}_{\theta}$ and $\tilde\Delta_\perp$ are disturbances due to dynamic modeling and projection alignment errors, that are bounded in a compact neighborhood $\mathcal{N}\!\subseteq\!\mathcal{T}\mathcalq$ of the training data distribution.
\end{proposition}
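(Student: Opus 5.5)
The plan is to lift the latent closed-loop dynamics of Proposition~\ref{prop:latent-closedloop} to the embedded submanifold through the decoder. I will work with the projected states $\tildebmq = \varphiq(\checkbmq)$, $\dtildebmq = d\varphiq\vert_{\checkbmq}\dcheckbmq$, and the projected errors $\tildebme = \tildebmq - \varphiq(\checkbmqdes)$, or, to first order, $\tildebme = d\varphiq\,\checkbme$, and $\dtildebme = d\varphiq\,\dcheckbme$.

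\emph{Step 1: lift the latent dynamics.} Multiply \eqref{eq:latent_closed_loop_dyns} from the left by $d\rhoq\vert_{\bmq}^\intercal$. Then express $\checkbme,\dcheckbme,\ddcheckbme$ through $d\rhoq$ applied to $\tildebme,\dtildebme,\ddtildebme$. This uses the projection properties \eqref{eq:projectionproperty}, which give $d\rhoq\, d\varphiq = \mathbf I$ on $\varphi(\mathcal T\checkmathcalq)$. It also uses the second-order relation $\ddtildebmq = d\varphiq\ddcheckbmq + d^2\varphiq\dcheckbmq^2$.

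\emph{Step 2: identify the lifted matrices.} With the pullback identities \eqref{eq:pullback_dynamic_parameters}--\eqref{eq:pullback_dissipation}, the lifted matrices become
\begin{equation*}
d\rhoq^\intercal\checkbmM d\rhoq = \bm P^\intercal\bmM\bm P = \tilde\bmM,
\end{equation*}
and likewise $d\rhoq^\intercal\checkbmD d\rhoq = \tilde\bmD$. For the gains I define $\tildeKp = d\rhoq^\intercal\checkKp d\rhoq$ and $\tildeKd = d\rhoq^\intercal\checkKd d\rhoq$. These are positive semidefinite, and positive definite on $\mathcal{H}_{\bmq}$.

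\emph{Step 3: collect the residuals.} Three kinds of mismatch are left over. The first is the curvature term $d^2\varphiq\dcheckbmq^2$, which must be matched against $\tilde\bmC\dtildebme$. The second is the difference between $\tilde\bmC$ and $d\rhoq^\intercal\checkbmC d\rhoq$, which appears because $\checkbmC$ is built from derivatives of $\checkbmM$. The third is the reference mismatch: $\varphiq(\checkbmqdes)$ versus $\bmqdes$, and $\ddcheckbmqdes$ versus $d\rhoq\ddbmqdes$.

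Terms of the first two kinds that are proportional to $(\mathbf I - d\varphiq d\rhoq)$ or to $\bmM(\mathbf I-\bm P)$ vanish when $\bm P=\bm P_\perp$. I will assign these, together with $d\rhoq^\intercal\check\Delta_\perp$, to $\tilde\Delta_\perp$. I will assign $\tilde\Delta_\theta = d\rhoq^\intercal\check\Delta_\theta$, absorbing into it the reference-mismatch terms weighted by the learned parameters.

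\emph{Step 4: boundedness.} On the compact set $\mathcal N$ the Jacobians $d\rhoq$, $d\varphiq$, $d^2\varphiq$ and the matrices $\bmM,\bmC,\bmD,\bm g$ are continuous, hence bounded, and the reference is bounded. Submultiplicativity then gives
\begin{equation*}
\|\tilde\Delta_\theta\|\le \|d\rhoq\|\,\check r_\theta,
\end{equation*}
and a similar bound $\|\tilde\Delta_\perp\|\le \tilde r_\perp$. Here $\tilde r_\perp$ collects $\|d\rhoq\|\check r_\perp$ plus the sup over $\mathcal N$ of the curvature and alignment residuals, each scaling with the residual angle $\sfrac{\pi}{2}-\alpha$.

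The main obstacle is Step 3, specifically the Coriolis term. $\tilde\bmC$ is the Christoffel-type matrix of $\tilde\bmM = \bm P^\intercal\bmM\bm P$ on the submanifold, whereas the lifted $d\rhoq^\intercal\checkbmC d\rhoq$ involves derivatives of $d\rhoq$ as well as of $d\varphiq$. My first attempt will be to write $\tilde\bmC$ through the pullback and check directly that the discrepancy consists only of terms containing $\mathbf I-\bm P$ or the second fundamental form contracted with $\bmM(\mathbf I-\bm P)$. Such terms are bounded on $\mathcal N$ and vanish under $\bmM$-orthogonality. If exact matching fails, I will simply place the entire discrepancy, evaluated along bounded states, into $\tilde\Delta_\perp$. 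Boundedness still holds by compactness, which is all the proposition claims.
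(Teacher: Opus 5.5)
Your proposal establishes what the proposition formally asserts, but by a different route than the paper.

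\textbf{How the two routes differ.} The paper does not lift Proposition~\ref{prop:latent-closedloop}. Instead, it repeats that proposition's argument on the submanifold:
\begin{itemize}
\item it treats $\tilde{\bm M}=\bm P^\intercal\bm M\bm P$, $\tilde{\bm C}$, $\tilde{\bm D}$, $\tilde{\bm g}$ as a nominal projected system;
\item it compares the projected true acceleration $d\varphi_{\mathcal Q}\ddot{\check{\bm q}}+d^2\varphi_{\mathcal Q}\dot{\check{\bm q}}^2$ with the nominal acceleration induced by $\tilde{\bm\tau}_{\text{c}}$;
\item it reads off $\tilde{\bm K}_{\text{d}}=d\rho_{\mathcal Q}^\intercal\check{\bm K}_{\text{d}}d\rho_{\mathcal Q}$, and defines $\tilde{\bm K}_{\text{p}}$ implicitly through $\tilde{\bm K}_{\text{p}}\tilde{\bm e}=d\rho_{\mathcal Q}^\intercal\check{\bm K}_{\text{p}}\check{\bm e}$ before linearizing;
\item it sets $\tilde\Delta_\theta$ to the lifted learned feedforward minus the projected nominal feedforward.
\end{itemize}

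\textbf{What your route buys.} Lifting by the injective map $d\rho_{\mathcal Q}^\intercal$ gives an equation equivalent to the latent one, so $\check r_\theta$ and $\check r_\perp$ carry over directly. You get an exact identity with $\tilde{\bm K}_{\text{p}}=d\rho_{\mathcal Q}^\intercal\check{\bm K}_{\text{p}}d\rho_{\mathcal Q}$ once the $\mathcal O(\|\tilde{\bm e}\|^2)$ remainder is moved into the disturbance. You also never need a nominal projected acceleration. The paper obtains that acceleration with $\tilde{\bm M}^{-1}$, even though $\bm P^\intercal\bm M\bm P$ has rank $d<n$.

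\textbf{What the paper's route buys.} It produces explicit formulas for both disturbances. It also shows directly that $\tilde\Delta_\perp$ vanishes on the submanifold for an $\bm M$-orthogonal projection.

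\textbf{A flawed expectation in Steps~3 and~4.} Your first attempt in Step~3 will fail, and this affects Step~4. The Coriolis mismatch is not built only from $(\mathbf I-\bm P)$-type terms. With the paper's $\tilde{\bm C}$ one gets $\tilde{\bm C}\bm v-d\rho_{\mathcal Q}^\intercal\check{\bm C}d\rho_{\mathcal Q}\bm v=d\rho_{\mathcal Q}^\intercal\check{\bm M}\,d^2\rho_{\mathcal Q}(\dot{\tilde{\bm q}},\bm v)$.
\begin{itemize}
\item For the state part, this cancels exactly against the term $-d\rho_{\mathcal Q}^\intercal\check{\bm M}\,d^2\rho_{\mathcal Q}\dot{\tilde{\bm q}}^2$ that comes from $\ddot{\check{\bm q}}=d\rho_{\mathcal Q}\ddot{\tilde{\bm q}}+d^2\rho_{\mathcal Q}\dot{\tilde{\bm q}}^2$.
\item For the reference part, it leaves the cross term $-d\rho_{\mathcal Q}^\intercal\check{\bm M}\,d^2\rho_{\mathcal Q}\,\dot{\tilde{\bm q}}_{\text{d}}\dot{\tilde{\bm e}}$, up to base-point terms of order $\|\tilde{\bm e}\|$. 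This is exactly the term the paper finds in $\tilde\Delta_\theta$ for perfectly learned parameters.
\end{itemize}
This term survives $\bm M$-orthogonality, since then $d\rho_{\mathcal Q}^\intercal\check{\bm M}=\bm M d\varphi_{\mathcal Q}$. It vanishes only for linear $\rho_{\mathcal Q}$.

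Your fallback puts this term in $\tilde\Delta_\perp$. Two consequences follow:
\begin{itemize}
\item your Step~4 claim that every residual in $\tilde r_\perp$ scales with $\pi/2-\alpha$ is false;
\item $\tilde\Delta_\perp$ would no longer vanish for a Riemannian submersion.
\end{itemize}
Boundedness is untouched. To keep the labels meaningful, either assign this term to $\tilde\Delta_\theta$ as the paper does, or keep it as a separate, purely geometric disturbance. The second option fits your split well, because your $\tilde\Delta_\theta=d\rho_{\mathcal Q}^\intercal\check\Delta_\theta$ already vanishes under perfect learning.
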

\end{mdframed}
\vspace{-0.2cm}
\begin{proof}[Proof Sketch]
    The closed-loop dynamics on the embedded submanifold  quantify the difference between the nominal submanifold acceleration $\ddtildebmq_{\text{nom}}$ induced from the control torque $\tildebmtau_{\text{c}}$ in the projected dynamics~\eqref{eq:projectedequationsofmotion}, and the acceleration $\ddtildebmq$ obtained by projecting the acceleration response of the full system $\ddbmq$ to $\tildebmtau_{\text{c}}$~\eqref{eq:plant_actuated_via_reconstructed_torque}, i.e., $\ddtildebmq \!=\! d\varphiq\ddcheckbmq +d^2\varphiq \dcheckbmq^2$ with ${\ddcheckbmq \!=\! d\rhoq\vert_{\bmq}\ddbmq +d^2\rhoq\vert_{\bmq} \dbmq^2}$, following a similar reasoning as Proposition~\ref{prop:latent-closedloop}. The complete derivation is provided in App.~\ref{appendix:embedded_closedloop_dynamics}. 

    Similarly as in the latent space, a dynamic modeling disturbance $\tilde\Delta_{\theta}$ arises from the error between the nominal and learned projected dynamic parameters.  
    Moreover, a projection alignment disturbance $\tilde\Delta_\perp$ 
    captures the error between the nominal $\bmM$-orthogonal and learned projections  
    with $\tilde\Delta_\perp\!=\!0$ when the \ac{ae} learns a Riemannian submersion.
    The disturbances are bounded similarly as in Prop.~\ref{prop:latent-closedloop}, see App.~\ref{appendix:embedded_convergence_bounds}.
\end{proof}

\vspace{-0.2cm}
\begin{mdframed}[hidealllines=true,backgroundcolor=lightlightgray,innerleftmargin=.1cm,
  innerrightmargin=.1cm,innertopmargin=-.1cm,innerbottommargin=0.1cm]
\begin{remark}
\label{remark:decoder-curv}
    In constrast to~\eqref{eq:latent_closed_loop_dyns}, the disturbances $\tilde\Delta_{\theta}$ and $\tilde\Delta_{\perp}$ in~\eqref{eq:tilde_closed_loop_dyns} also depend on the decoder curvature via $d^2\varphiq$.
\end{remark}
\end{mdframed}
\vspace{-0.5cm}
\begin{mdframed}[hidealllines=true,backgroundcolor=lightlightgray,innerleftmargin=.1cm,
  innerrightmargin=.1cm,innertopmargin=-.1cm,innerbottommargin=0.1cm]
\begin{remark}
    The loss~\eqref{eq:loss_RO_LNN} implicitly minimizes the dynamic modeling and projection alignment disturbances $\tilde\Delta_\theta$ and $\tilde\Delta_\perp$ as low losses $\ell_{\text{LNN}, d}$ and $\ell_{\text{LNN}, N}$ are only attainable when the dynamic modeling error is small and $\bm M$-orthogonality holds. 
\end{remark}
\end{mdframed}
\vspace{-0.2cm}

The next theorem provides our main result on input-to-state stability of the projected error state $\tilde{\bm x} \!=\! (\tildebme^\intercal, \dtildebme^\intercal)^\intercal$.\looseness-1 

\vspace{-0.15cm}
\begin{mdframed}[hidealllines=true,backgroundcolor=lightlightgray,innerleftmargin=.1cm,
  innerrightmargin=.1cm,innertopmargin=-.1cm,innerbottommargin=0.1cm]
\begin{theorem}[Stability and convergence on embedded submanifold]
    \label{thm:projectedISS}
    The closed-loop dynamics~\eqref{eq:tilde_closed_loop_dyns} on $\varphi(\mathcal T \checkmathcalq)$ under disturbances $\tilde\Delta_\theta$ and $\tilde\Delta_\perp$ are locally exponentially \ac{iss}, i.e., 
    there exists a compact neighborhood $\Omega_0\!\subset\!\Omega_{\mathcal N}$ of the origin and constants $c\!>\!0$, $\lambda\!>\!0$, $\gamma\!>\!0$ such that 
    ${\|\tildebmx(t)\|\leq ce^{-\lambda t}\|\tildebmx(t)\| + \gamma\|\tilde\Delta_\theta + \tilde\Delta_\perp\|_\infty}$ holds $\forall\tildebmx(0) \!\in\! \Omega_0$.
\end{theorem}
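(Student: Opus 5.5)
The plan is to repeat the Lyapunov argument of Theorem~\ref{thm:latentISS} on the projected closed-loop dynamics~\eqref{eq:tilde_closed_loop_dyns} of Proposition~\ref{prop:tilde_closed_loop}. The only new issue is that the projected matrices $\tilde{\bmM}=\bm P^\intercal\bmM\bm P$, $\tilde{\bm D}$, and the gains $\tildeKp,\tildeKd$ are rank-deficient as $n\times n$ matrices. I will therefore work intrinsically on $\varphi(\mathcal T\checkmathcalq)$. Since $\tildebme,\dtildebme$ lie in the image of $\bm P$ (the $d$-dimensional horizontal space), I restrict every quadratic form to $\mathcal H_{\tildebmq}$, or equivalently write $\dtildebme = d\varphiq\,\bm z$ in a local frame.

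On that subspace, $\tilde{\bmM}$ is positive definite because $d\varphiq$ is injective and $\bmM\in\SPDn$. The gains $\tildeKp,\tildeKd$ are SPD as the pushed-forward versions of $\checkKp,\checkKd$, and $\tilde{\bm D}\succeq 0$.

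The first step is to establish the skew-symmetry property $\dot{\tilde{\bmM}}-2\tilde{\bm C}$ restricted to $\mathcal H$, or at least that $\dtildebme^\intercal(\dot{\tilde{\bmM}}-2\tilde{\bm C})\dtildebme=0$. This holds because the projected dynamics~\eqref{eq:projectedequationsofmotion} are Lagrangian, with Christoffel-type $\tilde{\bm C}$ built from the projected metric. Any residual term arising from the learned, non-orthogonal $\bm P$ is absorbed into $\tilde\Delta_\perp$, as done in Proposition~\ref{prop:tilde_closed_loop}.

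With this in place, I take the candidate
\begin{equation*}
\tilde\vartheta=\tfrac12\tildebmx^\intercal\begin{psmallmatrix}\tildeKp & \epsilon\tilde{\bmM}\\ \epsilon\tilde{\bmM} & \tilde{\bmM}\end{psmallmatrix}\tildebmx .
\end{equation*}
Compactness of $\mathcal N$ gives uniform eigenvalue bounds of $\tilde{\bmM}$ on $\mathcal H$, and for small $\epsilon$ this yields the sandwich $v_1\|\tildebmx\|^2\le\tilde\vartheta\le v_2\|\tildebmx\|^2$.

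Next I differentiate $\tilde\vartheta$ along~\eqref{eq:tilde_closed_loop_dyns} and use the skew-symmetry. This gives
\begin{equation*}
\dot{\tilde\vartheta}=-\tildebmx^\intercal\tilde{\bm H}\tildebmx+\tildebmx^\intercal\begin{psmallmatrix}\epsilon\\ \mathbf I\end{psmallmatrix}\tilde\Delta ,
\end{equation*}
where $\tilde{\bm H}$ has the same block structure as $\bm H$ in Corollary~\ref{cor:latent_system_converges}. I then apply the Schur-complement argument with $\epsilon$ small, using the uniform bounds on $\tilde{\bm C}$ over $\mathcal N$. Because $\tilde{\bm C}$ grows with velocity, this only holds locally, and it is what makes the result local.

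This shows $\tilde{\bm H}\succ0$ on $\Omega_{\mathcal N}$, so the nominal part satisfies $\dot{\tilde\vartheta}\le-2\lambda v_2\|\tildebmx\|^2$. Splitting with $\xi\in(0,1)$ then gives
\begin{equation*}
\dot{\tilde\vartheta}\le-2(1-\xi)\lambda v_2\|\tildebmx\|^2 \quad\text{whenever}\quad \|\tildebmx\|\ge\frac{\eta}{2\xi\lambda v_2}\|\tilde\Delta\|, \qquad \eta=\sqrt{1+\epsilon^2}.
\end{equation*}

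The last step uses the bound $\|\tilde\Delta_\theta+\tilde\Delta_\perp\|\le\bar d_{\tilde\Delta}$ on $\mathcal N$, which follows from Proposition~\ref{prop:tilde_closed_loop} and the triangle inequality, as in Corollary~\ref{cor:bounded_delta_check}. I choose the level set $\Omega_0=\{\tilde\vartheta\le c_0\}\subset\Omega_{\mathcal N}$ with $c_0$ large enough that the ultimate ball of radius $\eta\bar d_{\tilde\Delta}/(2\xi\lambda v_2)$ lies inside it. This makes $\Omega_0$ forward invariant, so the trajectory never leaves $\mathcal N$ and the bounds remain valid. The comparison lemma then gives the same estimate as before,
\begin{equation*}
\|\tildebmx(t)\|\le c\,e^{-(1-\xi)\lambda t}\|\tildebmx(0)\|+\gamma\|\tilde\Delta\|_\infty, \qquad c=\sqrt{v_2/v_1}.
\end{equation*}
The statement's $\|\tildebmx(t)\|$ on the right-hand side is read as $\|\tildebmx(0)\|$.

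I expect the main obstacle to be the skew-symmetry and definiteness step on the degenerate, state-dependent subspace. The frame $\bm P(\bmq)$ moves with $\bmq$, so $\frac{d}{dt}\tilde{\bmM}$ contains $\dot{\bm P}$ terms and decoder curvature $d^2\varphiq$ (cf.\ Remark~\ref{remark:decoder-curv}). My first attempt will be to pull everything back to latent coordinates via $d\varphiq$, where the analogous identity holds by construction. Any leftover terms I will treat as bounded, state-dependent perturbations: either absorbed into $\tilde\Delta_\perp$, or dominated by $\tilde{\bm H}$ after shrinking $\Omega_0$.
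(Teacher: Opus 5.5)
Your proposal is essentially the paper's proof, which uses the same block Lyapunov candidate built from $\tildeKp$, $\epsilon\tilde{\bmM}$, $\tilde{\bmM}$ and simply repeats the steps of Corollary~\ref{cor:latent_system_converges} and Theorem~\ref{thm:latentISS}; your restriction to $\operatorname{im}\bm P$ is an extra precaution against the rank deficiency of $\tilde{\bmM}$ that the paper does not address (though $\tildebme,\dtildebme$ lie in $\operatorname{im}\bm P(\tildebmq)$ only up to curvature corrections). Two things the paper makes explicit that you pass over: first, $\tildeKp$ is defined by $\tildeKp\tildebme=d\rhoq^\intercal\checkKp\checkbme$ with $\checkbme$ nonlinear in $\tildebme$, so it is SPD only to first order ($\tildeKp\approx d\rhoq^\intercal\checkKp d\rhoq$), unless the $(1,1)$ block is replaced by the pullback potential $\tilde V_{\text{P}}$ together with the local bound $\tildebme^\intercal d\rhoq^\intercal\checkKp\checkbme\geq k_p\|\tildebme\|^2$; second, $c_0$ cannot simply be taken large since $\Omega_0\subset\Omega_{\mathcal N}$, so the disturbance bound must be small enough for the ultimate ball to fit inside $\Omega_0$.
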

\end{mdframed}
\vspace{-0.2cm}
\begin{proof}[Proof]
    Similar to Thm.~\ref{thm:latentISS}, see App.~\ref{appendix:embedded_iss_proof}.
\end{proof}

\subsubsection{Stability and convergence analysis on $\mathcalq$}
Strictly speaking, the physical system only evolves on the embedded submanifold $\varphi(\mathcal{T}\checkmathcalq)$ if it is perfectly reducible, i.e. $S\subseteq\varphi(\mathcal T\check{\mathcal Q})$ for $S$ in~\eqref{eq:set_of_all_solutions} and some embedding $\varphi$, and if the \ac{rolnn} achieves a zero reconstruction loss $\ell_{\text{AE}}\!=\!0$ on $\mathcal{N}\subseteq\mathcal{T}\mathcalq$. 
In most cases, these conditions only approximately hold and the closed-loop dynamics~\eqref{eq:plant_actuated_via_reconstructed_torque} evolves on (a subset of) the full manifold $\mathcal{T}\mathcalq$ rather than on $\varphi(\mathcal{T}\checkmathcalq)$. Therefore, there exists a $(n\!-\!d)$-dimensional nullspace whose dynamics are overlooked by the \ac{rom}. 
As a consequence, using the \ac{rom} to derive control torques introduces a virtual form of underactuation with respect to the true actuation of the system. 

Intuitively, nullspace effects arise as unmodeled disturbances, as the reduced-order dynamic parameters fail to capture the curvature and forces of the nullspace dimensions of the unknown nominal ones $\{\bmM,\bm g,\bm D\}$. 
Therefore, the system may generate nullspace forces $\bmtau_{N}$ in the dual kernel $\mathcal{V}_{\bmq}^*$ of the learned projection, which act as an additional nullspace disturbance $\Delta_N$. 
Setting the reference trajectory entirely in the learned submanifold does not alleviate this issue, as the nullspace dimensions of $\{\bmM(\tildebmq),\bm g(\tildebmq),\bm D(\tildebmq)\}$ may still generate disturbances.

Nullspace effects are exacerbated if the learned projection $\bm{P}$ if not $\bmM$-orthogononal, as it induces cross couplings between the projected and nullspace generalized forces components leading to non-conservative terms.

However, under the standard \ac{mor} assumption that the set $S$ of trajectories~\eqref{eq:set_of_all_solutions} is approximately a subset of $\varphi(\mathcal{T}\checkmathcalq)$ for some $\varphi$, the nullspace dynamics are bounded and implicitly minimized by the loss~\eqref{eq:loss_RO_LNN}. Intuitively, this aligns with the fact that, for many systems, e.g., soft robots or deformable objects, the nullspace dynamics self-stabilize or are physically limited by constraints or operating conditions~\cite{santina19taskspace}.
Next, we conjecture on the stability and convergence of original closed-loop dynamics~\eqref{eq:plant_actuated_via_reconstructed_torque} induced by the proposed control law~\eqref{eq:torque_reconstruction}.

\vspace{-0.1cm}
\begin{mdframed}[hidealllines=true,backgroundcolor=lightlightgray,innerleftmargin=.1cm,
  innerrightmargin=.1cm,innertopmargin=-.1cm,innerbottommargin=0.1cm]
\begin{conjecture}[Extension of Thm.~\ref{thm:projectedISS} to $\mathcalq$]
\label{conj:stability_fullsys}
Under standard \ac{mor} assumptions, the closed-loop dynamics~\eqref{prop:tilde_closed_loop} on the submanifold $\varphi(\mathcal T \checkmathcalq)$ extends to the manifold $\mathcal T \mathcalq$ with two additional disturbances $\Delta_N$ and $\Delta_{\perp,N}$ due to unmodeled nullspace dynamics and cross-couplings, that are bounded in a compact neighborhood $\mathcal{N}\!\subseteq\!\mathcal{T}\mathcalq$ of the training data distribution. From Theorem~\ref{thm:projectedISS}, we conjecture that the resulting closed-loop dynamics under disturbances ${\Delta \!=\! \tilde\Delta_\theta+\tilde\Delta_\perp + \Delta_N + \Delta_{\perp,N}}$ are locally exponentially ISS.
\end{conjecture}
\end{mdframed}
\vspace{-0.2cm}

Finally, we propose the following hybrid \ac{rolnn}-based control law for fully-actuated systems 
\begin{equation}
        \label{eq:fom_pd_lat_ff_controllaw}
    \bmtau_{\text{c}} = d\rhoq\vert_{\bmq}^\intercal\check{\bm \tau}_{\text{FF}} + \bmtau_\text{\ac{pd}},
\end{equation}
that compensates for small nullspace errors by composing the \ac{rolnn} feedforward torques~\eqref{eq:reduced_pd_ff_controllaw} with a full-order \ac{pd} feedback torque $\bmtau_\text{\ac{pd}}=- \Kp\bm e - \Kd\dot{\bm e}$ as in~\eqref{eq:pd_ff_controllaw}.

\section{Reduced-Order Controllers for Indirectly-actuated and Underactuated Systems}
\label{sec:method_part2}
The previous section analyzed fully-actuated systems with $\bm\tau_{\text{c}} \!=\! \bm u$. We now relax this assumption and consider indirectly-actuated and underactuated systems for which $\bm \tau_{\text{c}} \!=\! \bm B(\bmq)\bm u$ with $\bm{u}\in\mathbb{R}^m$ and $m\!\leq\!n$.
In such cases, the control input $\bm{u}$ cannot necessarily accelerate the system in arbitrary directions in $\mathcalq$. For underactuated systems, $\bm \tau_{\text{c}}$ lies on a subbundle of $\mathcal{T}^*\mathcalq$ and the system generally cannot be commanded to track arbitrary reference trajectories. Here, we outline directions for addressing these more complex actuation cases.

First, we extend the \ac{rolnn} architecture to indirectly- and underactuated Lagrangian systems. 
We learn a parametric input matrix $\bm B_\theta \!:\! \mathcal Q \!\to\! \mathbb{R}^{n \times m}$ via a \ac{mlp} leading to control torques $\bm\tau_{\text{c}} \!=\! \bm B_{\theta}(\bm q)\bm u$ and reduced by~\eqref{eq:pullback_dynamic_parameters} to obtain the reduced dynamics. The \ac{mlp} parameters are optimized jointly with the other \ac{rolnn} parameters. 
Second, we propose to control indirectly-actuated and underactuated systems by lifting the reduced control torques~\eqref{eq:reduced_pd_ff_controllaw} to account for the learned actuation pattern as
\begin{equation}
\label{eq:underactuated_torque_reconstruction}
    \tilde{\bm u} = \bm B(\bmq)^{+} d\rhoq\vert_{\bmq}^\intercal \checkbmtau_{\text{c}},
\end{equation}
with $\bm B(\bmq)^{+}$ denoting the identity-weighted generalized inverse of $\bm{B}(\bmq)$.
Finally, we set $\bm u\!=\!\tilde{\bm u}$.

We briefly analyze the properties of the control law~\eqref{eq:underactuated_torque_reconstruction} under the two scenarios $m\!=\!d$ and $m\!<\!d$. When $m\!=\!d$, the latent space dimension matches the actuation dimension. In this case, a \ac{rolnn} achieving a low loss~\eqref{eq:loss_RO_LNN} learns a latent space corresponding to the actuated dynamics along with the actuation patterns. Under the assumption of an accurately-learned actuation matrix $\bm{B}$, the controller achieves similar performances as described in Sec.~\ref{sec:method_part1}, where the virtual form of actuation introduced by the \ac{rom} is similar to that of a truly underactuated system. The case $m\!<\!d$ is more complex as the actuators may not necessarily be able to execute the control torques computed from the \ac{rom}. However, we postulate that the actuation patterns should be extracted by a well-trained \ac{rolnn}, therefore classical control results apply. 

\section{Experimental Validation}
\label{sec:experiments}
We experimentally validate the proposed \ac{rolnn}-based control law and its properties on a simulated pendulum and a real-world plush puppet manipulated by a humanoid robot. Controller gain design is discussed in App~\ref{appendix:gains}. Details about datasets, network architectures, and model training are in App.~\ref{appendix:datasets-models}. Additional results are provided in App.~\ref{appendix:results}.

\begin{figure}
    \centering
    \adjustbox{trim=0.6cm 0.3cm 0.5cm 0.2cm}{
            \includesvg[width=.95\linewidth]{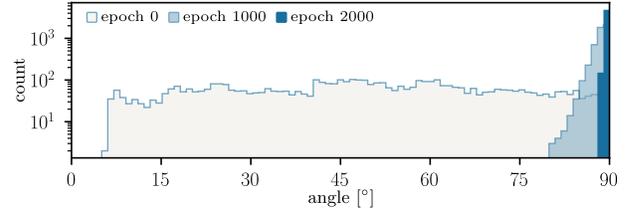}  
        }
    \vspace{-0.1cm}
    \caption{Riemannian angle between the image and kernel of the \ac{ae} projection $\bm P$ at different training epochs, histogram over $5000$ testing points.
    }
    \label{fig:bd_pend:projection_vs_ns_angle_histogram}
    \vspace{-0.65cm}
\end{figure}

\subsection{Simulated Augmented Pendulum}
\label{sec:experiments:pendulum}
We validate our theoretical analysis on the $15$-\ac{dof} augmented pendulum from~\cite{friedl2025hamiltonian}, for which we have access to the ground truth dynamics.
The system is fully actuated and its nonlinear dynamics are specified from the diffeomorphism of a latent $3$-\ac{dof} pendulum augmented with a $12$-\ac{dof} mass-spring mesh. As the mesh oscillations are small, i.e., comparable with reconstruction and dynamic prediction errors in other \ac{rolnn} experiments, the system dynamics are approximately reducible to $3$ dimensions. 
We train a \ac{rolnn} with latent dimension $d\!=\!3$ on $4000$ datapoints along non-actuated and sinusoidal trajectories.\looseness-1 

\textbf{Learned projections.}
Fig.~\ref{fig:bd_pend:projection_vs_ns_angle_histogram} shows the histogram of the Riemannian angle $\alpha$~\eqref{eq:riemannian_angle_between_image_and_nullspace} between projected and nullspace velocity components measured on $1000$ samples from $10$ test trajectories. We observe that $\alpha\!\to\!\sfrac{\pi}{2}$ after training, validating that the \ac{rolnn} learns an approximately $\bmM$-orthogonal projection and minimizes the projection alignment error. Note that a fully $\bmM$-orthogonal projection could be set via a linear embedding $\varphiq(\checkbmq) = \begin{pmatrix} \mathbf I_{3\times 3} \; \mathbf 0_{3\times 12}\end{pmatrix}^\intercal\checkbmq$, leading to a submanifold composed of the $3$ dominant pendulum \acp{dof}. However, this would require knowing the \ac{fom}. Moreover, linear projections are known to display limited expressivity compared to nonlinear subspaces~\cite{Buchfink2024MOR,friedl2025reduced,Otto2023MOR}.

\textbf{Regulation.}
We consider a regulation task in which the controller's goal is to stabilize at a reference configuration. We sample initial and reference configurations uniformly as $q_{0,i}\!\sim\!\mathcal U(-\frac{\pi}{2}, \frac{\pi}{2})$, $q_{\text{d},i}\!\sim\!\mathcal U(-\pi, \pi)$ $\forall i=1,2,3$, and $q_{0,i} \!\sim\! \mathcal U(-0.01, 0.01)$, $q_{\text{d},i} \!\sim\! \mathcal U(-0.015, 0.015)$ $\forall i=4,...,15$.
We compare the proposed \ac{rolnn}-based control law~\eqref{eq:torque_reconstruction} with a \ac{pd} controller on the full system. For completeness, we also consider a linear-intrusive structure-preserving \ac{rom}-based controller following our latent control law~\eqref{eq:torque_reconstruction}. The latent space is fixed via a linear embedding $\varphiq(\checkbmq) = \begin{pmatrix} \mathbf I_{3\times 3} \; \mathbf 0_{3\times 12}\end{pmatrix}^\intercal\checkbmq$ and associated point reduction. As the latent \acp{dof} coincide exactly with the pendulum, we use the analytic dynamic model of the $3$ pendulum \acp{dof}. Note that this linear-intrusive controller is not deployable in practice as it requires the unavailable full knowledge of the \ac{fom} dynamics.

Fig.~\ref{fig:bd_pend:regulation_errors_median_over_time} shows the evolution of the error $\|\bm e\| \!=\! \|\bmq - \bmqdes\|$ over time across $10$ configurations. 
Examples of resulting trajectories are displayed in App.~\ref{appendix:results}. 
We observe that the \ac{rolnn}-based controllers converge more accurately with increased stability compared to the \ac{pd}, which can only achieve regulation up to a steady-state error. Note that the fully-learned \ac{rolnn} partially models the mesh dynamics, resulting in a smoother error decrease compared to the linear-intrusive \ac{rom} that only learns the dynamics of the $3$-\ac{dof} pendulum.

\begin{figure}
    \centering
    \adjustbox{trim=0.5cm 0.3cm 0.5cm 0.2cm}{
            \includesvg[width=.9\linewidth]{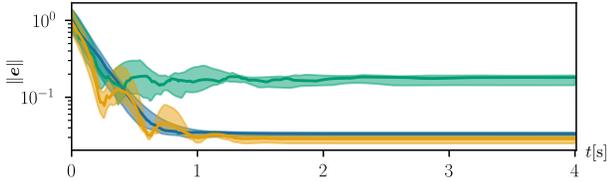}  
        }
    \caption{Regulation error (median and quartiles) for the \ac{rolnn} (\solidblueoneline), linear-intrusive (\solidorangetwoline), and model-free \ac{pd} (\solidgreenthreeline) controllers over $10$ random initial and desired configurations.
    }
    \label{fig:bd_pend:regulation_errors_median_over_time}
    \vspace{-0.1cm}
\end{figure}
\begin{figure}
    \centering
    \adjustbox{trim=0.9cm 0.2cm 0.9cm 0.21cm}{
            \includesvg[width=.9\linewidth]{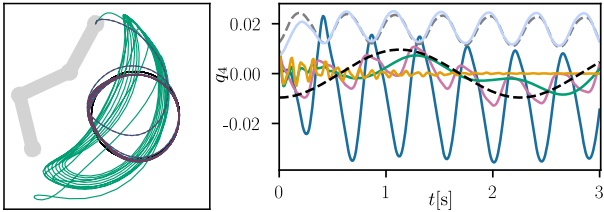}  
        }
    \caption{\emph{Left:} Original (\dashedblackline) and projected (\dashedaubergineline) tracking reference,  and \ac{rolnn}-based (\solidblueoneline), hybrid \ac{rolnn}-based (\solidpinkfourline), and model-free \ac{pd} (\solidgreenthreeline) controllers trajectories. \emph{Right:} Reference and controllers trajectories on a mesh \ac{dof}, also depicting the projected reference (\dashedgreyline) and trajectory (\solidlightblueline) of the \ac{rolnn}-based controller in $\varphi(\mathcal{T}\checkmathcalq)$, and linear-intrusive controller (\solidorangetwoline). 
    }
    \label{fig:bd_pend:ns_dof_tracking}
    \vspace{-0.65cm}
\end{figure}

\textbf{Tracking.}
We evaluate tracking performance on circular task space trajectories with varying center and radius. Note that these references significantly differ from the training data.
Fig.~\ref{fig:bd_pend:ns_dof_tracking}-\emph{left} shows an example of a reference trajectory in task space, along with the projected reference by the \ac{ae} and the trajectories generated by the \ac{rolnn}-based and \ac{pd} controllers. The original and projected reference overlap, showing that the \ac{rolnn} achieves a low reconstruction error. While the \ac{rolnn}-based controller precisely tracks the desired trajectory, the \ac{pd} displays significant tracking errors.
This is validated by Fig.~\ref{fig:bd_pend:tracking_errors_median_over_time}, which displays average tracking errors $\|\bm e\|$ over time. 
Importantly, the tracking error of the \ac{rolnn}-based controllers is due to the untracked nullspace dynamics of the mesh. Fig.~\ref{fig:bd_pend:ns_dof_tracking}-\emph{right} displays the trajectories of the first \acp{dof} of the mesh. As expected, the linear-intrusive \ac{rom}-based controller does not generate any control torques as it entirely disregards the nullspace dynamics. In the case of the fully-learned \ac{rolnn}, the projected reference trajectory is well tracked in $\varphi(\mathcal{T\checkmathcalq})$ by the \ac{rolnn}-based controller. However, it differs from the original reference trajectory due to nullspace dynamics and cross-coupling disturbances, leading to tracking errors on the mesh \acp{dof}. Nevertheless, as the system satisfies the \ac{mor} assumptions, the controller displays a stable and convergent behavior as predicted by Conjecture~\ref{conj:stability_fullsys}.
Finally, Fig.~\ref{fig:bd_pend:tracking_errors_median_over_time} shows that the hybrid \ac{rolnn}-based control law~\eqref{eq:fom_pd_lat_ff_controllaw} achieves the lowest tracking error, showcasing that composing a \ac{rolnn}-based feedforward term with a full-order \ac{pd} term effectively compensates for small nullspace errors.\looseness-1

\begin{figure}
    \centering
    \adjustbox{trim=0.5cm 0.3cm 0.5cm 0.2cm}{
            \includesvg[width=.9\linewidth]{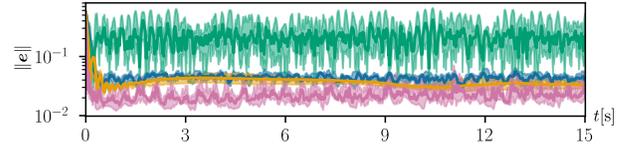}  
        }
    \caption{Tracking error (median and quartile) of the \ac{rolnn} (\solidblueoneline), hybrid \ac{rolnn} (\solidpinkfourline), linear \ac{rolnn} (\solidorangetwoline), and model-free \ac{pd} (\solidgreenthreeline) controllers over $10$ reference trajectories.
    }
    \label{fig:bd_pend:tracking_errors_median_over_time}
    \vspace{-0.4cm}
\end{figure}

\subsection{Regulation on an Underactuated Puppet}
\label{sec:experiments:monkey}
We consider an underactuated scenario where a plush monkey puppet is manipulated by a RB-Y1 humanoid robot, see Fig.~\ref{fig:closedloop}. The puppet state consists of the position and velocity of $351$ keypoints tracked by a custom visual perception pipeline, see App.~\ref{appendix:datasets-models}. The actuation vector $\bm{u}\in\mathbb{R}^3$ is set as the end-effector angular velocity. We train a \ac{rolnn}, including the actuation matrix $\bm{B}$, with latent dimension $d\!=\!6$ on $3000$ datapoints collected with sinusoidal angular velocity commands. 
We test the proposed \ac{rolnn}-based control law on a regulation task, where the robot stabilizes the puppet at a given reference configuration. Fig.~\ref{fig:monkey:frames_regulation} shows resulting trajectories (see also App.~\ref{appendix:results}), showcasing the ability of the \ac{rolnn}-based control law to stably regulate highly-underactuated systems.

\begin{figure}[]
    \centering
    \includegraphics[width=0.9\linewidth]{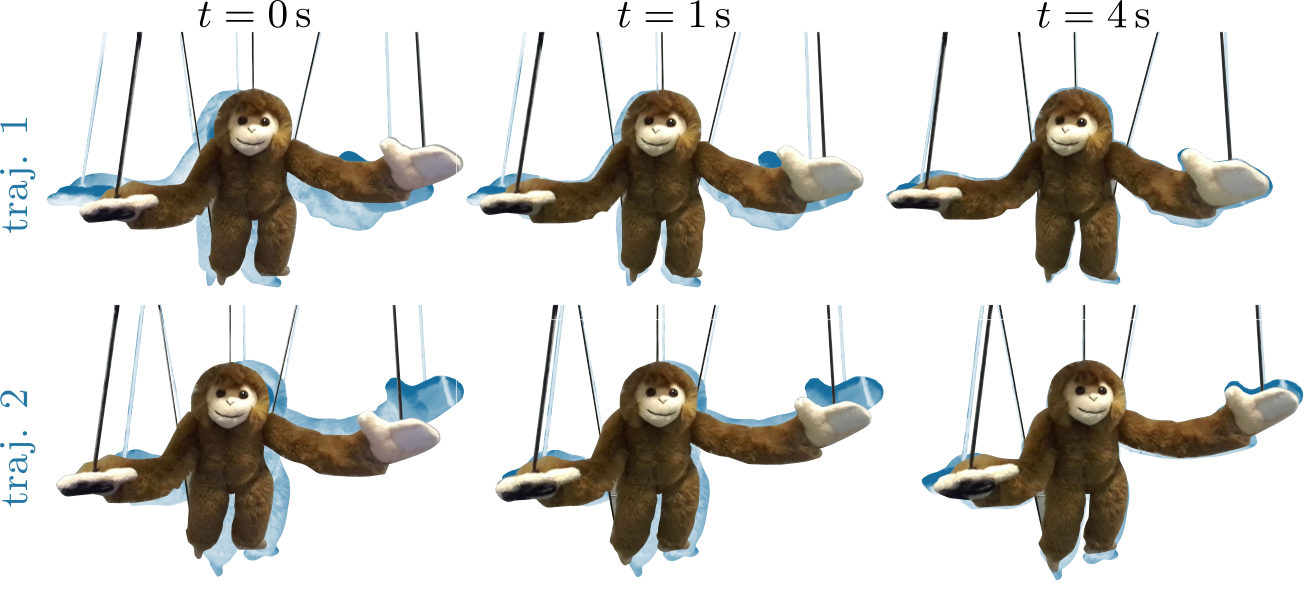}
    \vspace{-0.4cm}
    \caption{Regulated trajectories of a puppet generated by the \ac{rolnn}-based controller for three different reference configurations. 
    }
    \vspace{-0.65cm}
    \label{fig:monkey:frames_regulation}
\end{figure}

\section{Discussion}
\label{sec:conclusion}
This paper presents a novel latent tracking controller for high-dimensional Lagrangian systems with unknown dynamics.
Our control design leverages the Lagrangian latent spaces of the non-intrusive \acp{rolnn}, thereby enabling formal stability and convergence guarantees typically overlooked by latent neural architectures.
Taking a Riemannian perspective, we characterized projection-based errors and discussed the influence of nullspace \acp{dof} in the control design.
Our experimental results validate that learned structure-preserving \acp{rom} lead to effective control of unknown dynamics of high-dimensional robotic systems.
The presented analysis opens avenues towards principled application of structure-preserving \ac{mor} to controlling complex, high-dimensional robotic systems.

Current limitations include the dependence of the presented strategy for real-world dynamic tasks to high-frequency full-state feedback. Integrating \ac{rom}-based observers could increase practical feasibility.
Moreover, real-world tracking performance also depends heavily on the underactuation structure. In future work, we will investigate how Lagrangian latent spaces can help with planning, e.g., by learning to match the actuated subspace.
We will also explore more diverse and efficient control strategies tailored to a learned \ac{rom}, aiming for additional guarantees on robustness and optimality.

\section*{Acknowledgments}
This work was supported by ERC AdV grant BIRD 884807, Swedish Research Council, and the Knut and Alice Wallenberg Foundation, including the Wallenberg AI, Autonomous Systems and Software Program (WASP).
The computations were enabled by the Berzelius resource provided by the Knut and Alice Wallenberg Foundation at the National Supercomputer Centre.

%% Use plainnat to work nicely with natbib. 
\clearpage
\bibliographystyle{plainnat}
\bibliography{references}

\clearpage
\appendix

\subsection{Proof of Proposition~\ref{prop:latent-closedloop} --- Part 1: Derivation of the Latent Closed-loop Dynamics}
\label{appendix:latent_closedloop_dynamics}
The full high-dimensional closed-loop dynamics are given by~\eqref{eq:plant_actuated_via_reconstructed_torque}, with lifted control torque $\tildebmtau_{\text{c}}$ via the tracking law~\eqref{eq:reduced_pd_ff_controllaw} as system input.
Using the second derivative of the point reduction $\rhoq$, the projection of the true closed-loop acceleration $\ddbmq$ into the latent space takes the form
\begin{equation}
    \label{eq:latent_plant_acceleration}
    \ddcheckbmq = d\rhoq\left(\bm M^{-1}(\tildebmtau_{\text{c}} - (\bm C + \bm D)\dbmq - \bm g)\right) + d^2\rhoq \dbmq^2.
\end{equation}

We aim to characterize the difference between the latent acceleration $\ddcheckbmq$~\eqref{eq:latent_plant_acceleration} obtained by encoding the true system acceleration $\ddbmq$ and the nominal latent acceleration $\ddcheckbmq_{\text{nom}}$ induced directly from the latent control torque $\checkbmtau_{\text{c}}$ in~\eqref{eq:reducedequationsofmotion}, as expanded as\looseness-1
\begin{equation*}
    \checkbmM\ddcheckbmq = \checkbmM\ddcheckbmq_{\text{nom}} + \checkbmM(\ddcheckbmq - \ddcheckbmq_{\text{nom}}),
\end{equation*}
in~\eqref{eq:diff_latent_accelerations}.
Substituting $\ddcheckbmq$ from~\eqref{eq:latent_plant_acceleration} and $\ddcheckbmq_{\text{nom}}$ from~\eqref{eq:reducedequationsofmotion} with $\checkbmtau\!=\!\checkbmtau_{\text{c}}$, we rewrite~\eqref{eq:diff_latent_accelerations} as
\begin{equation*}
\begin{aligned}
    \checkbmM\ddcheckbmq = &\big(\checkbmtau_{\text{c}} - (\checkbmC + \checkbmD) \dcheckbmq - \checkbmg \big) \\ &+ \checkbmM \Big( d\rhoq \bmM^{-1} \big(\tildebmtau_{\text{c}} - (\bm{C} + \bm{D}) \dbmq - \bm{g} \big) + d^2\rhoq \dbmq^2 \\ & \quad\quad\quad  - \checkbmM^{-1} \big(\checkbmtau_{\text{c}} - (\checkbmC + \checkbmD) \dcheckbmq - \checkbmg \big) \Big)
\end{aligned}
\end{equation*}
Substituting $\tildebmtau_{\text{c}}$ by~\eqref{eq:torque_reconstruction}, $\checkbmtau_{\text{c}}$ by~\eqref{eq:reduced_pd_ff_controllaw}, and using $\ddcheckbmq\!=\! \ddcheckbme + \ddcheckbmqdes$ and $\dcheckbmq\!=\! \dcheckbme + \dcheckbmqdes$, we obtain
\begin{equation*}
\begin{aligned}
\label{eq:latent_closed_loop_expanded_full}
    \checkbmM(\ddcheckbme+ \ddcheckbmqdes) = 
    & \Big(\check{\bm M}_{\bm\theta}\ddcheckbmqdes + (\check{\bm C}_{\bm\theta} + \check{\bm D}_{\bm\theta})\dcheckbmqdes + \check{\bm g}_{\bm\theta} - \checkKp\check{\bm e} - \checkKd \dot{\check{\bm e}} \\ 
    & \quad\quad - (\checkbmC + \checkbmD) (\dcheckbme + \dcheckbmqdes) - \checkbmg \Big) \\
    &+ \checkbmM d\rhoq \bmM^{-1}\Big(d\rhoq^\trsp \checkbmtau_{\text{c}} - (\bm{C} + \bm{D}) \dbmq - \bm{g}\Big) \\ 
    &+ \checkbmM d^2\rhoq \dbmq^2 \\ 
    & - \Big(\checkbmtau_{\text{c}} - (\checkbmC + \checkbmD) \dcheckbmq - \checkbmg \Big),
\end{aligned}
\end{equation*}
which can then simply be rearranged into the form~\eqref{eq:latent_closed_loop_dyns} with dynamic modeling disturbance
\begin{equation}
\label{eq:check_delta_dyn_def_appendix}
    \check\Delta_\theta = \check{\bm M}_{\bm\theta}\ddcheckbmqdes + (\check{\bm C}_{\bm\theta} + \check{\bm D}_{\bm\theta})\dcheckbmqdes + \check{\bm g}_{\bm\theta} - \checkbmM \ddcheckbmqdes - (\checkbmC + \checkbmD) \dcheckbmqdes - \checkbmg ,
\end{equation}
and projection alignment disturbance
\begin{equation}
\begin{aligned}
\label{eq:check_delta_perp_projection_alignment_error_def_appendix}
    \check\Delta_\perp = &\checkbmM d\rhoq \bmM^{-1}d\rhoq^\trsp \checkbmtau_{\text{c}} - \checkbmtau_{\text{c}} \\
    &+  \checkbmM d^2\rhoq \dbmq^2 \\
    &+ (\checkbmC + \checkbmD) \dcheckbmq -  \checkbmM d\rhoq \bmM^{-1} (\bm{C} + \bm{D}) \dbmq \\
    &+ \checkbmg - \checkbmM d\rhoq \bmM^{-1} \bm g.
    \end{aligned}
\end{equation}
It is straightforward to rearrange~\eqref{eq:check_delta_dyn_def_appendix} into the form~\eqref{eq:delta_check_parameter_convergence_disturbance} and~\eqref{eq:check_delta_perp_projection_alignment_error_def_appendix} into~\eqref{eq:delta_perp_orthogonality_convergence_disturbance}.

\subsection{Proof of Proposition~\ref{prop:latent-closedloop} --- Part 2: Derivation of the Latent Disturbances Bounds}
\label{appendix:latent_convergence_bounds}
Next, we provide bounds on the latent disturbances $\check\Delta_\theta$ and $\check\Delta_\perp$ and discuss the conditions under which the \ac{rolnn} models the latent system exactly.

\subsubsection{Dynamic Modeling Disturbance}
Importantly, the losses $\ell_{\text{LNN},d}$ and $\ell_{\text{LNN},d}$ are not solely influenced by dynamic parameter convergence, but also depend on the projection alignment error. As a consequence, the losses can only equal zero when also the projection alignment error vanishes. Nevertheless, small values of $\ell_{\text{LNN},d}$ and $\ell_{\text{LNN},d}$ indicate a low dynamic modeling error $\check\Delta_\theta$. 

We assume a neighborhood $\mathcal N\subseteq\mathcal T\mathcal Q$ close to the training data distribution on which the original system matrices are well-behaved, i.e., bounded. Moreover, we note that the learned \ac{rolnn} employs smooth, differentiable activation functions in both the autoencoder and latent \ac{lnn}, and that its parameters are bounded after training.
This allows us to conclude that, for any $(\bm q, \dbmq)\in \mathcal N$ and bounded reference trajectories, the dynamic modeling disturbance is bounded, i.e., there exists a constant $\check r_\theta>0$ such that ${\|\check\Delta_\theta (\bmq, \dbmq, \bmqdes, \dbmqdes, \ddbmqdes)\|\leq \check r_\theta}$.
In other words, this boundedness simply follows from the smoothness of the learned dynamics and the compactness of the neighborhood $\mathcal N$. 

\subsubsection{Projection Alignment Disturbance}

Given~\eqref{eq:check_delta_perp_projection_alignment_error_def_appendix}, we first aim to show that $\check\Delta_\perp \!=\! 0$ when the autoencoder learns a Riemannian submersion, i.e., if $d\rhoq\vert_{\varphiq(\bmq)} \!=\! d\varphiq\vert^{+_{,\bm M}}_{\checkbmq}$ holds. Second, we derive boundedness of the error on $\mathcal N\subseteq \mathcal T\mathcal Q$.

To proceed, we first define the latent Coriolis term of the latent nominal (a.k.a. pullback) and projected dynamics.
For the nominal dynamics, the Coriolis force is derived equivalently as in full-order dynamics~\eqref{eq:equationsofmotion}, through the Christoffel symbols of the pullback metric $\checkbmM(\varphi(\checkbmq))$, yielding
\begin{equation}
\label{eq:pullback_coriolis}
    \checkbmC\checkbmq = d\varphiq^\intercal \bm Cd\varphiq \dcheckbmq + d\varphiq^\intercal \bm M d^2\varphiq \dcheckbmq^2.
\end{equation}
The latent Coriolis force of the projected dynamics is projected through the encoder, accounting for curvature in the derivatives, as
\begin{equation}
\label{eq:latent_coriolis_via_proj}
    \checkbmC_{\text{proj}}\checkbmq = \checkbmM d\rhoq\bm M^{-1} \bm C \dbmq-\checkbmM d^2\rhoq \dbmq^2.
\end{equation}
As stated in~\cite[Thm. 5.7]{Buchfink2024MOR}, the latent nominal and projected Coriolis forces coincide for all states on $\varphi(\mathcal T\mathcal Q)$ when $d\rhoq\vert_{\varphiq(\bmq)} \!=\! d\varphiq\vert^{+_{,\bm M}}_{\checkbmq}$ holds. This can be verified by setting $d\rhoq\vert_{\varphiq(\bmq)}$ in~\eqref{eq:latent_coriolis_via_proj} as the generalized inverse~\eqref{eq:riemannian_pseudoinverse_of_embedding} --- which is equivalently written as $d\varphiq\vert^{+_{,\bm M}}_{\checkbmq} = \checkbmM(\checkbmq)^{-1} \; d\varphiq\vert_{\checkbmq}^\intercal \; \bm M(\bm q)$ --- and using the identity 
\begin{equation}
\label{eq:app_simplification_d2_rho}
    d^2\rhoq d\varphiq^2\dcheckbmq^2 = -d\rhoq d^2\varphiq \dcheckbmq^2,
\end{equation} 
which is obtained by differentiating the second projection property in~\eqref{eq:projectionproperty} as $\frac{\partial}{\partial \checkbmq}d\rhoq\vert_{\varphiq(\checkbmq)}d\varphiq\vert_{\checkbmq} = \bm 0$.

We then return to with the projection alignment disturbance in the form~\eqref{eq:check_delta_perp_projection_alignment_error_def_appendix}. Considering states $(\bmq, \dbmq)\in \mathcal T\mathcal Q$, i.e., for which $d\varphiq \dcheckbmq \equiv \dbmq$, plugging in the nominal pullback Coriolis forces $\checkbmC\dcheckbmq$~\eqref{eq:pullback_coriolis}, and using~\eqref{eq:app_simplification_d2_rho}, the expression reads as
\begin{align}
    \label{eq:app_check_delta_perp_simplified}
    \check\Delta_\perp =& \checkbmM d\rhoq \bm M^{-1} d\rhoq^\intercal \checkbmtaudes-\checkbmtaudes \nonumber \\ 
    &+ d\varphiq^\intercal \bm M d^2\varphiq \dcheckbmq^2 - \checkbmM d\rhoq d^2\varphiq \dcheckbmq^2 \nonumber\\ 
    &+ d\varphiq^\intercal \bm C d\varphiq \dcheckbmq - \checkbmM d\rhoq \bm M^{-1}\bm C d\varphiq \dcheckbmq\\ 
    &+\checkbmD \dcheckbmq - \checkbmM d\rhoq \bm M^{-1}\bm D d\varphiq \dcheckbmq\nonumber \\ 
    &+ \checkbmg - \checkbmM d\rhoq\bm M^{-1} \bm g \nonumber.
\end{align}

\textbf{$\check\Delta_\perp$ for a $\bm M$-orthogonal projection.}
In the case where the AE learns a $\bmM$-orthogonal projection, we have ${d\rhoq \!=\! d\varphiq^{+_{,\bm M}}}$. Substituting $d\rhoq$ by the generalized inverse~\eqref{eq:riemannian_pseudoinverse_of_embedding} into the projection alignment disturbance~\eqref{eq:app_check_delta_perp_simplified} yields 
\begin{align*}
    \check\Delta_\perp =& d\varphiq^\intercal d\rhoq^\intercal \check{\bm\tau}_{\text{c}} - \check{\bm\tau}_{\text{c}} \\
    &+ \checkbmC\dcheckbmq - (d\varphiq^\intercal \bm C d\varphiq\dcheckbmq + d\varphiq^\intercal \bm M d^2\varphiq \dcheckbmq^2)\\
    &+ \checkbmD \dcheckbmq - d\varphiq^\intercal \bm D d\varphiq \dcheckbmq \\
    &+ \check {\bm g} - d\varphiq^\intercal \bm g \\
    =& 0.
\end{align*}
The equality to $0$ is obtained using the identity $d\varphiq^\intercal d\rhoq^\intercal =\bm I$ from the transpose of the tangent projection property~\eqref{eq:projectionproperty}, the pullback dissipation~\eqref{eq:pullback_dissipation}, the potential and Coriolis forces from~\eqref{eq:pullback_dynamic_parameters}, and the latent nominal Coriolis force~\eqref{eq:pullback_coriolis}.

\textbf{General bound on $\check\Delta_\perp$.}
Next, we derive an upper bound on $\check\Delta_\perp$ for any other nontrivial case. 
First, we further re-arrange~\eqref{eq:app_check_delta_perp_simplified} using the pullback dynamic parameters~\eqref{eq:pullback_dynamic_parameters},~\eqref{eq:pullback_dissipation} and write, for states $(\bmq, \dbmq) = (\tildebmq, \dtildebmq) = (\varphiq(\checkbmq), d\varphiq \dcheckbmq)$, $\check\Delta_\perp$ as a sum of three terms
\begin{align}
        \label{eq:app_check_delta_perp_sum_of_terms_form}
    \check\Delta_\perp &= \underbrace{(\checkbmM d\rhoq \bm M^{-1}d\rhoq^\intercal - \bm I)\checkbmtaudes}_{\check\Delta_{\perp,1}} \nonumber\\
    &+  \underbrace{d\varphiq^\intercal \bm M (\bm I - d\varphiq d\rhoq)d^2\varphiq \dcheckbmq^2}_{\check\Delta_{\perp,2}} \\
    &+ \underbrace{ (d\varphiq^\intercal - \checkbmM d\rhoq \bm M^{-1})((\bm C + \bm D)\dbmq + \bm g)}_{\check\Delta_{\perp,3}}, \nonumber
\end{align}
equivalently to~\eqref{eq:delta_perp_orthogonality_convergence_disturbance}.
Notice that each term in~\eqref{eq:app_check_delta_perp_sum_of_terms_form} is, on a compact state-space, bounded by definition. Here, we explicitly quantify each bound as a function of the system state and the the angle $\sfrac{\pi}{2}-\alpha$ between the nominal $\bmM$-orthogonal and learned projections $\bm{P}_\perp$ and $\bm{P}$, see Fig.~\ref{fig:submersion_angles}, on which the projection alignment disturbance directly depends. 

To derive the individual upper bounds, we write the error between the learned tangent reduction $d\rhoq$ and the generalized inverse of the differential of the learned embedding $d\varphiq$ as 
\begin{equation}
\label{eq:error_proj_submersion}
     \bm E_\perp = d\rhoq - d\varphiq^{+_{,\bm M}}.
\end{equation}
As $\text{im}(\bm P_\perp) \!=\! \text{im}(\bm P)$ (see Fig.~\ref{fig:submersion_angles}), we have 
\begin{align}
    & \bm E_\perp d\varphiq = \bm 0, \\ 
    & d\varphiq \bm E_\perp = \bm P - \bm P_\perp. \label{eq:error_proj_perp}
\end{align}
Moreover, the error $\bm E_\perp$ is small when the \ac{rolnn} achieves a low loss~\eqref{eq:loss_RO_LNN}, as a low losses $\ell_{\text{LNN},d}$ and $\ell_{\text{LNN},N}$ are only attainable when both the dynamic parameters are well learned and the \ac{ae}-orthogonality hold.

We can now rewrite the three terms of~\eqref{eq:app_check_delta_perp_sum_of_terms_form} as functions of the nominal $\bmM$-orthogonal and learned projections $\bm{P}_\perp$ and $\bm{P}$, and error~\eqref{eq:error_proj_submersion}. 
By substituting $\bm I \!=\! \checkbmM d\varphiq^{+_{,\bm M}}\bm M^{-1} {d\varphiq^{+_{,\bm M}}}^\intercal$ in $\check\Delta_{\perp,1}$, we obtain 
\begin{equation*}
\check{\Delta}_{\perp,1} = \checkbmM \epsilon_\perp\bm M^{-1} \bm E_\perp^\intercal \checkbmtaudes = d\varphiq^\intercal \bm M(\bm P- \bm P_\perp)\bm M^{-1} \bm E_\perp^\intercal \checkbmtaudes.
\end{equation*}
Expanding $d\varphiq^\intercal \bm M\!=\!d\varphiq^\intercal \bm M d\varphiq d\varphiq\vert^{+_{,\bm M}} $ in $\check\Delta_{\perp,2}$ yields 
\begin{equation*}
\check{\Delta}_{\perp,2} = -d\varphiq^\intercal \bm M(\bm P - \bm P_\perp)d^2\varphiq\dcheckbmq^2.
\end{equation*}
With the same expansion as for $\check\Delta_{\perp,2}$, we get 

\small
\vspace{-0.4cm}
\begin{equation*}
\begin{aligned}
    \check{\Delta}_{\perp,3} &= d\varphiq^\intercal \bm M (d\varphiq d\varphiq\vert^{+_{,\bm M}} - d\varphiq d\rhoq)\bm M^{-1}(\bm g + \bm C\dbmq + \bm D \dbmq) \\ 
    &= -d\varphiq^\intercal \bm M(\bm P - \bm P_\perp)\bm M^{-1}(\bm g + \bm C\dbmq + \bm D \dbmq).
    \end{aligned}
\end{equation*}
\normalsize

From~\eqref{eq:error_proj_perp}, we obtain the projection-difference-based upper bound $\|\bm E_\perp^\intercal\|_{\bm M}\leq |\cot \alpha| \;\|d\varphiq^{+_{,\bm M}}\|_{\bm M}$, where we used the geometric intuition of Fig.~\ref{fig:submersion_angles} and the norm equivalence 
\begin{equation*}
\|\bm X\|_{\bm M} = \|\bm M^{\frac{1}{2}}\bm X \bm M^{-\frac{1}{2}}\| = \sqrt{\frac{\lambda_{\max}(\bm M)}{\lambda_{\min}(\bm M)}}\|\bm X\|.
\end{equation*}
Together with Cauchy-Schwarz for the products and the norm equivalence, we finally obtain the following individual bounds

\small
\vspace{-0.4cm}
\begin{align*}
    &\|\check{\Delta}_{\perp,1}\| \leq |\cot \alpha|^2 \; \frac{\lambda_{\max}(\bm M)}{\lambda_{\min}(\bm M)} \;\|d\varphiq\| \; \|d\varphi^{+}\| \; \|\checkbmtaudes\|, \\
    &\|\check{\Delta}_{\perp,2}\| \leq |\cot \alpha| \; \lambda_{\max}(\bm M) \; \|d\varphiq\| \; \|d^2\varphiq\| \; \|\dcheckbmq\|^2, \\
    & \|\check{\Delta}_{\perp,3}\| \leq |\cot \alpha| \sqrt{\frac{\lambda_{\max}(\bm M)}{\lambda_{\min}(\bm M)} } \|d\varphiq\|  (\|\bm g\| + \|\bm C\|\|\dbmq\| +  \|\bm D\|\|\dbmq\|),
\end{align*}
\normalsize
where $\checkbmtaudes$ depends current state, encoded references, and \ac{rolnn} dynamic parameters.

This allows us to conclude that, for any $\forall (\bm q, \dbmq)\in \mathcal N$ and bounded reference trajectories, the projection alignment disturbance is small and bounded, i.e., there exists a constant $\check r_\perp>0$ such that $\|\check\Delta_\perp (\bmq, \dbmq, \bmqdes, \dbmqdes, \ddbmqdes)\| \leq \|\check{\Delta}_{\perp,1}\| + \|\check{\Delta}_{\perp,2}\| + \|\check{\Delta}_{\perp,3}\| \leq \check r_\perp$.

\subsection{Proof of Corollary~\ref{cor:latent_system_converges} --- Latent Nominal Closed-loop Stability}
\label{appendix:latent_closed_loop_errorfree_proof}

We analyze the stability of the nominal latent closed-loop dynamics, i.e.,~\eqref{eq:latent_closed_loop_dyns} with $\check\Delta_\theta=\check\Delta_\perp=0$, for $\check{\bm x} = (\checkbme^\intercal, \dcheckbme^\intercal )^\intercal$.
For a given bounded reference trajectory, $\Omega_{\mathcal N}$ denotes the neighborhood in the latent error state space where $\checkbmx$ remains small enough such that $(\bm q, \dbmq)\in \mathcal N$, where $\mathcal N\subseteq\mathcal{T}\mathcalq$ is a neighborhood close to the training distribution on which all system matrices are well-behaved.

We consider the Lyapunov candidate 
\begin{equation}
\label{eq:lyapunov_function}
    \vartheta(\checkbmx)=\frac{1}{2} \checkbmx^\intercal \bm \Theta\checkbmx \enspace \text{with} \enspace \bm \Theta = \begin{pmatrix}\checkKp & \epsilon \checkbmM\\ \epsilon \checkbmM  &\checkbmM \end{pmatrix},
\end{equation} 
with scaling constant $\epsilon>0$. 
On a compact state-space, the positive definiteness conditions on the Schur complement imply that $\bm \Theta\!\in\!\SPD$ for sufficiently small $\epsilon^2\!<\!\frac{\lambda_{\min}(\checkKp)}{\lambda_{\max}(\check{\bm M})}$, with $\lambda_{\min}$ and $\lambda_{\max}$ denoting the minimum and maximum eigenvalues of the given matrix.
Then, $v_1\|\check{\bm x}\|^2 \!\leq\! \vartheta(\check{\bm x}) \!\leq\! v_2 \|\check{\bm x}\|^2$ holds for positive constants $(v_1, v_2) = (\lambda_{\min}(\bm \Theta), \lambda_{\max}(\bm \Theta))$.

Using $\dot{\check{\bm M}}=2\checkbmC$, the time-derivative of $\vartheta$ is given by
\small
\vspace{-0.1cm}
\begin{equation}
\label{eq:lyapunov_derivative_simple}
    \dot\vartheta = - \checkbmx^\intercal \bm H \checkbmx\enspace\text{with}\enspace \bm H\!=\!\begin{psmallmatrix}
\epsilon\checkKp & 0.5\epsilon(\checkKd + \checkbmD - \checkbmC) \\ 0.5\epsilon(\checkKd + \checkbmD - \checkbmC)^\intercal & (\checkKd + \checkbmD - \epsilon\check{\bm M})
\end{psmallmatrix}.
\end{equation}
\normalsize
Using the positive definiteness conditions on the Schur complement, there exists a sufficiently small $\epsilon$, for which $\bm H\in \SPD$ on $\Omega_{\mathcal N}$, i.e., the gain matrices dominate nonlinear dynamic effects. Therefore, there exists a level set ${\Omega_0 = \left\{\checkbmx\:|\:\vartheta(\checkbmx) \leq c_0\right\}\subset \Omega_{\mathcal N}}$ such that, for any $\checkbmx(0)\in\Omega_0$, $\dot{\vartheta}<0$ ensures that the trajectory remains in $\Omega_0$ for all $t\geq 0$.

Similarly as for $\vartheta$, $v'_1\|\check{\bm x}\|^2 \!\leq\! -\dot\vartheta(\check{\bm x}) \!\leq\! v'_2 \|\check{\bm x}\|^2$ holds for positive constants $(v'_1, v'_2) = (\lambda_{\min}(\bm H), \lambda_{\max}(\bm H))$.
Using $\dot{\vartheta}\leq - v'_1\|\checkbmx\|^2$ and $\vartheta\leq v_2 \|\check{\bm x}\|^2$, we obtain 
\begin{equation}
\label{eq:dthetabound_nodisturbance}
    \dot{\vartheta}\leq - 2\lambda\vartheta \leq -2 \lambda v_2 \|\check{\bm x}\|^2\quad\text{ with }\quad \lambda = \frac{1}{2} \frac{v'_1}{v_2}.
\end{equation}
From the comparison lemma, it follows that $\vartheta(t) \!\leq\! \vartheta(0)e^{-2\lambda t}$. 
Utilizing the lower bound $v_1\|\checkbmx\|^2\leq \vartheta$ to map back to states, we obtain 
\begin{equation}
\|\checkbmx(t)\| \leq c e^{-\lambda t}\|\checkbmx(0)\| \quad \text{with}\quad c\!=\!\sqrt{\frac{v_2}{v_1}},
\end{equation}
thereby proving that the nominal latent closed-loop dynamics are locally exponentially stable.

\subsection{Proof of Theorem~\ref{thm:latentISS} --- Latent Stability and Convergence}
\label{appendix:latent_iss_proof}
We now extend the stability analysis of the latent closed-loop dynamics~\eqref{eq:latent_closed_loop_dyns} under nonzero dynamic modeling and projection alignment disturbances. 
Again, we consider the Lyapunov candidate~\ref{eq:lyapunov_function}, for which, for sufficiently small $\epsilon>0$, $v_1\|\check{\bm x}\|^2 \!\leq\! \vartheta(\check{\bm x}) \!\leq\! v_2 \|\check{\bm x}\|^2$ holds for positive constants $(v_1, v_2) = (\lambda_{\min}(\bm \Theta), \lambda_{\max}(\bm \Theta))$.

The time derivative of $\vartheta$ now includes the disturbances, taking the form
\begin{equation}
    \label{eq:lyapunov_derivative_simple_disturbed}
    \dot\vartheta = - \checkbmx^\intercal \bm H \checkbmx + \checkbmx^\intercal\begin{pmatrix}
        \epsilon\\ \bm I
    \end{pmatrix} \check\Delta,
\end{equation}
with $\check\Delta=\check\Delta_\theta + \check\Delta_\perp$.
The disturbance term, i.e., second summand, in the time derivative~\eqref{eq:lyapunov_derivative_simple_disturbed} is upper-bounded as 
\begin{equation}
\label{eq:bound_disturbanceterm}
\checkbmx^\intercal\begin{pmatrix}
        \epsilon\\ \bm I
    \end{pmatrix} \check\Delta \leq\eta\|\checkbmx\|\|\check\Delta\| \quad\text{ with }\quad \eta = \sqrt{1+\epsilon^2}.
\end{equation}
Using~\eqref{eq:dthetabound_nodisturbance} from Corollary~\ref{cor:latent_system_converges} and~\eqref{eq:bound_disturbanceterm}, we obtain an upper bound for $\dot\vartheta$ as
\begin{equation}
    \dot\vartheta \leq -2\lambda v_2\|\checkbmx\|^2 + \eta\|\checkbmx\|\|\check\Delta\|
\end{equation}
which can be expanded with $\xi\!\in\!(0,1)$ as
\begin{equation}
\label{eq:dtheta_seminegativedef}
    \dot\vartheta \leq -2(1-\xi)\lambda v_2\|\checkbmx\|^2 -2\xi\lambda v_2\|\checkbmx\|^2 + \eta\|\checkbmx\|\|\check\Delta\|.
\end{equation}
From~\eqref{eq:dtheta_seminegativedef}, $\dot\vartheta$ is strictly negative-definite and the error state $\checkbmx$ decays exponentially at rate $-2(1-\xi)\lambda v_2$ only when ${2\xi\lambda v_2 \|\checkbmx\|^2 \!\geq\! \eta\|\checkbmx\|\|\check\Delta\|}$. 
Therefore, the exponential decay holds for $\|\checkbmx\|\geq \frac{\eta}{2\xi\lambda v_2}\|\check\Delta\|$.

Intuitively, the system is locally exponentially \ac{iss} if every trajectory starting in the level set ${\Omega_0 = \left\{\checkbmx\:|\:\vartheta(\checkbmx) \leq c_0\right\}\subset \Omega_{\mathcal N}}$ remains in $\Omega_0$ for all $t\geq 0$. This implies that the disturbance must be bounded such that the level set ${\Omega_d = \left\{\checkbmx\:|\:\vartheta(\checkbmx) \leq c_d \text{ and } \checkbmx\leq  \frac{\eta}{2\xi\lambda v_2}\|\check\Delta\|\right\}\subset \Omega_{0}}$, corresponding to the states for which the exponential decay does not hold, 
is fully contained inside $\Omega_0$. Note that, while the exponential decay holds in $\Omega_0/\Omega_d$, we cannot guarantee further decrease of energy in $\Omega_d$. 
From Corollary~\ref{cor:bounded_delta_check} the disturbance is bounded as $\sup_{t\geq 0}\|\check\Delta\|=\bar d_{\check\Delta}$, thus, if $\bar d_{\check\Delta}$ is small enough, ensuring that $\Omega_d \subset\Omega_0$.
Therefore, $\dot\vartheta<0$ $\forall \checkbmx \in\Omega_0$, $\Omega_0$ is forward invariant, and the system is locally exponentially \ac{iss}.
Then, by the comparison lemma, we have 
\begin{equation*}
    \vartheta \leq \vartheta(0)e^{-2(1-\xi)\lambda t} + \frac{\eta^2}{8\xi(1-\xi)\lambda^2 v_2}\|\check\Delta\|^2_\infty, 
\end{equation*}
and the trajectory converges exponentially as
\begin{equation*}
\|\checkbmx(t)\|\leq ce^{-(1-\xi)\lambda t}\|\checkbmx(0)\|+\gamma \|\check\Delta\|_\infty,
\end{equation*}
with $\gamma = \frac{c\eta}{4\xi\lambda v_2} \sqrt{\frac{\xi}{1-\xi}}$.

\subsection{Proof of Proposition~\ref{prop:tilde_closed_loop} --- Part 1: Derivation of the Closed-loop Dynamics on the Embedded Submanifold}
\label{appendix:embedded_closedloop_dynamics}
The full high-dimensional closed-loop dynamics are given by~\eqref{eq:plant_actuated_via_reconstructed_torque}, with lifted control torque $\tildebmtau_{\text{c}}$ via the tracking law~\eqref{eq:reduced_pd_ff_controllaw} as system input.
Using the second derivative of the decoder, the projection of the true closed-loop acceleration $\ddbmq$ into the submanifold is $\ddtildebmq \!=\! d\varphiq\ddcheckbmq +d^2\varphiq \dcheckbmq^2$. Incorporating the latent acceleration $\ddcheckbmq$~\eqref{eq:latent_plant_acceleration}, we obtain
\begin{equation}
    \label{eq:tilde_projected_plant_acceleration}
    \ddtildebmq = d\varphiq d\rhoq \bm M^{-1}(\tildebmtaudes -(\bm C+\bm D)\dbmq -\bm g) + d\varphiq d^2\rhoq\dbmq^2 + d^2\varphiq \dcheckbmq^2.
\end{equation}

Following a similar reasoning as in the proof of Proposition~\ref{prop:latent-closedloop} developed in App.~\ref{appendix:latent_closedloop_dynamics}, 
we aim to characterize the difference between the projected acceleration $\ddtildebmq$~\eqref{eq:tilde_projected_plant_acceleration} obtained by projecting the true system acceleration $\ddbmq$ through the \ac{ae} and the nominal acceleration $\ddtildebmq_{\text{nom}}$ on the embedded submanifold induced from the control torque $\tildebmtau_{\text{c}}$ in the nominal projected dynamics~\eqref{eq:projectedequationsofmotion}.
We use~\eqref{eq:tilde_projected_plant_acceleration} and ~\eqref{eq:projectedequationsofmotion} to rewrite the expanded difference $\tilde{\bmM}\ddtildebmq = \tildebmM\ddtildebmq_{\text{nom}} + \tildebmM(\ddtildebmq - \ddtildebmq_{\text{nom}})$ as
\begin{align*}
    \label{eq:projected_cl_dynamics_app}
    \tildebmM\tildebmq =& (\tildebmtaudes - (\tildebmC + \tildebmD)\tildebmq - \tildebmg) \nonumber\\
    &+ \tildebmM\Big(d\varphiq d\rhoq \bm M^{-1}(\tildebmtaudes -(\bm C+\bm D)\dbmq -\bm g) \nonumber \\
    &\quad+d\varphiq d^2\rhoq\dbmq^2 + d^2\varphiq \dcheckbmq^2 \\
    &\quad-\tildebmM^{-1}(\tildebmtaudes - (\tildebmC + \tildebmD)\tildebmq - \tildebmg)\Big) \nonumber.
\end{align*}
Substituting $\tildebmtau_{\text{c}}$ by~\eqref{eq:torque_reconstruction} with $\checkbmtau_{\text{c}}$ from~\eqref{eq:reduced_pd_ff_controllaw} and using $\ddtildebmq\!=\! \ddtildebme + \ddtildebmqdes$, $\dtildebmq\!=\! \dtildebme + \dtildebmqdes$, and $\dcheckbmq = d\rhoq \dbmq$, we obtain

\small
\begin{equation}
\begin{aligned}
\label{eq:tilde_closed_loop_expanded_full}
\vspace{-0.5cm}
    \checkbmM(\ddtildebme+ \ddtildebmqdes) = 
    & \Big(d\rhoq^\intercal \big(\check{\bm M}_{\bm\theta}\ddcheckbmqdes + (\check{\bm C}_{\bm\theta} + \check{\bm D}_{\bm\theta})\dcheckbmqdes + \check{\bm g}_{\bm\theta} - \checkKp\check{\bm e} - \checkKd \dot{\check{\bm e}} \big) \\ 
    & \quad\quad - (\tildebmC + \tildebmD) (\dtildebme + \dtildebmqdes) - \tildebmg \Big) \\
    &+ \tildebmM d\varphiq d\rhoq \bmM^{-1}\Big(\tildebmtaudes - (\bm{C} + \bm{D}) \dbmq - \bm{g}\Big) \\ 
    &+ \tildebmM d\varphiq d^2\rhoq \dbmq^2 + \tildebmM d^2\varphiq d\rhoq^2\dbmq^2 \\ 
    & - \Big(\tildebmtaudes - (\tildebmC + \tildebmD) \dtildebmq - \tildebmg \Big).
\end{aligned}
\end{equation}
\normalsize
Notice that, in contrast to the latent closed-loop dynamics of Proposition~\ref{prop:latent-closedloop} and App.~\ref{appendix:latent_closedloop_dynamics}, the expression depends on the  curvature induced by the decoder via $d^2\varphiq$ as pointed out in Remark~\ref{remark:decoder-curv}. 

The closed-loop dynamics on the embedded submanifold can be rearranged into the form~\eqref{eq:tilde_closed_loop_dyns} with the dynamic modeling disturbance
\begin{equation}
\begin{split}
        \tilde\Delta_{\theta} &= \big(d\rhoq^\intercal (\check{\bm M}_{\bm\theta}\ddcheckbmqdes + (\check{\bm C}_{\bm\theta} + \check{\bm D}_{\bm\theta})\dcheckbmqdes + \check{\bm g}_{\bm\theta} ) \\ 
    & \quad\quad - \tildebmM \ddtildebmqdes- (\tildebmC + \tildebmD) \dtildebmqdes - \tildebmg \big),
\end{split}
        \label{eq:tilde_delta_theta_dynamic_modeling_error}
\end{equation}
and the projection alignment disturbance
\begin{equation}
    \label{eq:tilde_delta_perp_projection_alignment_error}
    \begin{split}
    \tilde\Delta_{\perp} =& \tildebmM d\varphiq d\rhoq \bmM^{-1} \tildebmtaudes - \tildebmtaudes \\
    &+ \tildebmM d\varphiq d^2\rhoq \dbmq^2 + \tildebmM d^2\varphiq d\rhoq^2\dbmq^2  \\
    &+(\tildebmC + \tildebmD) \dtildebmq - \tildebmM d\varphiq d\rhoq \bmM^{-1} (\bm{C} + \bm{D}) \dbmq \\
    &+ \tildebmg - \tildebmM d\varphiq d\rhoq \bmM^{-1}\bm{g}.
    \end{split}
\end{equation}
The damping gain $\tildeKd$ in~\eqref{eq:tilde_closed_loop_dyns} is obtained from~\eqref{eq:tilde_closed_loop_expanded_full} by identifying
\begin{equation}
    \tildeKd \dtildebme = d\rhoq^\intercal\checkKd \dcheckbme.
\end{equation}
Substituting $\dcheckbme = d\rhoq \dtildebme$, we obtain
\begin{equation}
    \label{eq:projKd_from_checkKd}
    \tildeKd = d\rhoq^\intercal\checkKd d\rhoq.
\end{equation}
The proportional gain $\tildeKp$ in~\eqref{eq:tilde_closed_loop_dyns} is similarly identified from~\eqref{eq:tilde_closed_loop_expanded_full} as
\begin{equation}
    \label{eq:tildekp_rel}
    \tildeKp \tildebme = d\rhoq^\intercal\checkKp \checkbme.
\end{equation}
However, in contrast to its derivative, the projected error is nonlinearly related to the latent error, i.e., $\checkbme = \rhoq (\tildebme)$. 
This implies that the potential gain $\tildeKp$ follows a quadratic form of the type~\eqref{eq:projKd_from_checkKd} only for linear reductions. For nonlinear reductions,~\eqref{eq:tildekp_rel} does not guarantee that $\tildeKp$ is positive definite for all $\tildebmq$.
However, $\tildeKp$ can be approximated as a positive-definite quadratic form for general nonlinear reductions by linearizing the error $\checkbme$ around the origin $\tildebmqdes$, i.e., 
\begin{equation}
\label{eq:linearization_latent_error_for_kp_tilde}
    \checkbme = \rhoq(\tildebmq) - \rhoq(\tildebmqdes) = d\rhoq \tildebme + \mathcal O(\|\tildebme\|^2).
\end{equation}
Substituting the linearized error~\eqref{eq:linearization_latent_error_for_kp_tilde} in~\eqref{eq:tildekp_rel}, we deduce as first order approximation of the potential gain as
\begin{equation}
\label{eq:approx_tildeKp}
    \tildeKp \approx d\rhoq^\intercal\check~Kp d\rhoq.
\end{equation}

\subsection{Proof of Proposition~\ref{prop:tilde_closed_loop} --- Part 2: Derivation of the Convergence Bounds on the Embedded Submanifold}
\label{appendix:embedded_convergence_bounds}
Next, we provide bounds on the disturbances $\tilde\Delta_{\theta}$ and $\tilde\Delta_{\perp}$ and discuss the conditions under which the \ac{rolnn} exactly models the nominal closed-loop dynamics on the embedded submanifold.

\subsubsection{Dynamic Modeling Disturbance}
First, we rearrange~\eqref{eq:tilde_delta_theta_dynamic_modeling_error} by substituting $\ddcheckbmqdes = d\rhoq \ddtildebmqdes + d^2\rhoq\dtildebmqdes^2$ and $\dcheckbmqdes = d\rhoq \dtildebmqdes$, 
\begin{equation}
    \label{eq:tilde_delta_theta_simplified}
    \begin{split}
        \tilde\Delta_\theta &= (d\rhoq^\intercal \checkbmMtheta d\rhoq - \tildebmM) \ddtildebmqdes \\
        &+ d\rhoq^\intercal \checkbmMtheta d^2\rhoq \dtildebmqdes^2\\
        &+ d\rhoq^\intercal \checkbmCtheta \dcheckbmqdes- \tildebmC\dtildebmqdes\\
        &+ (d\rhoq^\intercal \checkbmDtheta d\rhoq - \tildebmD) \dtildebmqdes \\
        &+ (d\rhoq^\intercal \checkbmgtheta - \tildebmg)
    \end{split}
\end{equation}
We can then derive the projected Coriolis force analogously to~\eqref{eq:pullback_coriolis} using the projection $\bm P$ instead of $d\varphiq$ and its derivative
\begin{equation}
\label{eq:proj_hessian}
    d\bm P = d\varphiq d^2\rhoq +d^2\varphiq d\rhoq^2,
\end{equation} 
yielding
\begin{equation}
\label{eq:pullback_coriolis_tilde}
    \tildebmC\dtildebmqdes = \bm P^\intercal \bm C \bm P \dtildebmqdes + \bm P^\intercal \bm M d\varphiq d^2\rhoq \dtildebmq \dtildebmqdes + \bm P^\intercal \bm M d^2\varphiq d\rho^2 \dtildebmq\dtildebmqdes.
\end{equation} 

Next, we discuss the conditions under which $\tilde\Delta_\theta=0$.
We assume that the latent \ac{lnn} perfectly converges, i.e., it learns the pullback dynamic parameters~\eqref{eq:pullback_dynamic_parameters},\eqref{eq:pullback_dissipation}. Then, using~\eqref{eq:pullback_coriolis},~\eqref{eq:pullback_coriolis_tilde}, and the projected parameters of~\eqref{eq:projectedequationsofmotion}, and again substituting $\ddcheckbmqdes = d\rhoq \ddtildebmqdes + d^2\rhoq\dtildebmqdes^2$ and $\dcheckbmqdes = d\rhoq \dtildebmqdes$, the disturbance in~\eqref{eq:tilde_delta_theta_simplified} simplifies to 
\begin{equation*}
\tilde\Delta_\theta=d\rhoq^\intercal \checkbmM d^2\rhoq \dtildebmqdes^2 +\bm P^\intercal \bm M d^2\varphiq d\rhoq^2\dtildebmq \dtildebmqdes - \bm P^\intercal \bm M d\bm P \dtildebmq \dtildebmqdes,
\end{equation*}
which further simplifies using~\eqref{eq:proj_hessian} to 
\begin{equation*}
    \tilde\Delta_\theta=d\rhoq^\intercal \checkbmM d^2\rhoq \dtildebmqdes^2-\bm P^\intercal \bm M d\varphiq d^2\rhoq \dtildebmq\dtildebmqdes.
\end{equation*}
Using~\ref{eq:app_simplification_d2_rho}, and the definition of the pullback mass matrix, the disturbance finally simplifies to 
\begin{equation*}
    \tilde\Delta_\theta=-d\rhoq^\intercal\checkbmM d^2\rhoq \dtildebmqdes \dtildebme.
\end{equation*}
Importantly, the disturbance $\tilde\Delta_\theta$ only vanishes for $\dtildebme=0$ or $\dtildebmqdes=0$. In general, even the \ac{rolnn} perfectly recovers the nominal latent dynamic parameters, the disturbance remains due to the curvature of the encoder, i.e., $d^2\rhoq$. In other words, it vanishes only for linear point reduction maps $\rhoq$ for which $d^2\rhoq=0$.

Finally, following the same arguments as in App.~\ref{appendix:latent_convergence_bounds}, we conclude that, for any $(\bm q, \dbmq)\in \mathcal N$ and bounded reference trajectories, the dynamic modeling disturbance is bounded, i.e., there exists a constant $\check r_\theta>0$ such that ${\|\tilde\Delta_\theta (\bmq, \dbmq, \bmqdes, \dbmqdes, \ddbmqdes)\|\leq \tilde r_\theta}$.

\subsubsection{Projection Alignment Disturbance}
We start from the disturbance equation in the form~\eqref{eq:tilde_delta_perp_projection_alignment_error}. 
Using the idempotence of the projection $\bm P \bm P \!=\! \bm P$,~\eqref{eq:proj_hessian}, $\dtildebmq = \bm P \dbmq$, and substituting the projected pullback Coriolis forces $\tildebmC\dtildebmq$ as in~\eqref{eq:pullback_coriolis}, the disturbance is rewritten as
\begin{equation}
\label{eq:app_tilde_delta_perp_simplified_generally}
\begin{split}
    \tilde\Delta_\perp &= \tildebmM\bm M^{-1} \tildebmtaudes - \tildebmtaudes + \tildebmM d\bm P \dbmq^2 \\
    & + \bm P^\intercal \bm M d\bm P \dtildebmq^2 +(\bm P^\intercal \bm C \bm P - \tildebmM\bmM^{-1}\bm C)\dbmq \\
    & + \tildebmD\dtildebmq  - \tildebmM \bmM^{-1} \bm D \dbmq\\
    &+ \tildebmg - \tildebmM\bmM^{-1} \bm g.
\end{split}
\end{equation}

\textbf{$\tilde\Delta_\perp$ for a $\bm M$-orthogonal projection.}
In the case where the AE learns a $\bmM$-orthogonal projection, we have ${d\rhoq \!=\! d\varphiq^{+_{,\bm M}}}$ and we substitute $d\rhoq$ by the generalized inverse~\eqref{eq:riemannian_pseudoinverse_of_embedding} in~\eqref{eq:app_tilde_delta_perp_simplified_generally}.
In this specific case, $\tildebmM\bmM^{-1} \!=\! \bm M d\varphiq\checkbmM^{-1}d\varphiq^\intercal = \bm P^\intercal$, and thus, the expression~\eqref{eq:app_tilde_delta_perp_simplified_generally} vanishes on the embedded submanifold. Note that the projection alignment disturbance generally remains nonzero outside of the submanifold, similarly as the dynamic modeling disturbance.

\textbf{General bound on $\tilde\Delta_\perp$.} 
The bound on $\tilde\Delta_\perp$~\eqref{eq:tilde_delta_perp_projection_alignment_error} is derived similarly as the bound on $\check\Delta_\perp$. As the derivation follows the same procedure as in App.~\ref{appendix:latent_convergence_bounds}, we do not provide all details here but simply summarize the results. Under the same considerations as before, each of the terms in~\eqref{eq:tilde_delta_perp_projection_alignment_error} is bounded by definition on a compact state space and we can establish bounds depending on the angle $\sfrac{\pi}{2}-\alpha$ between $\bm P$ and $\bm P_\perp$. 
This allows us to conclude that, for a \ac{rolnn} achieving a low loss~\eqref{eq:loss_RO_LNN}, and for any $\forall (\bm q, \dbmq)\in \mathcal N$ and bounded reference trajectories, the projection alignment disturbance is small and bounded, i.e., there exists a constant $\tilde r_\perp>0$ such that $\|\tilde\Delta_\perp (\bmq, \dbmq, \bmqdes, \dbmqdes, \ddbmqdes)\| \leq \tilde r_\perp$.

\subsection{Proof of Theorem~\ref{thm:projectedISS} --- Stability and Convergence on Embedded Submanifold Proof}
\label{appendix:embedded_iss_proof}
We now discuss the proof for Theorem~\ref{thm:projectedISS}, showing that the closed-loop dynamics~\eqref{eq:tilde_closed_loop_dyns} on $\varphi(\mathcal{T}\checkmathcalq)$ under disturbance are locally exponentially stable. As the proof follows similar steps as these of Corollary~\ref{cor:latent_system_converges} in App.~\ref{appendix:latent_closed_loop_errorfree_proof} and Theorem~\ref{thm:latentISS} in App.~\ref{appendix:latent_iss_proof}, we only provide a summary of the proof.

Similarly as Corollary~\ref{cor:latent_system_converges} for Theorem~\ref{thm:latentISS}, we start by discussing the stability and convergence of the disturbance-free dynamics. Specifically, the nominal closed-loop dynamics on the submanifold $\varphi(\mathcal{T}\checkmathcalq)$, i.e., the dynamics~\eqref{eq:tilde_closed_loop_dyns} with optimally-converged \ac{rolnn} parameters, are locally exponentially stable. 
This is proven by considering the Lyapunov candidate 
\begin{equation}
\label{eq:lyapunov_function_tilde}
    \vartheta(\checkbmx)=\frac{1}{2} \tildebmx^\intercal \bm \Theta\tildebmx \enspace \text{with} \enspace \bm \Theta = \begin{pmatrix}\tildeKp & \epsilon \tildebmM\\ \epsilon \tildebmM  &\tildebmM \end{pmatrix}.
\end{equation} 
and following the same steps as the proof of Corollary~\ref{cor:latent_system_converges} in App.~\ref{appendix:latent_closed_loop_errorfree_proof}. 
Then, the proof of Theorem~\ref{thm:projectedISS} is obtained by including the disturbances $\tilde\Delta_\theta$, $\tilde\Delta_\perp$ in the time derivative of the Lyapunov candidate similar to~\eqref{eq:lyapunov_derivative_simple_disturbed} and following the same steps as the proof of Theorem~\ref{thm:latentISS} in App.~\ref{appendix:latent_iss_proof}.

Note that the closed-loop dynamics on $\varphi(\mathcal{T}\checkmathcalq)$ are locally exponentially ISS even if the proportional gain $\tildeKp$ is strictly identified via~\eqref{eq:tildekp_rel} instead of being approximated as in~\eqref{eq:approx_tildeKp}. This is proven by considering the pullback potential 
\begin{equation}
\label{eq:pullback_potential_positiongain_V_p_tilde}
     \tilde V_{\text{P}} (\tildebmq) = \frac{1}{2}\big(\rhoq(\tildebmq)-\rhoq(\tildebmqdes)\big)^\intercal\checkKp \big(\rhoq(\tildebmq)-\rhoq(\tildebmqdes) \big)
\end{equation}
in the Lyapunov candidate 
\begin{equation}
    \vartheta(\tildebmx) = \tilde V_{\text{P}} (\tildebme + \tildebmqdes) + \epsilon\dtildebme^\intercal \tildebmM \tildebme + \frac{1}{2}\dtildebme^\intercal \tildebmM \dtildebme.
\end{equation}
and following the same steps as the original proof, leveraging the fact that the derivative of~\eqref{eq:pullback_potential_positiongain_V_p_tilde} simplifies the same way as that of a linearly-dependent potential.
Note that showing the negative-definiteness of $\dot\vartheta$ then requires $\tildebme^\intercal d\rhoq^\intercal \checkKp\checkbme\succ 0$. This can be simplified by using the linearization~\eqref{eq:linearization_latent_error_for_kp_tilde} to approximate the derivative of the potential $\tilde V_{\mathrm{P}}$ as
\begin{equation}
\label{eq:approx_dKp}
    \frac{\partial}{\partial \tildebmq}\tilde V_{\mathrm{P}}(\tildebmq) 
    = d\rhoq^\intercal\checkKp d\rhoq \tildebme + \mathcal{O}(\|\tildebme\|^2).
\end{equation}
From~\eqref{eq:approx_dKp}, we have $\tildebme^\intercal d\rhoq^\intercal \checkKp\checkbme \!=\! \tildebme^\intercal d\rhoq^\intercal\checkKp d\rhoq \tildebme + \mathcal{O}(\|\tildebme\|^3)$. Moreover, as the pullback through the encoder Jacobian preserves positive-definiteness, there exists a positive $k_p$ such that $\tildebme^\intercal d\rhoq^\intercal \checkKp\checkbme\geq k_p \|\tildebme\|^2$.
In summary, if the gain $\checkKp$ is high, the forward invariance neighborhood $\Omega_0$ come become large.

\subsection{Control Gain Design}
\label{appendix:gains}
In this section, we briefly discuss the design of the latent gains $\checkKp$ and $\checkKd$ of the \ac{rolnn}-based control law. 

For simplicity, control gains are often set as constant. In our setup, this would correspond to setting constant gains $\Kp$, $\Kd$ in the original, high-dimensional configuration space $\mathcalq$. Due to the structure of the \ac{rolnn}, the corresponding latent gains are $\checkKp = d\varphiq^\intercal \Kp \varphiq(\rhoq(\bm q)-\rhoq(\bmqdes))$ and $\checkKd = d\varphiq^\intercal \Kd d\varphiq$.
For simplified computation during the deployment of our controller, we instead set $\checkKp$ and $\checkKd$ as constant diagonal matrices, which corresponds to selecting position-dependent full-order gains $\Kp$ and $\Kd$. 

Setting good values for the gains $\checkKp$ and $\checkKd$ is challenging due to the \ac{ae} mappings, whose magnitude depends on the training of the \ac{rolnn}, thus compromising standard intuition. In practice, we first determined the latent $\checkKd$ by pulling back a constant full-order gain $\Kd$ across the training dataset. We empirically fixed the magnitude of the diagonal entries of $\checkKd$ based on the ratio of the eigenvalues of the pullback gain $d\varphiq^\intercal \Kd d\varphiq$. Notice that we observed only small variation across the training datasets, supporting the hypothesis that a constant $\checkKd$ is a reasonable gain. Finally, we set $\checkKp$ empirically based on $\checkKd$ and fine-tuned both gains. Notice that thanks to the Lagrangian structure of the \ac{rom} in the learned latent space, the gains still fulfill standard assumptions. For instance, a larger $\checkKp$ leads to a more aggressive tracking behavior and can lead to instability, while the damping gain $\checkKd$ is typically chosen smaller than $\checkKp$.

\subsection{Additional Details on Datasets and Models}
\label{appendix:datasets-models}

\subsubsection{Simulated Augmented Pendulum from Sec.~\ref{sec:experiments:pendulum}} 
\label{appendix:datasets-models-pendulum} 
We use the $15$-\ac{dof} augmented pendulum introduced as a benchmark system in~\cite[Sec.4.1]{friedl2025hamiltonian}, composed of a $3$-link planar pendulum and a $12$-\ac{dof} planar mass-spring-damper, as depicted in Fig.~\ref{fig:sketch_bd_pend}. The pendulum and mesh parameters are as in~\cite{friedl2025hamiltonian}, with additional damping coefficients set to $0.1$ for the pendulum \acp{dof}, and $0.01$ for the mesh \acp{dof}. All simulations are carried out in \textsc{Mujoco}~\citep{Todorov12:mujoco}, where we have access to the ground truth dynamics.

The two subsystems are fully decoupled, i.e., there is no interaction between the masses of the pendulum and the masses of the mesh. Note that the physical parameters of the subsystems were selected so that the amplitude of the pendulum is much larger amplitude than that of the mesh. Therefore, the latter acts as a noise, or in other world, as a systematic measurement error, implying that a \ac{rom} based solely on the pendulum \acp{dof} would capture the dominant behavior of the full system. In other words, the system is well-reducible via a \ac{rolnn}. To ensure non-trivial reducibility, 
we specify a diffeomorphism $h\!:\!\mathcal Q_{\text{aug}} \!\to \!\mathcal Q$ from the augmented configuration vector  $\bm q_{\text{aug}}\!=\!\left(\bm q_{\text{pend}}^\intercal, \bm q_{\text{ms}}^\intercal\right)^\intercal$ to a nonlinearly-transformed representation $\bmq \!=\! h(\bmq_{\text{aug}})$, similar to~\cite[App.G.1]{friedl2025hamiltonian}.

\begin{figure}
    \centering
    \adjustbox{trim=0cm 0.0cm 0cm 0.0cm}{
            \includesvg[width=.95\linewidth]{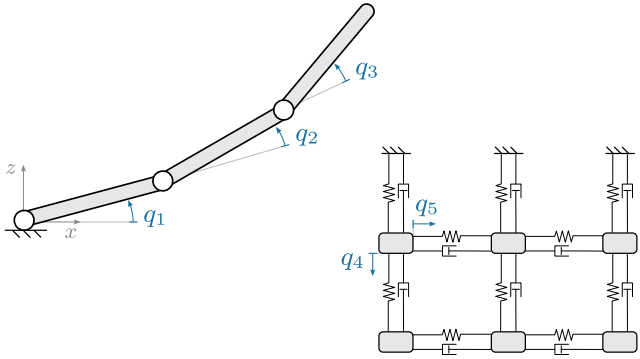}  
        }
    \caption{Sketch of the augmented pendulum system of Sec.~\ref{sec:experiments:pendulum}.
    }
    \label{fig:sketch_bd_pend}
\end{figure}

\textbf{Training Data.}
The initial configurations and velocities of each \ac{dof} of the pendulum are randomly sampled from the intervals $q_{\text{pend}, i}(t\!=\!0) \!\in\! \left[-\frac{\pi}{2}, \frac{\pi}{2}\right]\SI{}{\degree}$ and ${\dot q_{\text{pend}, i}(t\!=\!0) \!\in\! \left[-23, 23\right]\SI{}{\degree\per\second}}$. 
Initial displacements and velocities for each mesh-\ac{dof} are randomly sampled from the intervals $q_{\text{ms}, j}(t\!=\!0) \!\in \! \left[-1, 1\right]\times10^{-2}\SI{}{\meter}$ and $\dot q_{\text{ms}, j}(t\!=\!0) \!\in\! \left[-2, 2\right]\times 10^{-3}\SI{}{\meter\per\second}$.
We record an initial training dataset of $10$ trajectories with control input $\bmtaudes=\bm 0$. 
To ensure a more general learning of the dynamics, we augment this initial training dataset with more diverse trajectories, namely $10$ trajectories recorded on a regulation task, and $10$ trajectories recorded on a sine-tracking task. 
For the regulation task, we randomly sample $\bmqdes$ from the same distributions as for the initial conditions. For the sine-tracking task, we sample sinusoidal reference trajectories for each joint $i$ such that the reference trajectories are set to track signals 
$q_{\text{ref},i} \!=\! A_i \sin(2\pi f_i t + \phi)$, with $\phi_i \!=\! \arcsin\left(\frac{q_{i,0}}{A_i}\right)$.
For the $i=\{1,2, 3\}$ pendulum joints, we sample values with $A_i \in \left[1,40\right]\SI{}{\degree}$, $f_i\in\left[\frac{1}{10}, 3\right]\SI{}{\per\second}$.
For the mesh joints $i=4,...,15$, we sample $A_i \in \left[-1,1\right]\times 10^{-2}\SI{}{\meter}$, $f_i\in\left[\frac{1}{5}, 3\right]\SI{}{\per\second}$. 
The references $\dot{q}_{\text{ref},i}$ and $\ddot{q}_{\text{ref},i}$ are obtained as the first and second time derivatives of $q_{\text{ref},i}$. 
To generate $\bmtaudes$ in both controlled scenarios, we use~\eqref{eq:pd_ff_controllaw} and the analytic model.

Each simulation is recorded for $T=\SI{5}{\s}$ at a timestep of $\Delta t = 10^{-3}\SI{}{\second}$, yielding $N=30$ training trajectories $\mathcal{D}= \{\{h(\bmq_{\text{pend}, n,k}, \bmq_{\text{ms}, n,k})^\intercal, dh(\bmq_{\text{pend}, n,k}, \bmq_{\text{ms}, n,k})^\intercal\}_{k=1}^{K}\}_{n=1}^{N}$ with $K=5000$ observations each.

\textbf{\ac{rolnn} Training Details.}
We train a geometric \ac{rolnn} composed of a biorthogonal \ac{ae} and a latent geometric \ac{lnn}. 

 We use $l=\{1,2,3\}$ pairwise biorthogonal encoder and decoder layers of sizes $n_l = \{6, 12, 15\}$ with latent space dimension $n_0=3$. The biorthogonal weight matrices are initialized by sampling a random orthogonal matrix $\bm O \in \mathbb{R}^{n_l\times n_l}$ from the Haar distribution and setting $\bm\Phi=\bm\Psi=\bm O_{[:,:n_{l-1}]}$, where $\bm O_{[:,:n_{l-1}]}$ are the first $n_{l-1}$ column entries of $\bm O$. Bias vectors are initialized as $\bm{b}_{l}=\bm{0}$.
For the latent geometric \ac{lnn}, we parametrize the potential energy network $\check V_{\bm\theta_{\check{V}}}$ as well as the Euclidean parts $g_{\mathbb{R}}$ of the mass-inertia matrix network $\check{\bmM}_{\bm\theta_{\check{T}}}$ and damping network $\check{\bm D}_{\theta_{\check{\mathcal D}}}$ each with $L_{\check{V}}\!=\!L_{\check T, \mathbb R}=2$ hidden Euclidean layers of $32$ neurons and SoftPlus activation functions. We fix the basepoint of the exponential map layers $g_{\text{Exp}}$ in the mass-inertia matrix network $\check{\bmM}_{\bm\theta_{\check{T}}}$ and damping network $\check{\bm D}_{\theta_{\check{\mathcal D}}}$ to the origin $\bm P\!=\!\bm I$. Weights are initialized by sampling from a Xavier normal distribution with gain $\sqrt{2}$ and bias vector entries set to $1$.
We train the model on $4000$ uniformly sampled random points from the dataset $\mathcal{D}$ using the loss~\eqref{eq:loss_RO_LNN}  with Runge-Kutta $4$-th order integration over a training horizon of $H_{\train}\!=\!10$ timesteps. We use a learning rate of $1.5\times 10^{-2}$ for the \ac{ae} parameters and $7\times 10^{-4}$ for the \ac{lnn} parameters. We train the model with Riemannian Adam~\citep{becigneul2018riemannianoptimization} until convergence at $2000$ epochs.

\textbf{Regulation Experiments.}
The controller gains for the \ac{rolnn}-based law~\eqref{eq:reduced_pd_ff_controllaw} are defined following App.~\ref{appendix:gains} by setting the diagonal entries of the latent gain matrices as $(400, 1800, 600)$ for $\checkKp$  and $(60, 270, 90)$ for $\checkKd$.
For the model-free PD baseline, we set the first three diagonal entries of $\Kp$ to $(25, 15, 10)$ and $\Kd$ to $(2, 1.5, 1)$, and the latter $12$ entries uniformly to $1.5$ for $\Kp$ and $0.15$ for $\Kd$.
For the pendulum \acp{dof} in the linear-intrusive baseline, we set $\checkKp$ and $\checkKd$ equal to the first three entries in the model-free PD case. 

\textbf{Trajectory Tracking Experiments.}
As controller gains for the \ac{rolnn}-based law~\eqref{eq:reduced_pd_ff_controllaw}, we set the diagonal entries of the latent gain matrices to $(300, 800, 1500)$ for $\checkKp$ and $(30, 80, 150)$ for $\checkKd$.
For the model-free PD baseline and for the hybrid controller~\ref{eq:fom_pd_lat_ff_controllaw}, we set the first three diagonal entries of the gain matrix $\checkKp$ to $(150, 100, 50)$ and $\checkKd$ to $(6, 2, 2)$. We set the last $12$ entries uniformly to $1.5$ for $\Kp$ and $0.1$ for $\Kd$.
For the pendulum \acp{dof} in the linear-intrusive baseline, we set $\checkKp$ and $\checkKd$ equal to the first three entries in the model-free PD case.

We generate $10$ circular reference trajectories for the pendulum \acp{dof} by randomly sampling center points for the circle with $x\in[0.5, 1.0]\SI{}{\meter}$ and $z\in[-1.0, 1.0]\SI{}{\meter}$, with radii drawn from $[0.25, 0.8]\SI{}{\meter}$. Tracking frequencies were sampled $[0.5, 3]$Hz, with angular tracking velocity $\omega=2\pi f$. The mesh \acp{dof} follow randomly sampled sine references, following the process described above for training data generation.

To map the taskspace references to the pendulum \acp{dof} in joint space, we use a damped least-squares inverse kinematics solver to obtain joint positions, the Jacobian pseudoinverse to compute joint velocities from the analytical task-space velocities, and the Jacobian derivative to analytically compute joint accelerations via $\ddot{\bm q} = \bm J^+ (\ddot{\bm x} - \dot{\bm J} \dot{\bm q})$. 

\subsubsection{Underactuated Puppet from Sec.~\ref{sec:experiments:monkey}}
We use a plush monkey puppet suspended by strings. The puppet is actuated via the left arm of the RB-Y1 humanoid robot and observed using a ZED Mini RGB-D camera mounted on the robot’s head. Although the monkey’s shape formally lies in an infinite-dimensional space, we approximate it using $117$ 3D points, resulting in a $351$-\acp{dof} system. The complete real-world experimental setup, including the robot, the camera, and the point-based representation of the monkey described shortly, is shown in Fig.~\ref{fig:monkey:setting}.

\begin{figure}[]
    \centering
    \includegraphics[width=\linewidth]{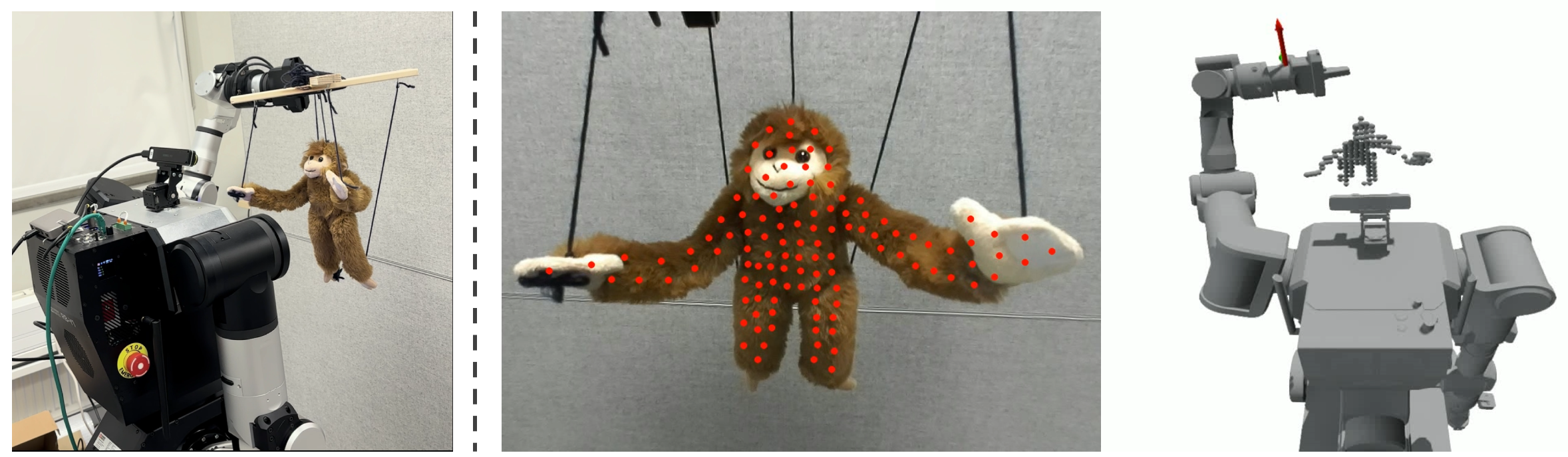}
    \caption{\emph{Left}: Real-world robot environment setting. \emph{Middle-right}: Initial 2D and 3D points representing the evolution of the plush monkey puppet.
    }
    \label{fig:monkey:setting}
\end{figure}

We designed a perception pipeline that extracts 3D points and ensures consistent point tracking throughout the monkey’s motion. The inputs to the pipeline are {RGB-D} images, and the output are the 3D coordinates of points with consistent correspondence over time. 

Before initiating the real-time perception, we initialize the monkey’s points in pixel space using a segmentation mask. First, we employ the pretrained segmentation network SAM~\cite{kirillov2023segany} with the language prompt ``monkey plush toy'' to obtain a segmentation mask of the monkey. To improve the stability of optical flow estimation (described in the next paragraph), we apply morphological operations to the mask to exclude pixels near the monkey's boundary. We then downsample the remaining mask pixels to construct the initial set of 2D points representing the monkey. Pixels whose depth values are NaN in the depth image are also removed. The remaining pixels are back-projected using the depth image to obtain their corresponding 3D points --- in our case, the $117$ 3D points shown in Fig.~\ref{fig:monkey:setting}-\emph{right}. Next, we describe how we track the initialized points throughout the video. 

When the RGB-D image at the next time step is observed, we first estimate the $2$-dimensional optical flow between the previous and current RGB images using the pretrained flow estimation model RAFT~\cite{teed2020raft}. Then, we refine the estimated optical flow using the segmentation mask of the monkey in the current frame, obtained via SAM. More specifically, this refinement is performed by optimizing the Chamfer distance between the 2D pixel coordinates propagated by the optical flow and those constrained by the segmentation mask. Using the refined optical flow, we obtain the monkey’s 2D pixel coordinates at the next time step. These coordinates are subsequently back-projected into three-dimensional space using the corresponding depth image, yielding the updated 3D point positions.

The control input for the monkey system is the desired angular velocity vector of the end-effector, denoted by ${\bm u = (r_x \; r_y \; r_z)^\intercal \in \mathbb{R}^3}$, where $r_x, r_y$, and $r_z$ represent the angular velocities about the $x$, $y$, and $z$-axis of the end-effector frame, respectively, expressed in the end-effector frame. The real-world robot follows the desired angular velocity via a low-level feedback controller.

\begin{figure}[]
    \centering
    \includegraphics[width=\linewidth]{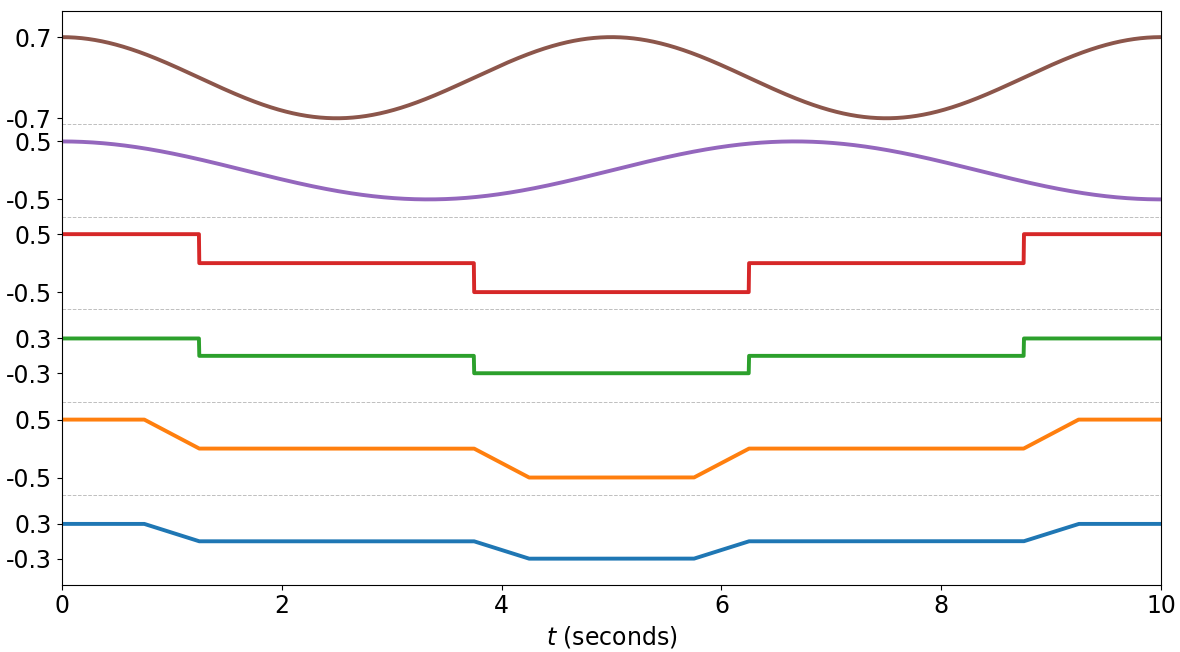}
    \caption{Six speed profiles determining the scalar magnitude of velocity control trajectories for the plush monkey puppet dataset.
    }
    \label{fig:monkey:speed_profiles}
\end{figure}

\textbf{Training Data.}
We collect a total of $18$ trajectories of $10$ seconds each composed of the monkey’s 3D points trajectories under $18$ distinct desired angular velocity control trajectories. The initial configuration of the monkey’s points is fixed across all trajectories, and the initial velocities are set to zero. The desired control trajectories are generated by combining three fixed angular directions with six time-varying speed profiles. Specifically, for the angular directions $V_i \in \mathbb{R}^3$, $i=1, 2, 3$, we use
\begin{equation}
    \label{eq:actuationprofiles}
    \begin{split}
        V_1 &= (1/\sqrt{2}, 1/\sqrt{2}, 0), \\
        V_2 &= (0, 1/\sqrt{2}, 1/\sqrt{2}), \\
        V_3 &= (1/\sqrt{2}, 0, 1/\sqrt{2}).\\
    \end{split}
\end{equation}
For the speed profiles, we define functions $s_j: \mathbb{R}_{\geq0} \rightarrow \mathbb{R}$, $j=1, \ldots, 6$, which map time to a scalar magnitude. We use six distinct periodic functions, consisting of cosine-shaped profiles and gated square waveforms, shown in Fig.~\ref{fig:monkey:speed_profiles}. For all combinations of angular directions and speed profiles, i.e., ${\bm u(t)} = s_j(t)V_i$, we obtain 18 distinct control trajectories and the corresponding monkey motions, represented as trajectories of 3D points.
As the training trajectories were obtained at a framerate of $4$Hz, the training dataset was interpolated to train on a timestep of $\Delta t = 10^{-3}\SI{}{\second}$.

\textbf{\ac{rolnn} Training Details.}
We train a geometric \ac{rolnn} composed of a biorthogonal \ac{ae} and the latent geometric \ac{lnn} on $2000$ uniformly-sampled point across the $18$ training trajectories. 
We use $l=\{1,2,3, 4\}$ pairwise biorthogonal encoder and decoder layers of sizes $n_l = \{32, 64, 128, 351\}$ with latent space dimension $n_0=6$. 
For the latent geometric \ac{lnn}, we parametrize the potential energy network $\check V_{\bm\theta_{\check{V}}}$ and the Euclidean parts $g_{\mathbb{R}}$ of the mass-inertia network $\check{\bmM}_{\bm\theta_{\check{T}}}$ and damping network $\check{\bm D}_{\theta_{\check{\mathcal D}}}$ each with $L_{\check{V}}\!=\!L_{\check T, \mathbb R}\!=\!2$ hidden Euclidean layers of $32$ neurons and SoftPlus activation functions. We fix the basepoint of the exponential map layers $g_{\text{Exp}}$ in the mass network $\check{\bmM}_{\bm\theta_{\check{T}}}$ and damping network $\check{\bm D}_{\theta_{\check{\mathcal D}}}$ to the origin $\bm P\!=\!\bm I$. The network weights are initialized the same as for the \ac{rolnn} of the pendulum as detailed in App.~\ref{appendix:datasets-models-pendulum}.

To model the actuation matrix $\bm B_\theta(\bm q)$ as detailed in Sec.~\ref{sec:method_part2}, we employ another fully-connected MLP with $2$ hidden Euclidean layers of $64$ neurons and SoftPlus activation functions. Parameters are initialized as for the other MLP parameters. Note that the network output is a relatively large matrix with $n\times m$ entries. As the underactuated robotic system that we consider remain of reasonable dimension, this is not an issue in practice. In our case, the linear output layer has $n\times m = 1053$ entries and training worked out-of-the-box without additional loss considerations. 

We train the model on $2000$ uniformly-sampled random points from the dataset $\mathcal{D}$ using the loss~\eqref{eq:loss_RO_LNN} with a Runge-Kutta $4$-th order integration over a training horizon of $H_{\train}\!=\!10$ timesteps. We use a learning rate of $1.5\times 10^{-2}$ for the \ac{ae} parameters and $7\times 10^{-4}$ for the \ac{lnn} parameters. We train the model with Riemannian Adam~\citep{becigneul2018riemannianoptimization} until convergence at $4800$ epochs.

\textbf{Regulation Experiments}
We first generate $3$ unseen reference configurations of the monkey. Since the monkey puppet control system is inherently underactuated, we must ensure that the reference configurations are executable and reachable. To do so, we generate and execute new angular velocity control trajectories (distinct from those used to construct the training dataset) following the same procedure as in the training data generation. From the resulting motion sequences, we extract the three goal configurations that are maximally distant from the initial configuration.

The controller gains for the \ac{rolnn}-based control law are set via the diagonal entries of the latent gain matrices $\checkKp\!=\!\mathrm{diag}(0.36, 0.036, 0.03, 0.24, 1.8, 0.024)$ and $\checkKd\!=\!\mathrm{diag}(0.036, 0.0036, 0.003, 0.024, 0.18, 0.0024)$.

Notice that we cannot compare the \ac{rolnn}-based controller with the same baselines as for the augmented pendulum, as (1) the analytical \ac{fom} of the monkey puppet is fully unknown, and (2) a model-free \ac{pd} controller in this underactuated-actuated scenario would require an analytical actuation matrix $\bm B(\bm q)$, which is also unknown for this system.

\subsection{Additional Results}
\label{appendix:results}
This section presents additional experimental results on both the augmented pendulum and the underactuated plush monkey puppet from Sec.~\ref{sec:experiments}.

\subsubsection{Simulated Augmented Pendulum from Sec.~\ref{sec:experiments:pendulum}}\hfill\\
\textbf{Regulation.} 
Fig.~\ref{fig:bd_pend:regulation_pend_dofs} shows the evolution of the $3$ pendulum \acp{dof} for $3$ different reference configurations. As in Sec.~\ref{sec:experiments:pendulum}, we compare the performance of the \ac{rolnn}-based control law with a PD controller on the full system and a linear-intrusive controller. Note that the median error of $10$ of these trajectories was reported in Fig.~\ref{fig:bd_pend:regulation_errors_median_over_time}. Fig.~\ref{fig:bd_pend:regulation_pend_dofs} shows that all controllers stabilize the system at the reference configurations. Importantly, the two model-based controllers feature increased precision compared to the \ac{pd}, which only regulates up to steady state error, see, e.g., the middle and last $q_2$ trajectories. As expected, the linear-intrusive controller converges faster than the \ac{rolnn}-based control law as it reduces the known dynamics of the system to the known pendulum \acp{dof}. The small oscillations in the \ac{rolnn}-based controller arise due to the influence of the mesh \acp{dof} in the learned latent model.
\begin{figure}
    \centering
    \adjustbox{trim=0.6cm 0.3cm 0.5cm 0.2cm}{
            \includesvg[width=.95\linewidth]{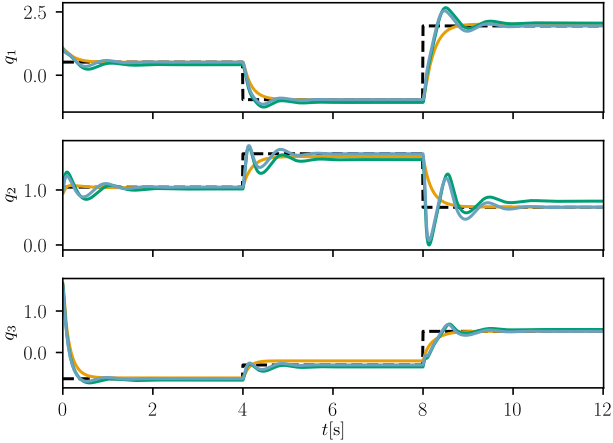}  
        }
    \caption{Regulation trajectories obtained for the $3$ pendulum \acp{dof} for the \ac{rolnn} (\solidblueoneline), linear-intrusive (\solidorangetwoline), and model-free \ac{pd} (\solidgreenthreeline) controllers over $3$ consecutive reference configurations (\dashedblackline).
    }
    \label{fig:bd_pend:regulation_pend_dofs}
\end{figure}

\textbf{Tracking.}  Fig.~\ref{fig:bd_pend:tracking_errors_median_over_time} accompanies Fig.~\ref{fig:bd_pend:tracking_errors_latent_median_over_time} from the main text by displaying the median latent tracking errors $\|\checkbme\|$ over time over the $10$ testing trajectories for the \ac{rolnn} and hybrid \ac{rolnn} controllers. We observe that both controllers achieve small and approximately constant tracking errors. We hypothesize that the hybrid \ac{rolnn} controller leads to a slightly lower latent error as the full-order \ac{pd} term compensates for small nullspace errors, thus reducing cross-couplings in the latent space.
\begin{figure}
    \centering
    \adjustbox{trim=0.5cm 0.3cm 0.5cm 0.2cm}{
            \includesvg[width=.95\linewidth]{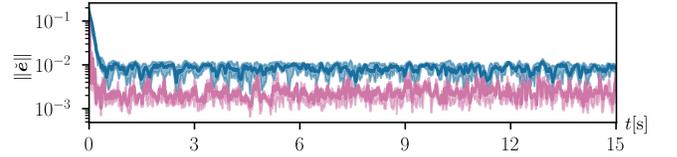}  
        }
    \caption{Latent tracking error (median and quartile) of the \ac{rolnn} (\solidblueoneline) and hybrid \ac{rolnn} (\solidpinkfourline) controllers over $10$ reference trajectories.
    }
    \label{fig:bd_pend:tracking_errors_latent_median_over_time}
    \vspace{-0.2cm}
\end{figure}

Finally, we complement Figs.~\ref{fig:bd_pend:ns_dof_tracking}-left, ~\ref{fig:bd_pend:tracking_errors_median_over_time}, and~\ref{fig:bd_pend:tracking_errors_latent_median_over_time} with Fig.~\ref{fig:bd_pend:tracking_errors_median_over_time_on_man} that shows the average errors $\|\bm e\|$ and $\|\checkbme\|$ obtained by the \ac{rolnn}-based controller in the case where the tracking references are first projected onto the embedded submanifold $\varphi(\mathcal T \mathcal Q)$. We observe that the errors are similar when tracking the original and projected references, showcasing that nullspace effects arise both on and off the submanifold. 
\begin{figure}
    \centering
    \adjustbox{trim=0cm 0.2cm 0cm 0.0cm}{
            \includesvg[width=.95\linewidth]{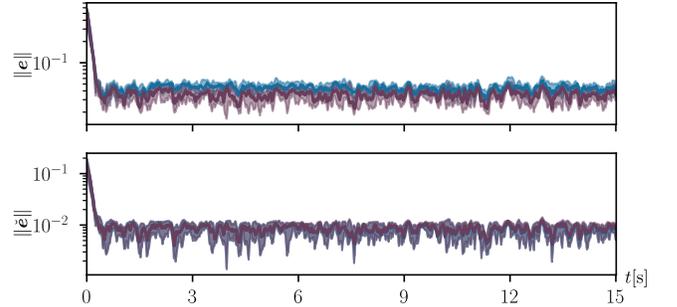}  
        }
    \caption{Tracking error and latent tracking error (median and quartiles) when tracking the original references (\solidblueoneline) and tracking the projected reference $\in \mathcal T \mathcal T \mathcal Q$ (\solidaubergineline) with \ac{rolnn}-based controller.
    }
    \label{fig:bd_pend:tracking_errors_median_over_time_on_man}
\end{figure}

\subsubsection{Underactuated Puppet from Sec.~\ref{sec:experiments:monkey}}
Fig.~\ref{fig:monkey:frames_regulation_3monkeys_version} shows the resulting trajectories of the \ac{rolnn}-based control law on the regulation task, featuring an additional third trajectory compared to Fig~\ref{fig:monkey:frames_regulation} in the main text.
\begin{figure}[]
    \centering
    \includegraphics[width=0.9\linewidth]{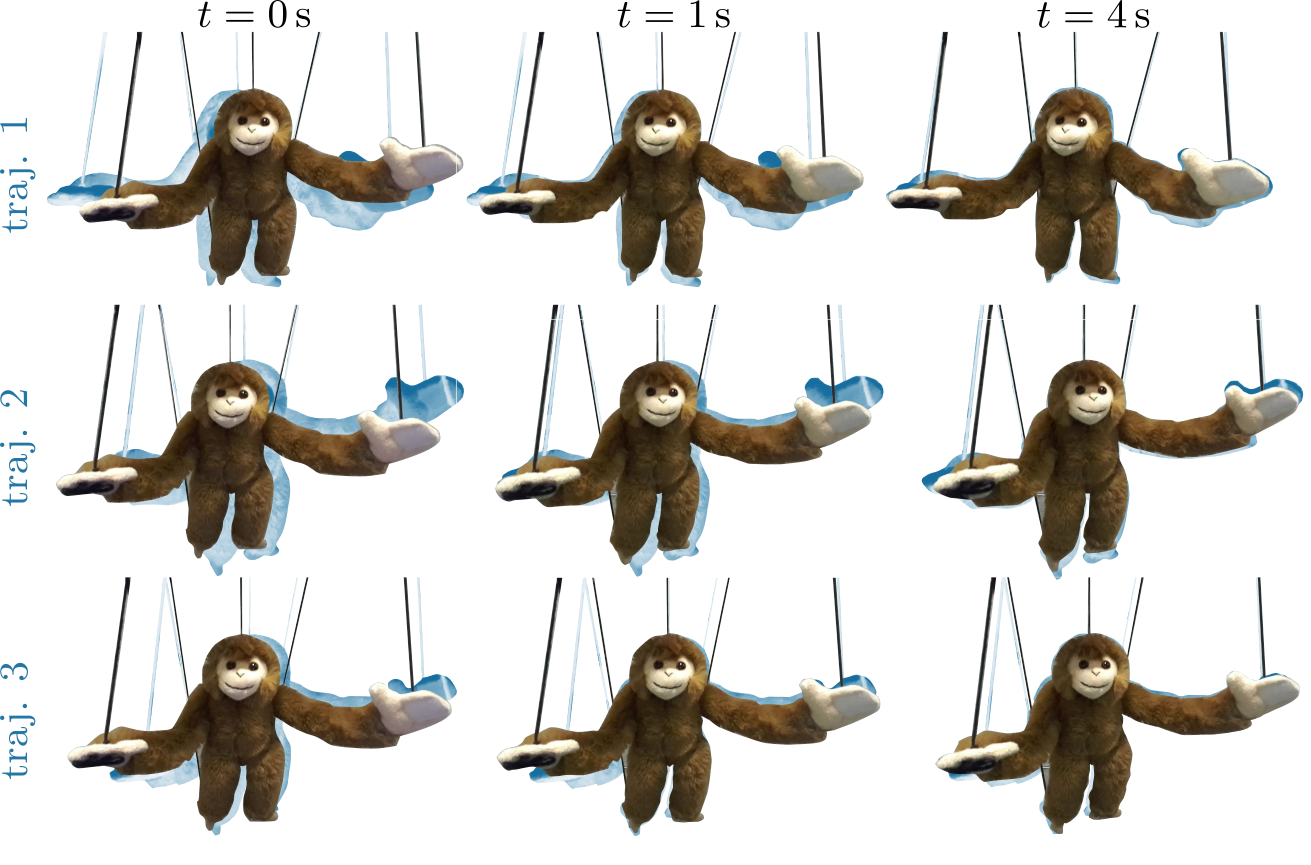}
    \vspace{-0.4cm}
    \caption{Regulated trajectories of a puppet generated by the \ac{rolnn}-based controller for three different reference configurations. 
    }
    \vspace{-0.2cm}
    \label{fig:monkey:frames_regulation_3monkeys_version}
\end{figure}
Fig.~\ref{fig:monkey:control_signal_trajs} shows the the control inputs $\bm{u}$, i.e., desired angular velocity vector of the end-effector, computed by the \ac{rolnn}-based control law for each of the three monkey trajectories of Fig.~\ref{fig:monkey:frames_regulation_3monkeys_version}.
We observe that the control input converge to zero when the target reference is reached.

\begin{figure}
    \centering
    \adjustbox{trim=0.6cm 0.3cm 0.5cm 0.0cm}{
            \includesvg[width=.95\linewidth]{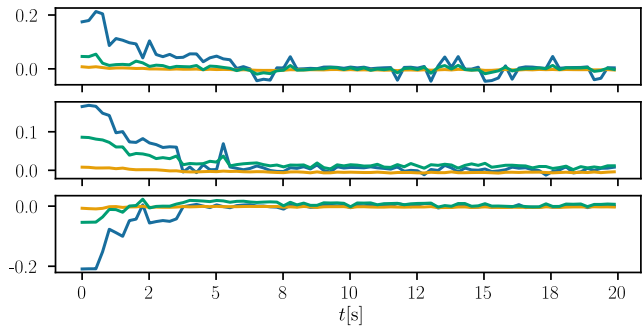}  
        }
    \caption{Commanded angular velocities $r_x$(\solidblueoneline), $r_y$(\solidorangetwoline), $r_z$(\solidgreenthreeline) by the latent \ac{rolnn} based controller on, top to bottom, trajectories 1,2, and 3 from the regulation tasks in Fig.~\ref{fig:monkey:frames_regulation_3monkeys_version}.
    }
    \label{fig:monkey:control_signal_trajs}
\end{figure}

\end{document}